\algnewcommand\INPUT{\item[\textbf{Input:}]}%
\algnewcommand\OUTPUT{\item[\textbf{Output:}]}%
\newcommand\blfootnote[1]{%
  \begingroup
  \renewcommand\thefootnote{}\footnote{#1}%
  \addtocounter{footnote}{-1}%
  \endgroup
}
\def\xfoo#1^#2\relax#3\valign{%
\mathbf{#1}\ifx\valign#2\valign\else^{\mathbf{#2}}\fi}
\def\Ber{{\rm Bern}}
\def \Beta{{\rm Beta}}
\def\dP{P}
\def\dQ{Q}
\newcommand{\etali}{\emph{et al.}\@\xspace}
\def\mset{{Z^M_{1:N}}}
\def\Escr{\mathscr{E}}
\def \Wscr{\mathscr{W}}
\newtheorem{theorem}{Theorem}[section]
\newtheorem{assumption}{Assumption}[section]
\newtheorem{lemma}{Lemma}[section]
\newtheorem{corollary}[theorem]{Corollary}
\newtheorem{definition}{Definition}[section]
\def\bkE{{\rm I\kern-.17em E}}
\def\bk1{{\rm 1\kern-.17em l}}
\def\bkD{{\rm I\kern-.17em D}}
\def\bkR{{\rm I\kern-.17em R}}
\def\bkP{{\rm I\kern-.17em P}}
\def\bkZ{{\bf{Z}}}
\def\bkE{{\rm I\kern-.17em E}}
\def\bk1{{\rm 1\kern-.17em l}}
\def\bkD{{\rm I\kern-.17em D}}
\def\bkR{{\rm I\kern-.17em R}}
\def\bkP{{\rm I\kern-.17em P}}
\newcommand{\pushright}[1]{\ifmeasuring@#1\else\omit\hfill$\displaystyle#1$\fi\ignorespaces}
\newcommand{\pushleft}[1]{\ifmeasuring@#1\else\omit$\displaystyle#1$\hfill\fi\ignorespaces}
\def\bkZ{{\bf{Z}}}
\def\b12{(\beta_1,\beta_2)}
\newcounter{example}
\renewcommand{\theexample}{\thesection.\arabic{example}}
\newcounter{remark}
\renewcommand{\theremark}{\thesection.\arabic{remark}}
\newenvironment{remarkc}[1][]{\refstepcounter{remark}
\noindent{\itshape Remark~\theremark. #1} \rmfamily}{\hspace*{\fill}~$\square$\vspace{0pt}}
\def\Hscr{\mathscr{H}}
\def\Xscr{\mathcal{X}}
\def\Ebb{\mathbb{E}}
\newlength{\noteWidth}
\long\def\notes#1{\ifinner
{\tiny #1}
\else
\marginpar{\parbox[t]{\noteWidth}{\raggedright\tiny #1}}
\fi\typeout{#1}}
 \def\notes#1{\typeout{read notes: #1}} 
\newcommand{\ie}{i.e.\@\xspace} 
\newcommand{\eg}{e.g.\@\xspace} 
\newcommand{\Real}{\ensuremath{\mathbb{R}}}
\def\Ebb{\mathbb{E}}
\def\Pbb{{\mathbb{P}}}
\def\Ibb{{\mathbb{I}}}
\def\exp{\mathop{\hbox{\rm exp}}}
\def\spose#1{\hbox to 0pt{#1\hss}}
\def\text #1{\hbox{\quad#1\quad}}
\def\Escr{\mathcal{E}}
\def\nthinsp{\mskip -2   mu}
\def\superstar{^{\raise 0.5pt\hbox{$\nthinsp *$}}}
\def\SUPERSTAR{^{\raise 0.5pt\hbox{$*$}}}
\def\lamstarT {\lambda^{\raise 0.5pt\hbox{$\nthinsp *$}T}}
\def\Ascr{{\cal A}}
\def\Lscr{{\cal L}}
\def\Pscr{{\cal P}}
\def\Uscr{{\cal U}}
\def\Wscr{{\cal W}}
\def\Zscr{{\cal Z}}
\def\Xscr{{\cal X}}
\def\supp{{\rm supp}}
\def\non{\nonumber}
\let\forallnew\forall
\renewcommand{\forall}{\forallnew\ }
\let\forall\forallnew
		\def\bkE{{\rm I\kern-.17em E}}
		\def\bk1{{\rm 1\kern-.17em l}}
		\def\bkD{{\rm I\kern-.17em D}}
		\def\bkR{{\rm I\kern-.17em R}}
		\def\bkP{{\rm I\kern-.17em P}}
		\def\bkY{{\bf \kern-.17em Y}}
		\def\bkZ{{\bf \kern-.17em Z}}
		\def\bkC{{\bf  \kern-.17em C}}
		\def\bsp{\begin{split}}
		\def\beq{\begin{eqnarray}}
		\def\bal{\begin{align*}}
		\def\bc{\begin{center}}
		\def\be{\begin{enumerate}}
		\def\bi{\begin{itemize}}
		\def\bs{\begin{small}}
		\def\bS{\begin{slide}}
		\def\ec{\end{center}}
		\def\ee{\end{enumerate}}
		\def\ei{\end{itemize}}
		\def\es{\end{small}}
		\def\eS{\end{slide}}
		\def\eeq{\end{eqnarray}}
		\def\eal{\end{align*}}
		\def\esp{\end{split}}
		\def\qed{ \vrule height7.5pt width7.5pt depth0pt}  
	\def\cp2problem#1#2#3#4{\fbox
		 {\begin{tabular*}{0.9\textwidth}
			{@{}l@{\extracolsep{\fill}}l@{\extracolsep{6pt}}l@{\extracolsep{\fill}}c@{}}
				#1 & & $#4 $ 
			\end{tabular*}}}
		\def\bkE{{\rm I\kern-.17em E}}
		\def\bk1{{\rm 1\kern-.17em l}}
		\def\bkD{{\rm I\kern-.17em D}}
		\def\bkR{{\rm I\kern-.17em R}}
		\def\bkP{{\rm I\kern-.17em P}}
		\def\bkZ{{\bf{Z}}}
\newcommand {\beeq}[1]{\begin{equation}\label{#1}}
\newcommand {\eeeq}{\end{equation}}
\newcommand {\bea}{\begin{eqnarray}}
\newcommand {\eea}{\end{eqnarray}}
\def\texitem#1{\par\smallskip\noindent\hangindent 25pt
               \hbox to 25pt {\hss #1 ~}\ignorespaces}
\def\bsp{\begin{split}}
		\def\beq{\begin{eqnarray}}
		\def\bal{\begin{align*}}
		\def\bc{\begin{center}}
		\def\be{\begin{enumerate}}
		\def\bi{\begin{itemize}}
		\def\bs{\begin{small}}
		\def\bS{\begin{slide}}
		\def\ec{\end{center}}
		\def\ee{\end{enumerate}}
		\def\ei{\end{itemize}}
		\def\es{\end{small}}
		\def\eS{\end{slide}}
		\def\eeq{\end{eqnarray}}
		\def\eal{\end{align*}}
		\def\esp{\end{split}}
		\def\qed{ \vrule height7.5pt width7.5pt depth0pt}  
\newenvironment{proof}[1][]{{\noindent \textit{ Proof}: }}{\hfill \qed \vspace{3pt}\\ }
\author{\IEEEauthorblockN{Sharu Theresa Jose, Osvaldo Simeone, Giuseppe Durisi}}
\title{Transfer Meta-Learning: Information-Theoretic Bounds and Information Meta-Risk Minimization}
\begin{document}
\maketitle
\begin{abstract}
\textit{Meta-learning} automatically infers an \textit{inductive bias} by observing data from a number of related tasks. The inductive bias is encoded by hyperparameters that determine aspects of the model class or training algorithm, such as initialization or learning rate. Meta-learning assumes that the learning tasks belong to a \textit{task environment}, and that tasks are drawn from the same task environment both during meta-training and meta-testing. This, however, may not hold true in practice. In this paper, we introduce the problem of \textit{transfer meta-learning}, in which tasks are drawn from a \textit{target task environment} during meta-testing that may differ from the \textit{source task environment} observed during meta-training. Novel information-theoretic upper bounds are obtained on the \textit{transfer meta-generalization gap}, which measures the difference between the meta-training loss, available at the meta-learner, and the average loss on meta-test data from a new, randomly selected, task in the target task environment.
  The first bound, on the average transfer meta-generalization gap, captures the \textit{meta-environment shift} between source and target task environments via the KL divergence between source and target data distributions. The second, PAC-Bayesian bound, and the third, single-draw bound, account for this shift via the log-likelihood ratio between source and target task distributions. Furthermore, two transfer meta-learning solutions are introduced. For the first, termed  \textit{Empirical Meta-Risk Minimization} (EMRM), we derive bounds on the average optimality gap. The second, referred to as \textit{Information Meta-Risk Minimization} (IMRM), is obtained by minimizing the PAC-Bayesian bound. IMRM is shown via experiments to potentially outperform EMRM.
\end{abstract}
\blfootnote{S. T. Jose and O. Simeone are with King's Communications, Learning, and Information Processing (KCLIP) lab at the Department of Engineering of King’s College London, UK (emails: sharu.jose@kcl.ac.uk, osvaldo.simeone@kcl.ac.uk). 
They have received funding from the European Research Council
(ERC) under the European Union’s Horizon 2020 Research and Innovation
Programme (Grant Agreement No. 725731). G. Durisi is with the Department of Electrical Engineering, Chalmers Institute of Technology, Sweden (email: durisi@chalmers.se).}
\begin{IEEEkeywords}
Transfer meta-learning, information-theoretic generalization bounds, PAC-Bayesian bounds, single-draw bounds, information risk minimization
\end{IEEEkeywords}
\section{Introduction}
Any machine learning algorithm makes assumptions on the task of interest, which are collectively referred to as the inductive bias. In parametric machine learning, the inductive bias is encoded in the choice of a model class and of a training algorithm used to identify a model parameter vector based on training data. The inductive bias is fixed a priori, ideally with the help of domain expertise, and it can be refined via validation. As a typical example, an inductive bias may consist of a class of neural networks parameterized by synaptic weights and of an optimization procedure such as stochastic gradient descent (SGD). Hyperparameters including number of layers and SGD learning rate schedule can be selected by optimizing the validation error on an held-out data set.

 Meta-learning or \textit{learning to learn} aims to automatically infer some aspects of the inductive bias based on the observation of data from related tasks \cite{schmidhuber1987evolutionary,thrun1996learning,thrun1998learning}. For example, the choice of an inductive bias---model class and training algorithm---for the problem of classifying images of animals may be based on labelled images of vehicles or faces.  As formalized in \cite{baxter2000model}, meta-learning assumes the presence of a 
\textit{task environment} consisting of related learning tasks. A task environment is defined by a distribution on the set of tasks and by per-task data distributions. A meta-learner observes data sets from a finite number of tasks drawn from the task environment to infer the inductive bias, while its performance is evaluated on  a new, previously unseen, task drawn from the same task environment. 

As discussed, a key assumption in the standard formulation of meta-learning is that the tasks encountered during meta-learning are from the same task environment that generates the new ``meta-test" task on which the performance of the hyperparameter is evaluated. This assumption may not be realistic in some applications \cite{collins2020taskrobust}. For example, a personalized health application may be meta-trained by using data from a population of users that is not fully representative of the distribution of the health profiles expected in a different population on which the application is deployed and meta-tested. 
 In this paper, we introduce the problem of \textit{transfer meta-learning}, wherein the performance of a meta-learner that uses data sets drawn from a \textit{source task environment} is tested on a new task drawn from a generally different \textit{target task environment}. In the proposed formulation, highly  popular, or more frequently observed, tasks during meta-training may have a small probability in the target task environment, while other tasks may have a higher chance of being encountered. 
 
 \begin{figure}[h!]
\centering
\includegraphics[scale=0.5,clip=true, trim = 0in  0in 0in 0in]{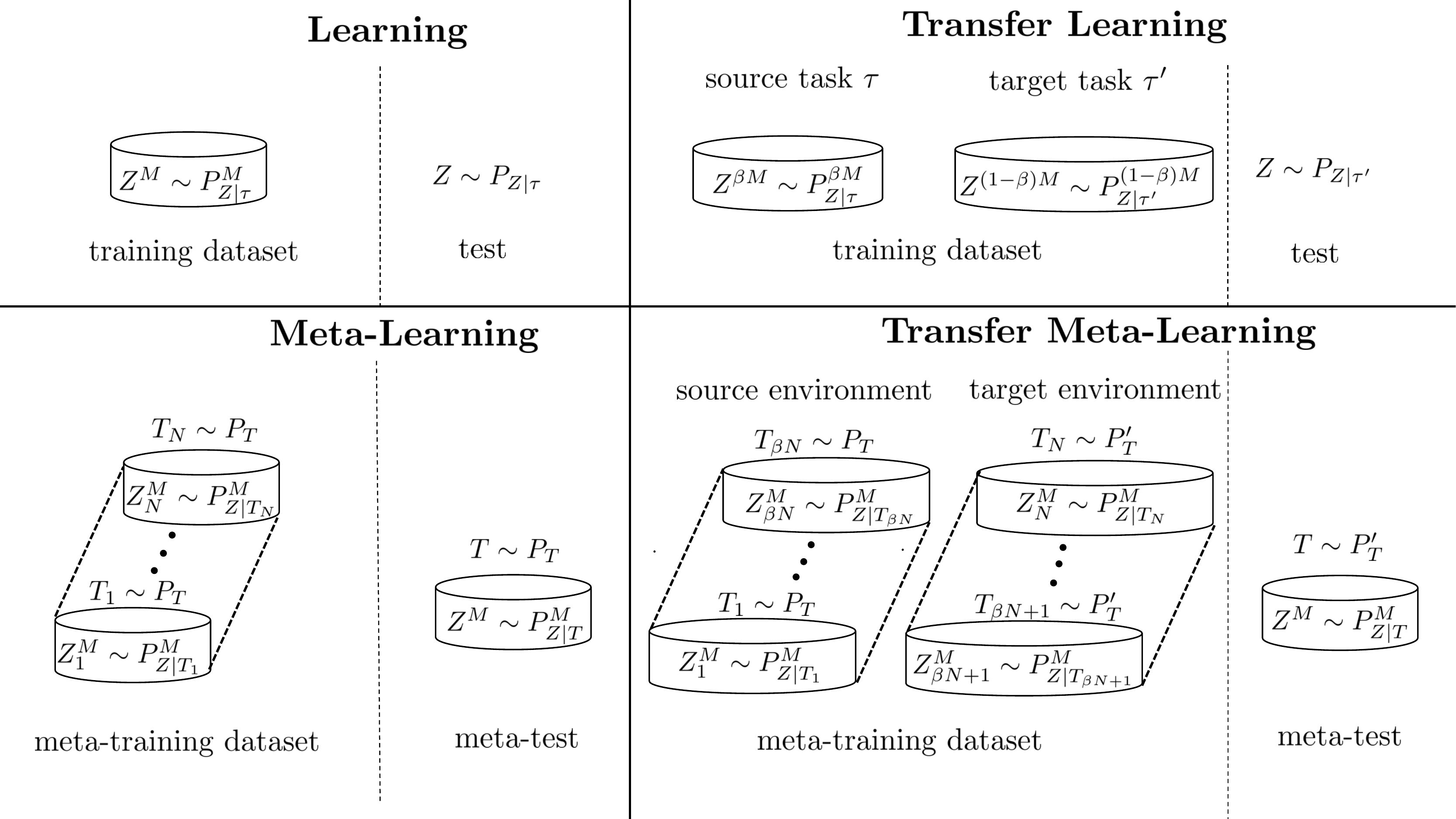} 
\caption{ Illustration of conventional learning, transfer learning, conventional meta-learning and transfer meta-learning with $P_{Z|\tau}$ denoting the distribution $P_{Z|T=\tau}$.}\label{fig:BN_transfer_metalearning}
\vspace{-0.5cm}
\end{figure}
As illustrated in Figure~\ref{fig:BN_transfer_metalearning}, we consider a general formulation of transfer meta-learning where the meta-learner observes a meta-training set of $N$ datasets $Z^M_1,\hdots,Z^M_N$, each of $M$ samples, of which $\beta N$, with $\beta \in (0,1]$, datasets correspond to tasks drawn from the \textit{source task environment} and $(1-\beta)N$ datasets correspond to tasks from the \textit{target task environment}. Under source and target task environments, tasks are drawn according to distinct distributions $P_T$ and $P'_T$, respectively. Based on the meta-training set $\mset=(Z^M_1,\hdots,Z^M_N)$, the meta-learner infers the vector of hyper-parameters $u \in \Uscr$. The hyperparameters $u$ determine the base learning algorithm through a conditional distribution $P_{W|Z^M,U=u}$, that maps a training set $Z^M$ to a model parameter $W$ given $u$. The performance of the inferred hyperparameter $u$ is evaluated in terms of the \textit{transfer meta-generalization loss} $\Lscr'_g(u)$, which is the expected loss over a data set $Z^M \sim P^M_{Z|T}$ sampled from a task $T$ randomly selected from the {target task} distribution $P'_T$. The subscript $g$ of $\Lscr_g'(u)$ indicates that the considered loss is the generalization loss and the superscript $'$ indicates that the generalization loss is evaluated with respect to the target task distribution $P'_T$. This objective function is not available at the meta-learner since the target task distribution $P'_T$ and the per-task distributions $P_{Z|T=\tau}$ for every task $\tau$ are unknown. Instead, the meta-learner can evaluate the empirical performance of the inferred hyperparameter on the meta-training set $\mset$ in terms of the \textit{meta-training loss} $\Lscr_t(u|\mset)$. The subscript $t$ of $\Lscr_t(u|\mset)$ indicates that the loss considered is training loss.

The difference between the transfer meta-generalization loss and the meta-training loss, referred to as the \textit{transfer meta-generalization gap} $\Delta \Lscr'(u|\mset)$, is a key metric to evaluate the generalization performance of the meta-learner. If the transfer meta-generalization gap is small, on average or with high probability, the meta-learner can take the performance on the meta-training set as a reliable measure of accuracy of the inferred hyperparameter in terms of the transfer meta-generalization loss.
In this paper, we first study information-theoretic upper bounds on the transfer meta-generalization gap of three different flavours -- bounds on the average transfer meta-generalization gap, high-probability probably-approximately-correct (PAC)-Bayesian bounds, and  high-probability single-draw bounds-- and, we introduce two transfer meta-learning algorithms based on \textit{Empirical Meta-Risk Minimization} (EMRM) and \textit{Information Meta-Risk Minimization} (IMRM).

The transfer meta-learning setting considered in this paper generalizes conventional transfer learning \cite{ben2007analysis,bickel2007discriminative,blitzer2006domain}, as well as meta-learning (see Figure~\ref{fig:BN_transfer_metalearning}). Specifically, when the source and target task distributions are delta functions centered at source domain task $\tau$ and target domain task $\tau'$ respectively, with $\tau \not =\tau'$, and the hyperparameter $u$ to be inferred coincides with the model parameter, the transfer meta-learning setting reduces to  transfer learning.
 While there exists a rich literature on generalization bounds and algorithms for transfer learning, this work is, to the best of our knowledge, the first one to extend the notion of transfer to meta-learning, to derive information-theoretic upper bounds on the transfer meta-generalization gap, and to propose transfer meta-learning design criteria.
\subsection{Related Work}
Three distinct kinds of bounds on generalization gap, \ie, the difference between training and generalization losses,  have been studied in literature for conventional learning---bounds on average generalization gap, high-probability PAC-Bayesian bounds and high-probability single-draw bounds \cite{hellstrom2020generalization}. For learning algorithms described as a stochastic mapping from the input training set to the model parameter, the average generalization gap evaluates the average difference between the training and generalization losses over the learning algorithm and its input training set. Information-theoretic upper bounds on the average generalization gap have been studied first by  Russo \emph{et al.} \cite{russo2016controlling} and Xu \etali \cite{xu2017information}, and variants of the bounds have been investigated in \cite{bu2019tightening,negrea2019information,steinke2020reasoning}. 
Of particular relevance to our work is the individual sample mutual information (ISMI) based bound \cite{bu2019tightening}, which captures the sensitivity of the learning algorithm to the input training set, and thus the generalization ability, via the mutual information (MI) between the model parameter output of the algorithm and individual data sample of the input training set. These bounds have the distinction that they depend explicitly on the data distribution, the learning algorithm, and  the loss function. Moreover, for deterministic algorithms, the ISMI approach yields a finite upper bound as compared to the MI bounds in \cite{xu2017information}.  The ISMI bound has been extended to obtain bounds on generalization gap for transfer learning in \cite{wu2020information} and for meta-learning in \cite{jose2020information}, where, in the latter, the MI between the hyperparameter and per-task data of the meta-training set captures the sensitivity of the meta-learner to the meta-training data set. The results in this paper can be seen as a natural extension of these lines of work to transfer meta-learning.  

Apart from bounds on average generalization gap, PAC bound on the generalization gap which holds with high probability over the training set have been studied in the literature. 
 Classical PAC bounds for conventional learning assume deterministic learners and employ measures of complexity of the model class like Vapnik-Chervonenkis (VC) dimension \cite{vapnik2015uniform} or Radmacher complexity\cite{koltchinskii2000rademacher} to characterize the generalization gap.  For stochastic learning algorithms, McAllester \cite{mcallester1999pac} developed a PAC-Bayesian upper bound on the average of the generalization gap over the learning algorithm, which holds with probability at least $1-\delta$, with $\delta \in (0,1)$, over the input training set. These bounds employ a reference data-independent `prior' distribution on the model parameter space, and the sensitivity of the learning algorithm to the training set is captured by the Kullback-Leibler (KL) divergence between   the posterior distribution of the learning algorithm and the prior. As such, the PAC-Bayesian bounds are independent of data distributions. We note that the recent line of work in \cite{dziugaite2020role} suggests tightening the PAC-Bayesian bounds by choosing a data-dependent prior distribution evaluated on an heldout data set, which is not part of the training data.
 
   Various refinements of PAC-Bayesian bounds have been studied for conventional learning \cite{seeger2002pac,mcallester2003pac,maurer2004note,alquier2016properties}, and for meta-learning \cite{pentina2014pac,amit2018meta,rothfuss2020pacoh} where for the latter PAC-Bayesian bounds employ a hyper-prior distribution on the space of hyperparameters in addition to the prior. A PAC-Bayesian approach to domain adaptation specialized to linear classifiers has been considered in \cite{germain2017pac}. Furthermore, PAC-Bayesian bounds can be employed to design learning algorithms that ensure generalization through the principle of \textit{Information Risk Minimization} (IRM)\cite{zhang2006information}. For conventional learning, the IRM principle finds a randomized learning algorithm that minimizes the PAC-Bayesian upper bound on the generalization loss, which is given by the empirical training loss regularized by the KL divergence between the posterior learning algorithm and the prior. In Section~\ref{sec:PAC-Bayesian bound}, we resort to the IRM principle and propose a novel learning algorithm  for transfer meta-learning.

 PAC-Bayesian bounds apply to the scenario when a model parameter is drawn every time the learning algorithm is used, and the performance of the learner is evaluated with respect to the average of the generalization gap over these draws. In contrast, high-probability \textit{single-draw bounds} are relevant in scenarios when a model parameter is drawn only once from the stochastic learner, and the goal is to evaluate the generalization performance with respect to this  parameter. Precisely, single-draw probability bounds yield upper bounds on the generalization gap which holds with probability at least $1-\delta,$ with $\delta \in (0,1)$, over the training set and the model parameter. For conventional learning, MI-based single-draw bounds have been obtained in \cite{raginsky2016information,bassily2016algorithmic}, while information-theoretic quantities like R{\'e}nyi divergence, $\alpha$-mutual information, and information leakage have been used in \cite{esposito2019generalization}. To the best of our knowledge, single-draw bounds have not been studied in the context of meta-learning or transfer meta-learning before.
 
In comparison to the generalization bounds for conventional learning, the generalization bounds for transfer learning have to account for the \textit{domain shift} between source domain and target domain. For conventional transfer learning, upper bound on the generalization loss on target domain is obtained in terms of generalization loss on the source domain, together with a divergence measure that captures the domain shift \cite{ben2007analysis, blitzer2008learning,mansour2009domain}. Various distance and divergence measures have been explored in the literature to quantify the domain shift. These measures have the advantage that they can be empirically estimated from finite data sets from source and target domains. For example,
 \cite{ben2007analysis} studies transfer learning for classification tasks and obtains high-probability upper bounds on the target domain generalization loss based on the $\Hscr$-divergence, or $d_{\Ascr}$-distance, in terms of VC dimensions or Radmacher complexity \cite{blitzer2008learning}. The $d_{\Ascr}$ distance has been generalized to the discrepancy distance so as to account for loss functions beyond the detection loss in \cite{mansour2009domain}, and to integral probability metric in \cite{zhang2012generalization}. 
Estimates of these distance measures yield generalization bounds in terms of Radmacher complexity. The $\Hscr$-divergence has been further extended to define the $\Hscr \Delta \Hscr$ divergence in \cite{ben2010theory}.
 While these distance measures are tailored to given loss functions and model class, general statistical divergence measures, such as R{\'e}nyi divergence  and Wasserstein distance  have been considered in \cite{mansour2012multiple,germain2013pac,hoffman2018algorithms} and \cite{redko2017theoretical} respectively. The information-theoretic generalization bound in \cite{wu2020information} captures the domain shift in terms of the KL divergence between source and target domain.
 Our work draws inspiration from this line of research.
 \subsection{Main Contributions}
 Building on the lines of work on transfer learning outlined above, we introduce the problem of transfer meta-learning, in which data from both source and target task environments are available for meta-training. Extending the methods in \cite{ben2010theory,blitzer2008learning,zhang2012generalization} for transfer learning, we measure the 
 meta-training loss as a weighted average of the training losses on source and target task environment data sets. This weighted average includes as special cases methods that use only data from source or target task environments. We refer to the resulting design criterion as EMRM.
We derive information-theoretic upper bounds on the average transfer meta-generalization gap, \ie, on the average difference between transfer meta-generalization loss $\Lscr'_g(u)$ and meta-training loss $\Lscr_t(u|\mset)$. The bounds generalize prior works on
transfer learning \cite{wu2020information} and meta-learning \cite{jose2020information}. We also present novel PAC-Bayesian and single-draw probability bounds. Central to the derivation of these generalization bounds is the information-density based exponential inequality approach of \cite{hellstrom2020generalization}.
We detail the main contributions as follows.
\begin{enumerate}
\item We extend the individual task mutual information (ITMI) based approach of\cite{jose2020information} for meta-learning to obtain novel upper bounds on the average transfer meta-generalization gap that holds for any meta-learner. The resulting bound captures the \textit{meta-environment shift} from source to target task distributions via the KL divergence between source environment data distribution and  target environment data distribution.
\item  We specialize the obtained generalization bound on the average transfer meta-generalization gap to study the performance of the EMRM algorithm that minimizes the empirical average meta-training loss, and obtain a novel upper bound on the average transfer excess meta-risk for EMRM. The average transfer excess meta-risk is the optimality gap between the average transfer meta-generalization loss of EMRM and the optimal transfer meta-generalization loss.
\item We derive novel PAC-Bayesian bounds for transfer meta-learning that quantify the impact of the meta-environment shift through the log-likelihood ratio of the source and target task distributions. We use these bounds to introduce a novel meta-training algorithm, termed IMRM, based on the principle of information risk minimization \cite{zhang2006information}.
\item We obtain new single-draw probability bounds for transfer meta-learning in terms of information densities and a log-likelihood ratio between source and target task distribution. Single-draw bounds captures the performance under a single realization of the hyperparameter drawn by a stochastic meta-learner. Furthermore, the resulting bounds can be specialized to obtain novel single-draw bounds for conventional meta-learning.
\item Finally, we compare the performance of EMRM and IMRM algorithms on a transfer meta-learning example, and show that IMRM can outperform EMRM in terms of transfer meta-generalization loss for sufficiently small number of tasks and per-task data samples. As the number of tasks and per-task data samples grow, IMRM reduces to EMRM.
\end{enumerate}
\subsection{Notation}
Throughout this paper, we use upper case letters, \eg $X$, to denote random variables and lower case letters, \eg $x$ to represent their realizations. We use $\Pscr(\cdot)$ to denote the set of all probability distributions on the argument set or vector space. For a discrete or continuous random variable $X$ taking
values in a set or vector space $\Xscr$, $P_X \in \Pscr(\Xscr)$ denotes its probability distribution, with $P_X(x)$
being the probability mass or density value at $X=x$. We denote as $P^N_X$ the $N$-fold product
distribution induced by $P_X$. The conditional distribution of a random variable $X$ given random
variable $Y$ is similarly defined as $P_{X|Y}$, with $P_{X|Y} (x|y)$ representing the probability mass or
density at $X = x$ conditioned on the event $Y = y$.
We define the Kronecker delta $\delta(x-x_0)=1$ if $x=x_0$ and $\delta(x-x_0)=0$ otherwise, and use $\Ibb_{E}$ to denote the indicator function which equals one when the event $E$ is true and equals zero otherwise.
\section{Problem Formulation}
\subsection{Conventional Transfer Learning}\label{sec:transferlearning}
We review first the conventional transfer learning problem \cite{ben2010theory,blitzer2008learning,zhang2012generalization} in order to define important notation and provide the necessary background for the introduction of transfer meta-learning. We refer to Figure~\ref{fig:BN_transfer_metalearning} for an  illustration comparing conventional learning and transfer learning.  In transfer learning, we are given a data set that consists of: (i) data points from a \textit{source task} $\tau$
drawn from an underlying \textit{unknown} data distribution, $P_{Z|T=\tau} \in \Pscr(\Zscr)$, defined in a subset or vector space $\Zscr$; as well as (ii) data from a target task $\tau'$, drawn from a generally different distribution $P_{Z|T=\tau'} \in \Pscr(\Zscr)$. The goal is to infer a machine learning model that generalizes well on the data from the \textit{target task} $\tau'$. For notational convenience, in the following, we use $P_{Z|\tau}$ to denote source data distribution $P_{Z|T=\tau}$, and $P_{Z|\tau'}$ to denote the target data distribution $P_{Z|T=\tau'}$.

 The learner has  access to a training data set $Z^M=(Z_1,Z_2, \hdots,Z_M)$, which consists of $\beta M$, for some fixed $\beta \in (0,1]$,  independent and identically distributed (i.i.d.) samples $(Z_1,\hdots, Z_{\beta M}) \sim P^{\beta M}_{Z|\tau}$ drawn from the source data distribution $P_{Z|\tau}$, and $(1-\beta)M$ i.i.d. samples $(Z_{\beta M+1},\hdots Z_M)\sim P^{(1-\beta)M}_{Z|\tau'}$ from the target data distribution $P_{Z|\tau'}$. Since the learner instead does not know the distributions $P_{Z|\tau}$ and $P_{Z|\tau'}$,
it uses the data set $Z^M$ to choose a model, or hypothesis, $W$ from the model class $\Wscr$ by using a \textit{randomized} training procedure defined by a conditional distribution $P_{W|Z^M} \in \Pscr(\Wscr)$ as $W \sim P_{W|Z^M}$.
 The conditional distribution $P_{W|Z^M}$ defines a stochastic mapping from the training data set $Z^M$ to the model class $\Wscr$. 

 The performance of a model parameter vector $w \in \Wscr$ on a data sample $z \in \Zscr$ is measured by a loss function $l(w,z)$ where $l:\Wscr \times \Zscr \rightarrow \Real_{+}$. 
The \textit{generalization loss}, or \textit{population loss}, for a model parameter vector $w \in \Wscr$ is evaluated on the target task $\tau'$, and is defined as
\begin{align}
&L_{g}(w|\tau')=\Ebb_{P_{Z|\tau'}}[l(w,Z)], \label{eq:genloss}
\end{align} where the average is taken over a test example $Z$ drawn independently  of $Z^M$ from the target task data distribution $P_{Z|\tau'}$.

 The generalization loss cannot be computed by the learner, given that the data distribution $P_{Z|\tau'}$ is unknown. A typical solution is for the learner to minimize instead the \textit{weighted average training loss} on the data set $Z^M$, which is defined as the empirical average
\begin{align}
 L_{t}(w|Z^M)=\frac{\alpha}{\beta M}\sum_{i=1}^{\beta M}l(w,Z_i)+\frac{1-\alpha}{(1-\beta) M}\sum_{i=\beta M+1}^{M}l(w,Z_i), \label{eq:trainingloss}
\end{align}
where $\alpha\in [0,1]$ is a hyperparameter \cite{ben2010theory}. Note that this formulation assumes that the learner knows which training data comes from the source task and which are from the target task. We distinguish the generalization loss and the training loss via the subscripts $g$ and $t$ of $L_g(w|\tau')$ and $L_t(w|Z^M)$ respectively.
The difference between generalization loss \eqref{eq:genloss} and training loss \eqref{eq:trainingloss}, known as \textit{transfer generalization gap}, is a key metric that relates to the performance of the learner. This is because a small transfer generalization gap ensures that the training loss \eqref{eq:trainingloss} is a reliable estimate of the generalization loss \eqref{eq:genloss}. An information theoretic study of the transfer generalization gap and of the excess risk gap of a learner that minimizes \eqref{eq:trainingloss} was presented in \cite{wu2020information}.
\subsection{ Meta-Learning}\label{sec:metalearning}
We now review the meta-learning setting \cite{finn2017model}. To start, let us fix a class of \textit{within-task base learners} $P_{W|Z^M,U=u}$ mapping a data set $Z^M$ to a model parameter vector $W$, where each base learner is identified by a hyperparameter $u \in \Uscr$.  Meta-learning aims to automatically infer the hyperparameter $u$ using data from related tasks, thereby ``learning to learn''. Towards this goal, a \textit{meta-learner} observes data from tasks drawn from a \textit{task environment}. A task environment is defined by a \textit{ task distribution} $P_T$ supported over the set of tasks $\mathcal{T}$, as well as by a  per-task data distribution $P_{Z|T=\tau}$ for each $\tau \in \mathcal{T}$. Using the meta-training data drawn from a randomly selected subset of tasks, the meta-learner infers the hyperparameter $u \in \Uscr$ with the goal of ensuring that the base learner $P_{W|Z^M,u}$ generalize well on a new, previously unobserved \textit{meta-test task} $T \sim P_T$ drawn independently from the same task environment.


To elaborate, as seen in Figure~\ref{fig:BN_transfer_metalearning}, the meta-training data set consists of $N$ data sets $\mset=(Z^M_1,\hdots,Z^M_N)$, where each $i$th sub-data set $Z^M_i$ is generated independently by first drawing a task $T_i \sim P_T$ and then generating a task specific data set $Z^M_i \sim P^M_{Z|T=T_i}$. The meta-learner does not know the distributions $P_T$ and $\{P_{Z|T=\tau}\}_{\tau \in \mathcal{T}}$. We consider a \textit{randomized} meta-learner \cite{jose2020information} \begin{align}
U \sim P_{U|\mset}, \label{eq:meta_learner}
\end{align}
where $P_{U|\mset}$ is a stochastic mapping from the meta-training set $\mset$ to the space $\Uscr$ of hyperparameters.
As discussed, for a given hyperparameter $U=u$ and given a data set $Z^M$, the \textit{within-task base learner} $P_{W|Z^M,u} \in \Pscr(\Wscr)$ maps the per-task training subset $Z^M$ to random model parameter $W \sim P_{W|Z^M,u}$. The average per-task test loss for a given task $T$ is obtained as
\begin{align}
L_g(u|T,Z^M)=\Ebb_{P_{W|Z^M,u}}[L_g(W|T)],\label{eq:avgtestloss}
\end{align} 
where the per-task generalization loss $L_g(w|T)$ is defined in \eqref{eq:genloss}.
The goal of meta-learning is to minimize the \textit{meta-generalization loss} defined as
\begin{align}
\Lscr_g(u)=\Ebb_{P_{T}P^M_{Z|T}}[ L_g(u|T,Z^M)].\label{eq:meta_testloss}
\end{align} 
 The meta-generalization loss is averaged over new, meta-test tasks $T \sim P_T$ drawn from the task environment $P_T$ and on the corresponding 
 training data $Z^M$ drawn i.i.d from the data distribution $P^M_{Z|T}$.

The meta-generalization loss cannot be computed by the meta-learner, given that the task distribution $P_T$ and per-task data distribution $P_{Z|T}$ are unknown. The meta-learner relies instead on the \textit{empirical meta-training loss} 
\begin{align}
\Lscr_t(u|\mset)=\frac{1}{N} \sum_{i=1}^N L_t(u|Z^M_i), \label{eq:metatrainingloss}
\end{align}
where $L_t(u|Z^M_i)$ is the average per-task training loss,
\begin{align}
L_t(u|Z^M_i)=\Ebb_{P_{W|Z^M_i,u}}[L_t(W|Z^M_i)],
\end{align} with $L_t(w|Z^M)$ defined  in \eqref{eq:trainingloss} (with $\alpha=\beta=1$).
The difference between the meta-generalization loss \eqref{eq:meta_testloss} and meta-training loss \eqref{eq:metatrainingloss} is known as the \textit{meta-generalization gap}, and is a measure of performance of the meta-learner. 
\subsection{Transfer Meta-Learning}
\begin{figure}[h!]
\centering
\includegraphics[scale=0.35,clip=true, trim = 0in  0.3in 0in 0.2in]{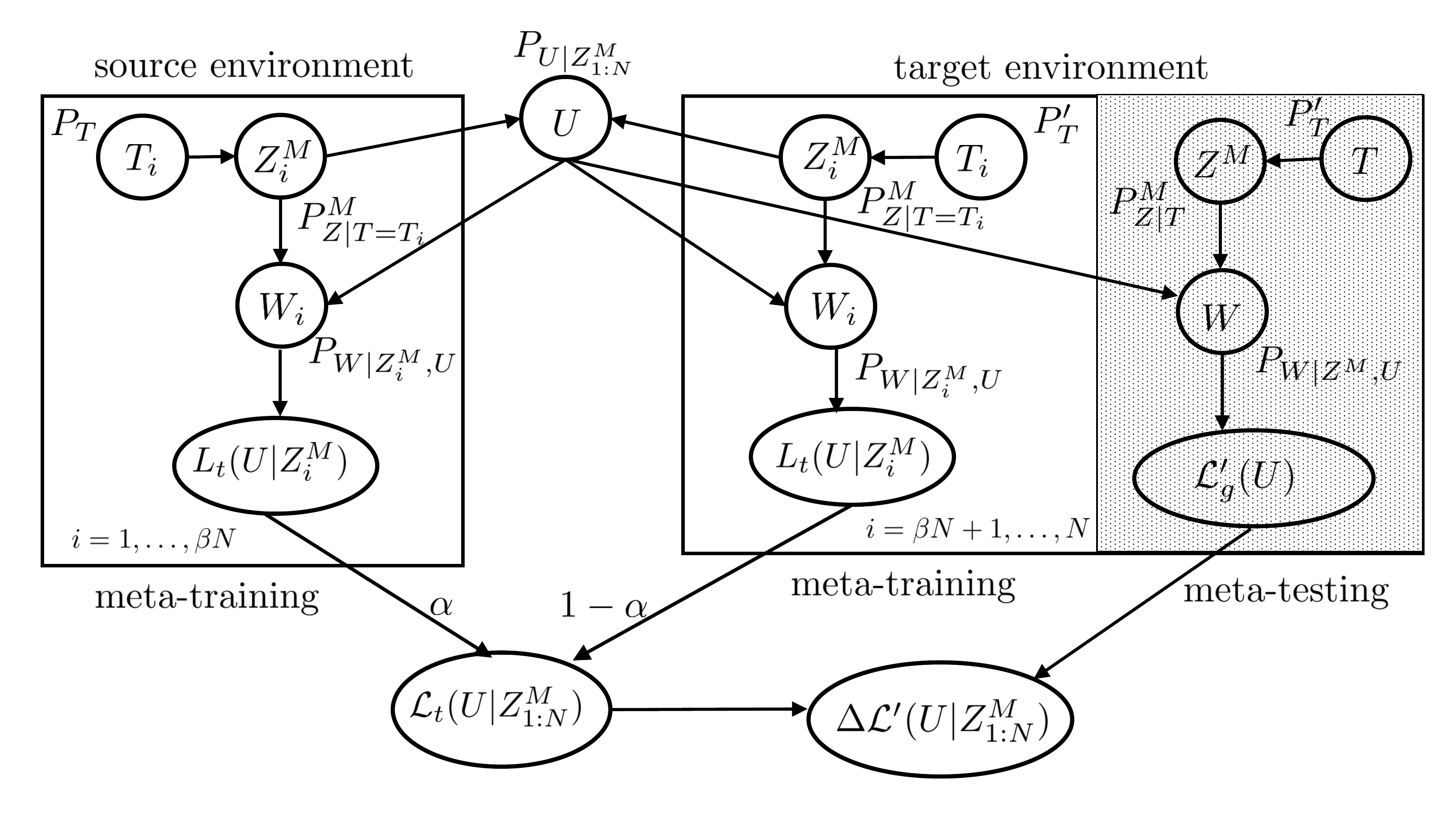} 
\caption{A Bayesian network representation of the variables involved in the definition of transfer meta-learning. }\label{fig:BN_transfer_metalearning_true}
\vspace{-0.5cm}
\end{figure}
In this section, we introduce the problem of transfer meta-learning. As we will explain, it generalizes both transfer learning and meta-learning. In transfer meta-learning, as seen in Figure~\ref{fig:BN_transfer_metalearning_true}, a meta-learner observes meta-training data from two different environments: $(i)$ a \textit{source task environment} which is defined by a \textit{source task distribution} $P_T \in \Pscr(\mathcal{T})$ and a per-task data distribution $P_{Z|T}$; and $(ii)$ a \textit{target task environment} which is defined by \textit{target task distribution} $P'_T \in \Pscr(\mathcal{T})$ and per-task data distribution $P_{Z|T}$. For a given family of per-task base learner $P_{W|Z^M,u}$, 
the goal of transfer meta-learning is to infer a hyperparameter $u \in \Uscr$ from the meta-training data such that the base learner $P_{W|Z^M,u}$ generalize well to a new task $T \sim P'_T$ drawn independently from the target task distribution $P'_T$.

The source and target task distributions $P_T$ and $P'_T$ model the likelihood of observing a given set of tasks during meta-training and meta-testing, respectively.  
 Highly ``popular'', or more frequently observed, tasks in the source task environment may have a smaller chance of being observed, or they may even not appear, in the target task environment, while new tasks may only be encountered during meta-testing. For example, a personalized health application may be meta-trained by using data from a population of users that is not fully representative of the distribution of the health profiles expected in a different population on which the application is deployed and meta-tested.

The meta-training data set consists of $N$ data sets $\mset=(Z^M_1,\hdots,Z^M_N), $ where $(Z^M_1,\hdots, Z^M_{\beta N}) \triangleq Z^M_{1:\beta N}$, for some fixed $\beta \in (0,1]$, constitutes the source environment data set, with each $i$th sub-data set $Z^M_i$ being generated independently by first drawing a task $T_i \sim P_{T}$ from the source task distribution $P_T$ and then a task-specific data set $Z^M_i \sim P^M_{Z|T=T_i}$. The sub-data sets $(Z^M_{\beta N+1},\hdots, Z^M_N) \triangleq Z^M_{\beta N+1:N}$ belong to the target environment with each $i$th data set generated independently by first drawing a task $T_i \sim P'_T$ and then task specific data set $Z^M_i \sim P^M_{Z|T=T_i}$. All distributions $P_T,P'_T$ and $\{P_{Z|T=\tau}\}_{\tau \in \mathcal{T}}$ are unknown to the meta-learner.
 Note that $\beta=1$ corresponds to the extreme scenario in which only data from source task environment is available for meta-training.
Considering a randomized meta-learner $U \sim P_{U|\mset}\in \Pscr(\Uscr)$ as in \eqref{eq:meta_learner},
the goal of the meta-learner is to minimize the 
 \textit{transfer meta-generalization loss}
\begin{align}
\Lscr_{g}'(u)=\Ebb_{P'_{T}P^M_{Z|T}} \bigl[L_g(u|Z^M,T) \bigr]
 \label{eq:tfr_metatestloss},
\end{align} evaluated on a new meta-test task $T\sim P'_T$ drawn from the target task distribution $P'_T$ and on the corresponding training data $Z^M$ drawn i.i.d. from the data distribution $P_{Z|T}$. 

In analogy with the weighted average training loss \eqref{eq:trainingloss} used for transfer learning, we propose that the meta-learner aims at minimizing the \textit{weighted average meta-training loss} on the meta-training set $\mset$, which is defined as
\begin{align}
\Lscr_t(u|\mset)= \frac{\alpha}{\beta N} \sum_{i=1}^{\beta N}L_t(u|Z^M_i) +\frac{1-\alpha}{(1-\beta)N}\sum_{i=\beta N+1}^N L_t(u|Z^M_i), \label{eq:tfr_metatrainingloss}
\end{align}
for some hyper-hyperparameter $\alpha \in [0,1]$. We note that this formulation assumes that the meta-learner knows which data comes from the source task environment and which are from the target task environment. We distinguish the transfer meta-generalization loss and the meta-training loss via the subscripts $g$, $t$ of $\Lscr'_g(u)$ and $\Lscr_t(u|\mset)$ respectively, with the superscript $'$ of $\Lscr_g'(u)$ denoting that the generalization loss is evaluated with respect to the target task distribution $P'_T$.
The meta-training loss \eqref{eq:tfr_metatrainingloss} can be computed by the meta-learner based on the meta-training data $\mset$ and it can be used as a criterion to select the hyperparameter $u$ (for a fixed $\alpha$). We refer to the meta-training algorithm that outputs the hyperparameter that minimizes \eqref{eq:tfr_metatrainingloss} as Empirical Meta-Risk Minimization (EMRM). Note that EMRM is deterministic with $P_{U|\mset}=\delta(U-U^{\rm EMRM}(\mset))$ where
\begin{align}
U^{\rm EMRM}(\mset)=\arg \min_{u \in \Uscr} \Lscr_t(u|\mset). \label{eq:EMRM}
\end{align} Here, and hence forth, we take arg min to output any one of the optimal solutions of the problem at hand and we assume that the set of optimal solutions is not empty. In the following sections, we also use loss functions with double subscript. For example, $\Lscr'_{g,t}(u)=\Ebb_{P'_T P^M_{Z|T}}[L_t(u|Z^M)]$, defined in \eqref{eq:decomposition-1}, with subscripts $g,t$ denote that it accounts for the generalization loss (`$g$') at the environment level (with average over $T\sim P'_T$ and $Z^M$), and the empirical training loss (`$t$') at the task level ($L_t(u|Z^M)$).
We conclude this section with the following remark.\\
\begin{remarkc}\label{rem:1}
The transfer meta-learning setting introduced here generalizes conventional learning, transfer learning and meta-learning:
\begin{enumerate}
\item When $\beta=1$, only data from source task environment is available for meta-training. If, in addition, source and target task distributions are equal, \ie, if $P_T=P'_T$, we recover the conventional meta-generalization problem reviewed in Section~\ref{sec:metalearning}.
\item Consider now the special case where source and target task distributions are concentrated around two specific tasks $\tau$ and $\tau'$ respectively, that is, we have $P_T=\delta(T-\tau)$ and $P'_T=\delta(T-\tau')$ for some $\tau,\tau' \in \mathcal{T}$.  With $N=2$, the meta-training set $\mset=(Z^{\beta NM}_{\tau},Z^{(1-\beta)NM}_{\tau'})$ with $Z^{\beta NM}_{\tau} \sim P^{\beta NM}_{Z|T=\tau}$ and $Z^{(1-\beta)NM}_{\tau'}\sim P^{(1-\beta)NM}_{Z|T=\tau'}$ contains samples that are generated i.i.d. from the source data distribution $P_{Z|T=\tau}$ and target data distribution $P_{Z|T=\tau'}$.  Assume that the base learner neglects data from the task to output always the hyperparameter $U$, \ie, $P_{W|Z^M,U}=\delta(W-U)$. Upon fixing $W=U$, we then have the meta-learner $P_{U|\mset}=P_{W|\mset}$. With these choices, the problem of transfer meta-learning  reduces to the conventional transfer learning reviewed in Section~\ref{sec:transferlearning} by mapping the transfer meta-generalization loss $\Lscr_g'(u)$ to the generalization loss $L_g(w|\tau')=L_g(u|\tau')$ and the meta-training loss $\Lscr_t(u|\mset)$ to the training loss $L_t(w|Z^{NM})=L_t(u|Z^{NM})$.
\end{enumerate}
\end{remarkc}\\

\section{Information-Theoretic Analysis of Empirical Meta-Risk Minimization }
In this section, we focus on the information-theoretic analysis of  empirical meta-risk minimization (EMRM), which is defined by the optimization \eqref{eq:EMRM}. To this end, we will first study bounds on the average transfer meta-generalization gap for \textit{any} meta-learner $P_{U|\mset}$, where the average is taken with respect to $P_{\mset}P_{U|\mset}$. Since our goal is to specialize the derived bound to a deterministic algorithm like EMRM, we obtain individual task based bounds \cite{jose2020information}, which yield non-vacuous bounds for deterministic mappings from the space of $\mset$ to $\Uscr$. We then apply the results to analyze the average transfer excess meta-risk for EMRM. We refer to Section~\ref{sec:PAC-Bayesian bound} for PAC-Bayesian bounds and Section~\ref{sec:singledraw} for single-draw bounds on transfer meta-generalization gap. We start with a formal definition of the performance criteria of interest.

 The \textit{transfer meta-generalization gap} is the difference between the transfer meta-generalization loss \eqref{eq:tfr_metatestloss} and the meta-training loss \eqref{eq:tfr_metatrainingloss}.  For any given hyperparameter $u \in \Uscr$, it is defined as
\begin{align}
\Delta \Lscr'(u|\mset)=\Lscr'_g(u)-\Lscr_t(u|\mset)\label{eq:tfr_metagengap}.
\end{align} For a general stochastic meta-learner $P_{U|\mset}$, the \textit{average transfer meta-generalization gap} is obtained as
\begin{align}
\Ebb_{P_{\mset}P_{U|\mset}}[\Delta \Lscr'(U|\mset)]\label{eq:avgtfr_metagengap}
\end{align}
with the expectation taken over the meta-training data set $\mset$ and  hyperparameter $U \sim P_{U|\mset}$. Note that $P_{\mset}$ is the marginal of the product distribution $ \prod_{i=1}^{\beta N}P_{T_i}P^M_{Z|T=T_i} \prod_{i=\beta N+1}^N P'_{T_i}P^M_{Z|T=T_i}$, as described in the previous section. 
The average transfer meta-generalization gap \eqref{eq:avgtfr_metagengap} quantifies how close the meta-training loss is  to the transfer meta-generalization loss, which is the desired, but unknown, meta-learning criterion. If the transfer meta-generalization gap is sufficiently small, the meta-training loss can be taken as a reliable measure of the transfer meta-generalization loss. In this case, one can expect EMRM \eqref{eq:EMRM}, which relies on the minimum of the weighted meta-training loss $\Lscr_t(u|\mset)$, to perform well.

The \textit{average transfer excess meta-risk} evaluates the performance of a meta-training algorithm with respect to the optimal hyperparameter $u^{*}$ that minimizes the transfer meta-generalization loss \eqref{eq:tfr_metatestloss}. For a fixed class of base learners $P_{W|Z^M,u}$, the optimal hyperparameter minimizing \eqref{eq:tfr_metatestloss} is given by
\begin{align}
u^{*}=\arg \min_{u \in \Uscr} \Lscr'_g(u).
\end{align}
The \textit{average transfer excess meta-risk of the EMRM algorithm} is hence computed as
\begin{align}
\Ebb_{P_{\mset}}[\Lscr'_g(U^{\rm EMRM}(\mset))-\Lscr'_g(u^{*})]. \label{eq:transfer_metaexcessrisk}
\end{align}

In the next subsection, we present the technical assumptions underlying the analysis, as well as some exponential inequalities that will play a central role in the derivations. In Section~\ref{sec:EMRM_avgtfrgap}, we obtain upper bounds on the average transfer meta-generalization gap \eqref{eq:avgtfr_metagengap} for any meta-learner, while Section~\ref{sec:EMRM_excessrisk} focuses on bounding the average transfer excess meta-risk \eqref{eq:transfer_metaexcessrisk} for EMRM.
\subsection{Assumptions and Exponential Inequalities}
We start by defining $\sigma^2$-sub-Gaussian random variables.
\begin{definition}
A random variable $X \sim P_X$ with finite mean, \ie, $\Ebb_{P_X}[X]<\infty$, is said to be $\sigma^2$-sub-Gaussian if its moment generating function satisfies
\begin{align}
\Ebb_{P_X}[\exp(\lambda(X-\Ebb_{P_X}[X]))]\leq \exp \biggl(\frac{\lambda^2 \sigma^2}{2} \biggr), \quad \mbox{for all} \hspace{0.2cm} \lambda \in \Real.
\end{align} 
\end{definition}
Moreover,
if $X_i$, $i=1,\hdots, n$ are independent $\sigma^2$-sub-Gaussian random variables, then the average $\sum_{i=1}^n X_i/n$ is $\sigma^2/n$-sub-Gaussian.

Throughout, we denote as $P_U$ the marginal of the joint distribution $P_{\mset}P_{U|\mset}$ induced by the meta-learner. We also use $P_{Z^M}$ to denote the marginal of the joint distribution $P_TP^M_{Z|T}$ of the data under the source environment and, in a similar manner, $P'_{Z^M}$ to denote the marginal of the joint distribution $P'_TP^M_{Z|T}$ of the data under the target environment.
In the rest of this section, we make the following assumptions on the loss function.
\begin{assumption}\label{assum:1}
The environment distributions $P_T, P'_T$, and $\{P_{Z|T=\tau}\}_{\tau \in \mathcal{T}}$, the base learner $P_{W|Z^M,U}$, and the meta-learner $P_{U|\mset}$ satisfy the following assumptions.
\begin{enumerate}
\item [$(a)$] For each task $\tau \in \mathcal{T}$, the loss function $l(W,Z)$ is $\delta_{\tau}^2$-sub-Gaussian when $(W,Z) \sim P_{W|T=\tau} P_{Z|T=\tau}$, where $P_{W|T=\tau}$ is the marginal of the model parameter trained for task $\tau$, which is obtained by marginalizing the joint distribution $P_{U}P^M_{Z|T=\tau}P_{W|Z^M,U}$;
\item[$(b)$]The per-task average training loss $L_t(U|Z^M)$ is $\sigma^2$-sub-Gaussian when $(U,Z^M) \sim P_U P'_{Z^M}$.
\end{enumerate}
\end{assumption}

We note that the sub-Gaussianity properties in Assumption~\ref{assum:1}$(a)$ and Assumption~\ref{assum:1}$(b)$ are with respect to different distributions. As such, satisfying Assumption~\ref{assum:1}$(a)$ does not guarantee sub-Gaussianity in the sense of Assumption~\ref{assum:1}$(b)$. However, if the loss function is bounded, \ie, $l(\cdot,\cdot) \in [a,b]$ for $0\leq a\leq b <\infty$, it can be verified that both of these assumptions hold with $\delta_{\tau}^2= (b-a)^2/4=\sigma^2$ for any $\tau \in \mathcal{T}$.
\begin{definition}
The information density between two discrete or continuous random variables $(A,B) \sim P_{A,B}$ with well-defined joint probability mass or density function $P_{A,B}(a,b)$, and marginals $P_A(a)$ and $P_B(b)$ is the random variable
\begin{align}
\imath(A,B)=\log \frac{ \dP_{A,B}(A,B)}{\dP_A(A) P_B(B)}= \log \frac{\dP_{A|B}(A|B)}{\dP_A(A)}.
\end{align}
\end{definition}
The information density quantifies the evidence for the hypothesis that $A$ is produced from $B$ via the stochastic mechanism $P_{A|B}$ rather than being drawn from the marginal $P_A$. The average of the information density is given by the mutual information (MI)
$I(A;B)=\Ebb_{P_{A,B}}[\imath(A,B)]$.

In the analysis, the information densities $\imath(U,Z^M_i)$ for $i=1,\hdots, N$, and $\imath(W,Z_j|T=\tau)$ for $j=1,\hdots,M$ will play a key role. The information density $\imath(U,Z^M_i)$ is defined for random variables $(U,Z^M_i) \sim P_{U,Z^M_i}$, where $P_{U,Z^M_i}$ is obtained by marginalizing the joint distribution $P_{\mset}P_{U|\mset}$ over the subsets $Z^M_j$ of the meta-training set $\mset$ for all $j \neq i, j=1,\hdots,N$. Similarly, the information density $\imath(W,Z_j|T=\tau)$ is defined for random variables $(W,Z_j) \sim P_{W,Z_j|T=\tau}$, where $P_{W,Z_j|T=\tau}$ is obtained by marginalizing the joint distribution $P_U P_{W|Z^M,U}P^M_{Z|T=\tau}$ over $U$ and over data samples $Z_k$ of the training set $Z^M$ for all $k \neq j$ with $ k=1,\hdots,M$.
The information density $\imath(U,Z^M_i)$ quantifies the evidence for the hyperparameter $U$ to be generated by the meta-learner $P_{U|\mset}$ based on meta-training data that includes the data set $Z^M_i$.
Similarly, the evidence for the model parameter $W$ to be produced by the base learner $P_{W|Z^M}$ (which is the marginal of the joint distribution $P_U P_{W|Z^M,U}$) based on the training set for task $\tau$ that includes the data sample $Z_j$ is captured by the information density $\imath(W,Z_j|T=\tau)$.
All these measures can also be interpreted as the sensitivity of hyperparameter and model parameter to per-task data set $Z^M_i$ (from source or target environment) and data sample $Z_j$ within per-task data set, respectively.
Moreover, the average of these information densities yield the following MI terms
\begin{align}
I(U;Z^M_i)&=
\begin{cases}
\Ebb_{P_{Z^M_i}P_{U|Z^M_i}}[\imath(U,Z^M_i)] & \mbox{for} \hspace{0.2cm} i=1,\hdots, \beta N,\\
\Ebb_{P'_{Z^M_i}P_{U|Z^M_i}}[\imath(U,Z^M_i)] & \mbox{for}  \hspace{0.2cm} i=\beta N+1,\hdots, N,
\end{cases}\non\\
I(W;Z_j|T=\tau)&=\Ebb_{P_{W,Z_j|T=\tau}}[\imath(W,Z_j|T=\tau)]  \hspace{0.2cm} \mbox{for} \hspace{0.2cm} j=1,\hdots,M. \label{eq:MIterms}
\end{align}
 
 \begin{assumption}\label{assum:1a}
The source environment data distribution satisfies $P_{Z^M}(z^M)=0$ almost surely for all $z^M=(z_1,\hdots,z_M) \in \Zscr^M$ such that $P'_{Z^M}(z^M)=0$.
 \end{assumption}


We are now ready to present two important inequalities that will underlie the analysis in the rest of the section. We note that a similar unified approach was presented in \cite{hellstrom2020generalization} to study generalization in conventional learning, and our methodology is inspired by this work. The proofs for these inequalities can be found in Appendix~\ref{app:expinequality_avgtfrgap_ITMI}.
\begin{lemma}\label{lem:expinequality_avgtfrgap_ITMI}
Under Assumption~\ref{assum:1}$(a)$, the following inequality holds 
\begin{align}
\Ebb_{P_{W,Z_j|T=\tau}}\biggl[\exp \biggl( \lambda( l(W,Z_j)- &\Ebb_{P_{W|T=\tau}P_{Z_j|T=\tau}}[l(W,Z_j)] -\frac{\lambda^2\delta_{\tau}^2}{2} -\imath(W,Z_j|T=\tau) \biggr) \biggr] \leq 1, 
\label{eq:expinequality_avgtfr_task_ITMI}
\end{align} 
 for all $j=1,\hdots,M$, $\lambda \in \Real$ and for each task $\tau \in \mathcal{T}$. 
\end{lemma}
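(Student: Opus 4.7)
The plan is to use the standard change-of-measure trick (often called the Donsker--Varadhan / compression argument) combined with the sub-Gaussian assumption. The exponential inequality has the distinctive form of ``sub-Gaussian MGF bound plus information density,'' which is tailor-made for absorbing the information density via a likelihood ratio.

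First, I would rewrite the $-\imath(W,Z_j|T=\tau)$ term inside the exponential as a log-likelihood ratio:
\begin{equation*}
\exp\bigl(-\imath(W,Z_j|T=\tau)\bigr) = \frac{P_{W|T=\tau}(W)\, P_{Z_j|T=\tau}(Z_j)}{P_{W,Z_j|T=\tau}(W,Z_j)}.
\end{equation*}
Plugging this into the expectation on the left-hand side of \eqref{eq:expinequality_avgtfr_task_ITMI} and using this factor as a Radon--Nikodym derivative converts the expectation under $P_{W,Z_j|T=\tau}$ into one under the product of marginals $P_{W|T=\tau}P_{Z_j|T=\tau}$. The only subtle point is that $P_{Z_j|T=\tau}$ is the marginal on the $j$th sample, which coincides with $P_{Z|T=\tau}$ since the per-task samples are i.i.d., and that $P_{W|T=\tau}$ is the marginal of $W$ described explicitly in Assumption~\ref{assum:1}$(a)$.

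After the change of measure, the expression collapses to
\begin{equation*}
\mathbb{E}_{P_{W|T=\tau}P_{Z_j|T=\tau}}\!\left[\exp\!\left(\lambda\bigl(l(W,Z_j)-\mathbb{E}_{P_{W|T=\tau}P_{Z_j|T=\tau}}[l(W,Z_j)]\bigr) - \tfrac{\lambda^2\delta_\tau^2}{2}\right)\right],
\end{equation*}
and the final step is to invoke Assumption~\ref{assum:1}$(a)$: since $l(W,Z)$ is $\delta_\tau^2$-sub-Gaussian under $P_{W|T=\tau}P_{Z|T=\tau}$, its centered MGF is upper-bounded by $\exp(\lambda^2\delta_\tau^2/2)$, which is exactly compensated by the $-\lambda^2\delta_\tau^2/2$ term in the exponent. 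The resulting bound is $1$, yielding the lemma.

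The main obstacle, as far as there is one, is bookkeeping the marginals correctly. In particular, one must be careful that $P_{W|T=\tau}$ obtained by marginalizing $P_U P^M_{Z|T=\tau} P_{W|Z^M,U}$ is the same distribution that appears in the information density $\imath(W,Z_j|T=\tau) = \log \frac{P_{W,Z_j|T=\tau}(W,Z_j)}{P_{W|T=\tau}(W)P_{Z_j|T=\tau}(Z_j)}$. Once these marginal identifications are made explicit, the change-of-measure step is mechanical, and the proof reduces to a one-line sub-Gaussian MGF bound.
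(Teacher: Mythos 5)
Your proposal is correct and is essentially the paper's own argument run in the reverse direction: the paper starts from the sub-Gaussian MGF bound under the product measure $P_{W|T=\tau}P_{Z_j|T=\tau}$ and changes measure to the joint $P_{W,Z_j|T=\tau}$ (picking up the $-\imath(W,Z_j|T=\tau)$ term), whereas you start from the joint and use $\exp(-\imath)$ as the likelihood ratio to land on the product measure. The only detail you gloss over is that the change of measure yields the product-measure expectation restricted to the support of the joint, which must then be extended to the full expectation using nonnegativity of the integrand (the paper handles this with an explicit indicator $\Ibb_{\Escr}$); this is minor and does not affect correctness.
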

\begin{lemma}\label{lem:expinequality_avgtfrgap_ITMI_1}
Under Assumption~\ref{assum:1}$(b)$ and Assumption~\ref{assum:1a}, we have the following inequalities
\begin{align}
\Ebb_{P'_{Z^M_i}P_{U|Z^M_i}}\biggl[\exp \biggl( \lambda( L_t(U|Z^M_i)- \Ebb_{P_{U}P'_{Z^M_i}}[ L_t(U|Z^M_i)] -\frac{\lambda^2\sigma ^2}{2} -\imath(U,Z^M_i)\biggr) \biggr] \leq 1,  \label{eq:expinequality_avgtfr_env_ineq1_ITMI}
\end{align} for $i=\beta N+1,\hdots N$ and
\begin{align}
\Ebb_{P_{Z^M_i}P_{U|Z^M_i}}\biggl[&\exp \biggl( \lambda( L_t(U|Z^M_i)- \Ebb_{P_{U}P'_{Z^M_i}}[ L_t(U|Z^M_i)] -\frac{\lambda^2\sigma ^2}{2} \non \\& -\log \frac{\dP_{Z^M_i}(Z^M_i)}{\dP'_{Z^M_i}(Z^M_i)} -\imath(U,Z^M_i) \biggr) \biggr] \leq 1, \label{eq:expinequality_avgtfr_env_ineq2_ITMI}
\end{align} for $i=1,\hdots, \beta N$, which holds for all $\lambda \in \Real$.
\end{lemma}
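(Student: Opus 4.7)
The plan is to prove both inequalities in Lemma~\ref{lem:expinequality_avgtfrgap_ITMI_1} by a change-of-measure argument that reduces the left-hand side expectation to an expectation under the product distribution $P_U P'_{Z^M_i}$, where Assumption~\ref{assum:1}$(b)$ directly provides the sub-Gaussianity needed to control the moment generating function of $L_t(U|Z^M_i)$. This mirrors the structure of the proof of Lemma~\ref{lem:expinequality_avgtfrgap_ITMI} at the per-sample level, but now performed at the per-dataset level.

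For the target case \eqref{eq:expinequality_avgtfr_env_ineq1_ITMI}, I observe that the Radon--Nikodym derivative between the true joint $P'_{Z^M_i} P_{U|Z^M_i}$ and the product $P_U P'_{Z^M_i}$ is exactly $P_{U|Z^M_i}(U|Z^M_i)/P_U(U) = \exp(\imath(U,Z^M_i))$, with the information density defined, as in \eqref{eq:MIterms}, using the target marginal $P'_{Z^M_i}$. Multiplying by $\exp(-\imath(U,Z^M_i))$ inside the expectation and changing measure, the bound reduces to showing
\begin{equation*}
\Ebb_{P_U P'_{Z^M_i}}\!\left[\exp\!\left(\lambda\bigl(L_t(U|Z^M_i)-\Ebb_{P_U P'_{Z^M_i}}[L_t(U|Z^M_i)]\bigr)-\tfrac{\lambda^2\sigma^2}{2}\right)\right]\le 1,
\end{equation*}
which is immediate from the $\sigma^2$-sub-Gaussianity assumed in Assumption~\ref{assum:1}$(b)$.

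For the source case \eqref{eq:expinequality_avgtfr_env_ineq2_ITMI}, the true joint is $P_{Z^M_i} P_{U|Z^M_i}$ while the reference product is still $P_U P'_{Z^M_i}$, so the Radon--Nikodym derivative picks up an additional factor $P_{Z^M_i}(Z^M_i)/P'_{Z^M_i}(Z^M_i)$, which is well defined thanks to Assumption~\ref{assum:1a}. Thus the combined correction is $\exp\!\bigl(\log(P_{Z^M_i}(Z^M_i)/P'_{Z^M_i}(Z^M_i))+\imath(U,Z^M_i)\bigr)$; subtracting both terms inside the exponential and changing measure again reduces the inequality to the same sub-Gaussian moment generating function bound as in the target case.

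I do not anticipate a major obstacle here: the result is essentially a standard Donsker--Varadhan-style change of measure combined with sub-Gaussian concentration, and the only care needed is $(i)$ to correctly identify the reference measure as $P_U P'_{Z^M_i}$ in both cases, so that Assumption~\ref{assum:1}$(b)$ applies unchanged, and $(ii)$ to invoke Assumption~\ref{assum:1a} when forming the log-likelihood ratio between source and target marginals so that the density $P_{Z^M_i}/P'_{Z^M_i}$ and hence the exponent is almost surely finite. A fully rigorous version would phrase the change of measure in Radon--Nikodym form to cover continuous and discrete $Z^M_i$ uniformly, but no new ideas beyond those already used for Lemma~\ref{lem:expinequality_avgtfrgap_ITMI} are required.
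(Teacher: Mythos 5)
Your proposal is correct and follows essentially the same route as the paper: start from the sub-Gaussian moment-generating-function bound under the reference product measure $P_U P'_{Z^M_i}$ given by Assumption~\ref{assum:1}$(b)$, then change measure to the actual joint, picking up $\imath(U,Z^M_i)$ in the target case and additionally the log-likelihood ratio $\log(\dP_{Z^M_i}/\dP'_{Z^M_i})$ (justified by Assumption~\ref{assum:1a}) in the source case. The only cosmetic difference is that the paper performs the source-case change of measure in two sequential steps (first $P'_{Z^M_i}\to P_{Z^M_i}$, then $P_U\to P_{U|Z^M_i}$, each restricted to the relevant support via an indicator) whereas you combine them into a single Radon--Nikodym factor, which is equivalent.
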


Inequalities \eqref{eq:expinequality_avgtfr_task_ITMI}--\eqref{eq:expinequality_avgtfr_env_ineq2_ITMI} relate the per-task training and meta-training loss functions to the corresponding ensemble averages and information densities, and will be instrumental in deriving information theoretic bounds on average transfer meta-generalization gap and average transfer excess meta-risk. 
\subsection{Bounds on the Average Transfer Meta-Generalization Gap}\label{sec:EMRM_avgtfrgap}
In this section, we derive upper bounds on the average transfer meta-generalization gap \eqref{eq:avgtfr_metagengap} for a general meta-learner $P_{U|\mset}$. The results will be specialized to the EMRM meta-learner in Section~\ref{sec:EMRM_excessrisk}.

To start, we decompose the average transfer meta-generalization gap \eqref{eq:avgtfr_metagengap} as
\begin{align}
&\Ebb_{ P_{\mset,U}}[\Delta \Lscr'(U|\mset)]=\Ebb_{ P_{\mset,U}}\bigl[\bigl(\Lscr'_g(U)-\Lscr'_{g,t}(U)\bigr)+\bigl(\Lscr'_{g,t}(U)-\Lscr_t(U|\mset)\bigr) \bigr], \label{eq:decomposition-1}
\end{align}
where we have used the notation $P_{\mset,U}=P_{\mset}P_{U|\mset}$, and  $\Lscr'_{g,t}(u)$ is the average training loss when data is drawn from the distribution $P_{Z|T}$ of a task $T$ sampled from the target task distribution $ P'_T$, \ie
\begin{align}
\Lscr'_{g,t}(u)=\Ebb_{P'_T}\Ebb_{P^M_{Z|T}}[L_t(u|Z^M)]. \label{eq:auxiliaryloss_1}
\end{align} A summary of all definitions for transfer meta-learning can be found in Figure~\ref{fig:variablesrelation}.
\begin{figure}[h!]
\centering
\includegraphics[scale=0.5,clip=true, trim = 0in  0in 0in 0in]{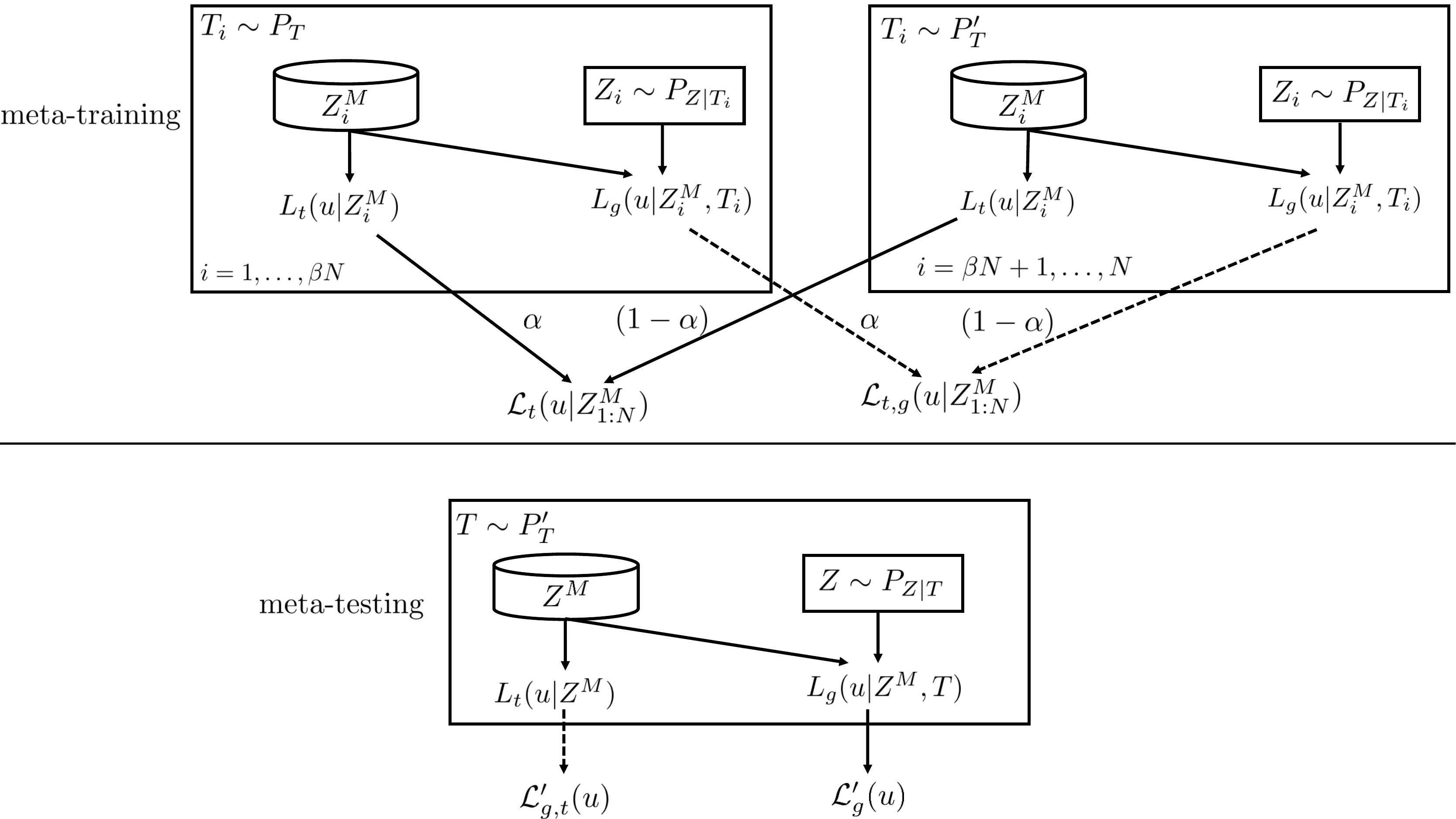} 
\caption{ Illustration of the variables involved in the definition of transfer meta-generalization gap \eqref{eq:tfr_metagengap}. }\label{fig:variablesrelation}
\vspace{-0.5cm}
\end{figure}

The decomposition \eqref{eq:decomposition-1} captures two distinct contributions to the meta-generalization gap in transfer meta-learning. The first difference in \eqref{eq:decomposition-1} accounts for the \textit{within-task generalization gap} that is caused by the observation of a finite number $M$ of data samples for the meta-test task. In contrast,  the second difference accounts for the \textit{environment-level generalization gap} that results from the finite number of observed tasks ($\beta N$ from the source environment and $(1-\beta)N$ from the target environment), as well as from the \textit{meta-environment shift} in task distributions from $P_T$ to $P'_T$. To upper bound the average transfer meta-generalization gap, we proceed by separately bounding the two differences in \eqref{eq:decomposition-1} via the exponential inequalities \eqref{eq:expinequality_avgtfr_task_ITMI}--\eqref{eq:expinequality_avgtfr_env_ineq2_ITMI}.  This results in the following information-theoretic upper bound for transfer meta-learning that extends the individual sample mutual information based approach in \cite{bu2019tightening} for conventional learning.
\begin{theorem} \label{thm:avgtfrgap_bound1_ITMI}
Under Assumption~\ref{assum:1} and Assumption~\ref{assum:1a}, the following upper bound on the average transfer meta-generalization gap holds for $\beta \in (0,1)$
\begin{align}
&|\Ebb_{P_{\mset,U}}[\Delta \Lscr'(U|\mset)]|\non \\ & \leq \frac{\alpha}{\beta N} \sum_{i=1}^{\beta N} \sqrt{2 \sigma^2 \biggl(D(P_{Z^M}||P'_{Z^M})+I(U;Z^M_i) \biggr)}+ \frac{1-\alpha}{(1-\beta) N} \sum_{i=\beta N+1}^{N}\sqrt{2 \sigma^2 I(U;Z^M_i) } \non \\
&+\Ebb_{P'_T}\biggl[\frac{1}{M} \sum_{j=1}^M \sqrt{2 \delta_{T}^2I(W;Z_j|T=\tau)}\biggr] \label{eq:avgtfrgap_bound1_ITMI},
\end{align}
with the MI terms defined in \eqref{eq:MIterms}.
\end{theorem}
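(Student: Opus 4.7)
The plan is to start from the decomposition in \eqref{eq:decomposition-1}, which splits the gap into a within-task contribution $\Lscr'_g(U) - \Lscr'_{g,t}(U)$ and an environment-level contribution $\Lscr'_{g,t}(U) - \Lscr_t(U|\mset)$, and to bound the two pieces separately via the exponential inequalities of Lemma~\ref{lem:expinequality_avgtfrgap_ITMI} and Lemma~\ref{lem:expinequality_avgtfrgap_ITMI_1}, combined with Jensen's inequality and an optimization over the free parameter $\lambda$.

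For the within-task piece, first carry the expectation over $U$: since $P_{W|T=\tau}$ is by definition the $U$-marginal of $P_{U}P_{W|Z^M,U}$, one obtains the identity $\Ebb_{P_{\mset,U}}[\Lscr'_g(U)-\Lscr'_{g,t}(U)] = \Ebb_{P'_T}\bigl[\tfrac{1}{M}\sum_{j=1}^M \bigl( \bar{l}(T) - \Ebb_{P_{W,Z_j|T}}[l(W,Z_j)]\bigr)\bigr]$, where $\bar{l}(\tau) = \Ebb_{P_{W|T=\tau}P_{Z|\tau}}[l(W,Z)]$. Applying \eqref{eq:expinequality_avgtfr_task_ITMI}, taking logarithms, and invoking Jensen's inequality $\log \Ebb[\exp(X)] \geq \Ebb[X]$ yields, for every $\lambda \in \Real$, the linear inequality $\lambda\bigl( \bar{l}(\tau) - \Ebb[l(W,Z_j)]\bigr) \leq \lambda^2\delta_\tau^2/2 + I(W;Z_j|T=\tau)$. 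Minimizing the right-hand side over positive and negative $\lambda$ produces the two-sided bound $|\bar{l}(\tau)-\Ebb[l(W,Z_j)]| \leq \sqrt{2\delta_\tau^2 I(W;Z_j|T=\tau)}$, which after averaging over $j$ and $\tau\sim P'_T$ and an application of the triangle inequality gives the third summand of \eqref{eq:avgtfrgap_bound1_ITMI}.

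For the environment-level piece, exploit the convex-combination structure of \eqref{eq:tfr_metatrainingloss} to rewrite $\Ebb_{P_{\mset,U}}[\Lscr'_{g,t}(U)-\Lscr_t(U|\mset)]$ as a weighted sum, over $i=1,\ldots,\beta N$ (source) and $i=\beta N+1,\ldots,N$ (target), of per-task-data-set differences $\Ebb_{P_U P'_{Z^M_i}}[L_t(U|Z^M_i)] - \Ebb_{P_{U,Z^M_i}}[L_t(U|Z^M_i)]$. For the target indices, apply \eqref{eq:expinequality_avgtfr_env_ineq1_ITMI} together with the same Jensen-plus-optimization argument to obtain $\sqrt{2\sigma^2 I(U;Z^M_i)}$; weighting by $(1-\alpha)/((1-\beta)N)$ and summing recovers the middle summand of \eqref{eq:avgtfrgap_bound1_ITMI}. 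For the source indices, use \eqref{eq:expinequality_avgtfr_env_ineq2_ITMI} instead: the extra log-likelihood ratio $\log(P_{Z^M_i}/P'_{Z^M_i})$ in the exponent has expectation $D(P_{Z^M}||P'_{Z^M})$ under the source marginal, so the identical Jensen and $\lambda$-optimization step produces $\sqrt{2\sigma^2(D(P_{Z^M}||P'_{Z^M}) + I(U;Z^M_i))}$; weighting by $\alpha/(\beta N)$ and summing recovers the first summand. A final triangle inequality applied to \eqref{eq:decomposition-1} combines the three contributions.

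The main technical delicacy lies in the change-of-measure step for the source tasks: the reference quantity $\Ebb_{P_U P'_{Z^M_i}}[L_t(U|Z^M_i)]$ is an expectation under the target marginal $P'_{Z^M}$, while $Z^M_i$ itself is drawn from the source marginal $P_{Z^M}$. This is exactly why Assumption~\ref{assum:1a} (absolute continuity of the source marginal with respect to the target) is imposed: it makes the log-ratio in \eqref{eq:expinequality_avgtfr_env_ineq2_ITMI} well defined almost surely and causes the KL divergence to enter the bound cleanly through the expectation of this log-ratio. A minor but essential bookkeeping point is that the Jensen step must be carried out with both positive and negative $\lambda$ in order to upgrade each one-sided bound to the absolute-value form needed by the triangle inequality, which is what ultimately yields the two-sided statement on $|\Ebb_{P_{\mset,U}}[\Delta\Lscr'(U|\mset)]|$.
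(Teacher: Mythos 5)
Your proposal is correct and follows essentially the same route as the paper's proof in Appendix~\ref{app:avgtfrgap_bound1_ITMI}: the decomposition \eqref{eq:decomposition-1} plus triangle inequality, the within-task bound from \eqref{eq:expinequality_avgtfr_task_ITMI} via Jensen and optimization over $\lambda$, and the environment-level bound obtained by splitting the source and target indices and applying \eqref{eq:expinequality_avgtfr_env_ineq2_ITMI} and \eqref{eq:expinequality_avgtfr_env_ineq1_ITMI} respectively, with the KL divergence arising as the expectation of the log-likelihood ratio. Your remarks on the role of Assumption~\ref{assum:1a} and on using both signs of $\lambda$ to get the two-sided bound match the paper's argument.
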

\begin{proof}
See Appendix~\ref{app:avgtfrgap_bound1_ITMI}.
\end{proof}

The upper bound \eqref{eq:avgtfrgap_bound1_ITMI} on the average transfer meta-generalization gap is expressed in terms of three distinct contributions $(i)$ \textit{source environment-level generalization gap}: the MI $I(U;Z^M_i)$, for $i=1,\hdots,\beta N$, captures the sensitivity of the meta-learner $U$ to the per-task data $Z^M_i$ of the source environment data set, while the meta-environment shift between the source and target environment per-task data is captured by the KL divergence $D(P_{Z^M}||P'_{Z^M})$ ; $(ii)$ \textit{target environment-level generalization gap}: the MI $I(U;Z^M_i)$, for $i=\beta N+1,\hdots, N$, accounts for the sensitivity of the meta-learner to the per-task data sample $Z^M_i$ from the target task environment; and lastly $(iii)$ \textit{within-task generalization gap}: the MI $I(W;Z_j|T=\tau)$ captures the sensitivity of the base learner to the data sample $Z_j$ of the meta-test task data $Z^M\sim P^M_{Z|T=\tau}$.

As $N$ increases, the dependence of a well-designed meta-learner output on each individual task-data set is expected to decrease, yielding a vanishing MI $I(U;Z^M_i)$. Similarly, with an increase in number $M$ of per-task data samples, the MI $I(W;Z_j|T=\tau)$ is expected to decrease to zero. An interesting observation from \eqref{eq:avgtfrgap_bound1_ITMI} is that, even if these conditions are satisfied, as $N$, $M \rightarrow \infty$, the meta-environment shift between source and target task distributions results in a non-vanishing bound on the transfer meta-generalization gap, which is quantified by the KL divergence $D(P_{Z^M}||P'_{Z^M})$. Futhermore, 
when no data from target environment is available for meta-training, the bound in \eqref{eq:avgtfrgap_bound1_ITMI} can be specialized as follows.
\begin{corollary}
Under Assumption~\ref{assum:1} and Assumption~\ref{assum:1a}, when only data from the source environment is available for meta-training, \ie, when $\beta=1$ and $\alpha=1$, the following upper bound on average transfer meta-generalization gap holds
\begin{align}
&|\Ebb_{P_{\mset,U}}[\Delta \Lscr'(U|\mset)]|\non \\ & \leq \frac{1}{ N} \sum_{i=1}^{ N} \sqrt{2 \sigma^2 \biggl(D(P_{Z^M}||P'_{Z^M})+I(U;Z^M_i) \biggr)}+\Ebb_{P'_T}\biggl[\frac{1}{M} \sum_{j=1}^M \sqrt{2 \delta_{T}^2I(W;Z_j|T=\tau)}\biggr].
\end{align}
\end{corollary}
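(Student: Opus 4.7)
The plan is to observe that this corollary corresponds to the boundary case $\beta=1$, $\alpha=1$ of Theorem~\ref{thm:avgtfrgap_bound1_ITMI}, which the theorem's statement technically excludes (since it requires $\beta\in(0,1)$). To handle this case cleanly, I would re-run the proof of Theorem~\ref{thm:avgtfrgap_bound1_ITMI} from the outset with $\beta=\alpha=1$, so that the second sum in \eqref{eq:avgtfrgap_bound1_ITMI}, which corresponds to target-environment meta-training sets, never arises. The key simplification is that the weighted meta-training loss \eqref{eq:tfr_metatrainingloss} collapses to $\Lscr_t(u|\mset)=\frac{1}{N}\sum_{i=1}^N L_t(u|Z^M_i)$ with every $Z^M_i$ drawn from the source environment (task $T_i\sim P_T$), while the transfer meta-generalization loss \eqref{eq:tfr_metatestloss} is still evaluated under $T\sim P'_T$.

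First, I would write the decomposition \eqref{eq:decomposition-1} of the average gap into the within-task contribution $\Ebb_{P_{\mset,U}}[\Lscr'_g(U)-\Lscr'_{g,t}(U)]$ and the environment-level contribution $\Ebb_{P_{\mset,U}}[\Lscr'_{g,t}(U)-\Lscr_t(U|\mset)]$. The within-task term is independent of $\beta$: applying Lemma~\ref{lem:expinequality_avgtfrgap_ITMI} with the standard sub-Gaussian argument (take expectations, apply Jensen on $\log$, and optimize the quadratic in $\lambda$) produces exactly the term $\Ebb_{P'_T}\bigl[\frac{1}{M}\sum_{j=1}^M\sqrt{2\delta_T^2\,I(W;Z_j|T=\tau)}\bigr]$, just as in Theorem~\ref{thm:avgtfrgap_bound1_ITMI}. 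For the environment-level term, since every meta-training set now comes from the source, only inequality \eqref{eq:expinequality_avgtfr_env_ineq2_ITMI} of Lemma~\ref{lem:expinequality_avgtfrgap_ITMI_1} is needed. Applying it for each $i=1,\dots,N$ and again invoking the $\lambda$-optimization trick yields, per task,
\begin{equation*}
\bigl|\Ebb[L_t(U|Z^M_i)-\Ebb_{P_UP'_{Z^M}}[L_t(U|Z^M)]]\bigr|\le\sqrt{2\sigma^2\bigl(D(P_{Z^M}\|P'_{Z^M})+I(U;Z^M_i)\bigr)},
\end{equation*}
because $\Ebb_{P_{Z^M_i}}[\log(dP_{Z^M_i}/dP'_{Z^M_i})]=D(P_{Z^M}\|P'_{Z^M})$ and $\Ebb_{P_{Z^M_i,U}}[\imath(U,Z^M_i)]=I(U;Z^M_i)$. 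Averaging over $i$ and combining with the within-task bound gives the claim.

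There is no genuine obstacle here: the derivation is a verbatim restriction of the proof of Theorem~\ref{thm:avgtfrgap_bound1_ITMI} in Appendix~\ref{app:avgtfrgap_bound1_ITMI} to the case in which the target-environment sum is empty. The only mild subtlety is the boundary handling — one could alternatively argue by a limiting/continuity argument that the second sum in \eqref{eq:avgtfrgap_bound1_ITMI} vanishes as $\alpha\to 1^-$ and $\beta\to 1^-$ (the coefficient $(1-\alpha)/(1-\beta)$ being arbitrary in the limit since the sum itself is empty), but the direct re-derivation is cleaner and uses the machinery of Lemmas~\ref{lem:expinequality_avgtfrgap_ITMI} and~\ref{lem:expinequality_avgtfrgap_ITMI_1} without modification.
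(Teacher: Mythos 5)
Your proposal is correct and follows essentially the same route as the paper: the paper obtains this corollary by direct specialization of Theorem~\ref{thm:avgtfrgap_bound1_ITMI}, whose proof in Appendix~\ref{app:avgtfrgap_bound1_ITMI} uses exactly the decomposition \eqref{eq:decomposition-1}, Lemma~\ref{lem:expinequality_avgtfrgap_ITMI} for the within-task term, and inequality \eqref{eq:expinequality_avgtfr_env_ineq2_ITMI} with the $\lambda$-optimization for the source-environment term, which is all that survives when $\alpha=\beta=1$. Your observation about the boundary case $\beta=1$ being formally excluded from the theorem's statement is a fair point of rigor, and your direct re-derivation handles it cleanly.
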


If, in addition, the source and target task distributions coincide, \ie, if $P_T=P'_T$, the bound \eqref{eq:avgtfrgap_bound1_ITMI} recovers the following result presented in \cite[Cor. 5.8]{jose2020information}.
\begin{corollary}
When the source and task environment data distributions coincide, \ie, when $P_T=P'_T$, for $\beta=1$ and $\alpha=1$, we have the following upper bound on average meta-generalization gap
\begin{align}
&|\Ebb_{P_{\mset,U}}[\Delta \Lscr(U|\mset)]|\non \\ & \leq \frac{1}{ N} \sum_{i=1}^{ N} \sqrt{2 \sigma^2 I(U;Z^M_i)}+\Ebb_{P_T}\biggl[\frac{1}{M} \sum_{j=1}^M \sqrt{2 \delta_{T}^2I(W;Z_j|T=\tau)}\biggr].
\end{align}
\end{corollary}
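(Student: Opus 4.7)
The plan is to obtain this corollary as an immediate specialization of the preceding corollary (equivalently, of Theorem~\ref{thm:avgtfrgap_bound1_ITMI} with $\beta = 1$, $\alpha = 1$), so essentially no new work is required beyond identifying how the meta-environment shift term collapses.

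First I would recall that, under the hypotheses of the preceding corollary (namely $\beta = 1$, $\alpha = 1$), the bound reads
\begin{align*}
|\Ebb_{P_{\mset,U}}[\Delta \Lscr'(U|\mset)]| &\leq \frac{1}{N} \sum_{i=1}^{N} \sqrt{2\sigma^2\bigl(D(P_{Z^M}\|P'_{Z^M}) + I(U;Z^M_i)\bigr)} \\
&\quad + \Ebb_{P'_T}\!\left[\frac{1}{M}\sum_{j=1}^M \sqrt{2\delta_T^2 I(W;Z_j|T=\tau)}\right].
\end{align*}
The only ingredient I would need is the observation that $P_{Z^M}$ and $P'_{Z^M}$ are by definition the marginals of $P_T P^M_{Z|T}$ and $P'_T P^M_{Z|T}$, respectively, where the per-task conditional $P^M_{Z|T}$ is the same in both environments. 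Hence the hypothesis $P_T = P'_T$ immediately yields $P_{Z^M} = P'_{Z^M}$, so that $D(P_{Z^M}\|P'_{Z^M}) = 0$. This collapses the meta-environment shift contribution inside the square root, leaving $\sqrt{2\sigma^2 I(U;Z^M_i)}$ in each summand of the first term.

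Next I would note that, under $P_T = P'_T$, we also have $\Lscr'_g(u) = \Lscr_g(u)$ so that the transfer meta-generalization gap $\Delta \Lscr'(U|\mset)$ becomes the conventional meta-generalization gap $\Delta \Lscr(U|\mset)$ appearing on the left-hand side of the claim; likewise the outer expectation $\Ebb_{P'_T}$ in the within-task term may be rewritten as $\Ebb_{P_T}$. Substituting these two identities into the preceding corollary delivers the stated bound verbatim.

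I do not anticipate any technical obstacle here, since Assumption~\ref{assum:1a} is trivially satisfied when $P_{Z^M} = P'_{Z^M}$, and all sub-Gaussianity hypotheses required for the parent bound are inherited from Assumption~\ref{assum:1}. The only thing worth double-checking is that, in the reduction, the marginal $P_U$ used in Assumption~\ref{assum:1}$(b)$ is still the marginal induced by $P_{\mset} P_{U|\mset}$ with the single (now source = target) environment; this is automatic because with $\beta = 1$ the meta-training distribution $P_{\mset}$ is the $N$-fold product of $P_T P^M_{Z|T}$, matching the sampling used in the statement.
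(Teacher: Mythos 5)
Your proposal is correct and matches the paper's (implicit) argument exactly: the paper presents this corollary as a direct specialization of Theorem~\ref{thm:avgtfrgap_bound1_ITMI} with $\beta=\alpha=1$, where $P_T=P'_T$ forces $P_{Z^M}=P'_{Z^M}$, hence $D(P_{Z^M}\|P'_{Z^M})=0$, and the transfer meta-generalization gap reduces to the conventional one. No gaps.
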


Finally, the upper bound in \eqref{eq:avgtfrgap_bound1_ITMI} on average transfer meta-generalization gap can be specialized to recover the following upper bound \cite[Cor. 2]{wu2020information} on average generalization gap in conventional transfer learning (see Remark~\ref{rem:1}).
\begin{corollary}\label{cor:avgtfrgap_tfrlearning_ISMIbound}
Consider the setting of Theorem~\ref{thm:avgtfrgap_bound1_ITMI} with $P_T=\delta(T-\tau)$ and $P'_T=\delta(T-\tau')$ for some $\tau,\tau' \in \mathcal{T}$. For $N=2$, let the meta-training set be $\mset=(Z^{\beta \bar{M}}_{\tau},Z^{(1-\beta)\bar{M}}_{\tau'}):=Z^{\bar{M}}$ where $\bar{M}=NM$ and $Z^{\beta \bar{M}}_{\tau} \sim P^{\beta \bar{M}}_{Z|\tau}$ and $Z^{(1-\beta)\bar{M}}_{\tau'}\sim P^{(1-\beta)\bar{M}}_{Z|\tau'}$. Assume that $P_{W|Z^M,U}=\delta(W-U)$ and fix $W=U$. Then, the following upper bound on the average generalization gap for transfer learning holds for $\beta \in (0,1)$
\begin{align}
|\Ebb_{P_{Z^{\bar{M}},W}}[L_g(W|\tau')-L_t(W|Z^{\bar{M}})]|&\leq  \frac{\alpha}{\beta \bar{M}} \sum_{i=1}^{\beta \bar{M}} \sqrt{2 \delta_{\tau'}^2 \biggl(D(P_{Z|\tau}||P_{Z|\tau'})+I(W;Z_i) \biggr)}\non \\&+ \frac{1-\alpha}{(1-\beta) \bar{M}} \sum_{i=\beta \bar{M}+1}^{\bar{M}}\sqrt{2 \delta_{\tau'}^2 I(W;Z_i) }. \label{eq:avgtfrgap_transferlearning}
\end{align}
where the MI $I(W;Z_i)$ is evaluated with respect to the joint distribution $P_{W,Z_i|\tau}$ for $i=1,\hdots,\beta \bar{M}$ and is evaluated with respect to the joint distribution $P_{W,Z_i|\tau'}$ for $i=\beta \bar{M}+1,\hdots, \bar{M}$.
\end{corollary}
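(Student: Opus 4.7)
The plan is to specialize Theorem~\ref{thm:avgtfrgap_bound1_ITMI} to the conventional transfer learning setup of Remark~\ref{rem:1}. The delta task distributions $P_T=\delta(T-\tau)$ and $P'_T=\delta(T-\tau')$ collapse every source and target ``task'' to the single tasks $\tau$ and $\tau'$, respectively, so the meta-training set may be repartitioned as $\bar{M}$ virtual tasks of a single sample each. I would therefore invoke Theorem~\ref{thm:avgtfrgap_bound1_ITMI} with $M=1$ and $N=\bar{M}$, so that the first $\beta\bar{M}$ single-sample ``data sets'' are i.i.d.\ draws from $P_{Z|\tau}$ and the remaining $(1-\beta)\bar{M}$ are i.i.d.\ draws from $P_{Z|\tau'}$.

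First, I would check that, under the trivial base learner $P_{W|Z^M,U}=\delta(W-U)$, the meta-level loss functions reduce to their transfer learning counterparts: $L_t(u|Z_i)=l(u,Z_i)$, so $\Lscr_t(U|\mset)$ becomes the weighted training loss $L_t(W|Z^{\bar{M}})$ of \eqref{eq:trainingloss}, and $\Lscr'_g(U)=L_g(U|\tau')=L_g(W|\tau')$. Consequently, $\Ebb_{P_{\mset,U}}[\Delta\Lscr'(U|\mset)]$ equals the left-hand side of \eqref{eq:avgtfrgap_transferlearning}.

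Second, I would argue that the within-task (third) term in \eqref{eq:avgtfrgap_bound1_ITMI} vanishes: the expectation $\Ebb_{P'_T}[\cdot]$ fixes $T=\tau'$, and since $W=U$ is determined by the meta-training data and independent of the meta-test sample $Z_j\sim P_{Z|\tau'}$, we obtain $I(W;Z_j|T=\tau')=0$. The sub-Gaussianity parameter $\sigma^2$ of Assumption~\ref{assum:1}$(b)$ is then identified by observing that, with $M=1$, the per-task training loss reduces to $l(U,Z)$, which under $P_U P_{Z|\tau'}$ is $\delta_{\tau'}^2$-sub-Gaussian by Assumption~\ref{assum:1}$(a)$ applied at $\tau'$.

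Finally, I would simplify the remaining information quantities: with $M=1$ and the delta task distributions, $P_{Z^M}=P_{Z|\tau}$ and $P'_{Z^M}=P_{Z|\tau'}$, so $D(P_{Z^M}\|P'_{Z^M})=D(P_{Z|\tau}\|P_{Z|\tau'})$, and each $I(U;Z^M_i)$ collapses to $I(W;Z_i)$ evaluated under $P_{W,Z_i|\tau}$ for $i\le\beta\bar{M}$ and under $P_{W,Z_i|\tau'}$ for $i>\beta\bar{M}$, matching the conventions stated in the corollary. Substituting these identifications into \eqref{eq:avgtfrgap_bound1_ITMI} yields \eqref{eq:avgtfrgap_transferlearning}. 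The main obstacle is the bookkeeping between the two granularities: the corollary statement uses $N=2$ aggregated ``tasks,'' whereas the argument proceeds after repartitioning into $\bar{M}$ single-sample ``tasks''; justifying this repartitioning via the delta task distributions and aligning the sub-Gaussianity hypothesis, the per-sample information densities, and the index ranges with the corollary's statement is where care is needed.
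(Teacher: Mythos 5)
Your proposal is correct and follows essentially the same route as the paper's own proof: the paper likewise re-identifies each ``sub-data set'' $Z^M_i$ with a single sample $Z_i$ (i.e., the $M=1$, $N=\bar M$ repartition you describe), uses $P_{W|Z^M,U}=\delta(W-U)$ to collapse the meta-level losses to $L_g(W|\tau')$ and $L_t(W|Z^{\bar M})$, observes that $P_{W,Z_j|\tau'}=P_W P_{Z_j|\tau'}$ so the within-task MI term vanishes, and identifies $\sigma^2=\delta_{\tau'}^2$, $D(P_{Z^M}\|P'_{Z^M})=D(P_{Z|\tau}\|P_{Z|\tau'})$, and $I(U;Z^M_i)=I(W;Z_i)$ exactly as you do. The bookkeeping concern you flag is handled in the paper by the same identification, so no additional argument is needed.
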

\begin{proof}
See Appendix~\ref{app:avgtfrgap_tfrlearning_ISMIbound}.
\end{proof}

Finally, we remark that, as proved in Appendix~\ref{app:newexponentialinequalities}, all the upper bounds in this section, starting from 
\eqref{eq:avgtfrgap_bound1_ITMI}, can be also obtained under the following different assumption analogous to the one considered in the work of Xu and Raginsky \cite{xu2017information}. 
\begin{assumption}\label{assum:newassumption}
For every task $\tau \in \mathcal{T}$, the loss function $l(w,Z)$ is $\delta_{\tau}^2$-sub-Gaussian when $Z \sim P_{Z|T=\tau}$ for all $w \in \Wscr$. Similarly, the per-task average training loss $L_t(u|Z^M)$ is $\sigma^2$-sub-Gaussian when $Z^M \sim P'_{Z^M}$ for all $u \in \Uscr$.
\end{assumption} 
 
 As discussed in \cite{negrea2019information}, this assumption does not imply Assumption~\ref{assum:1}, and vice versa. This is unless the loss function $l(\cdot,\cdot)$ is bounded in the interval $[a,b]$, in which case both these assumptions hold.

\subsection{Bounds on Transfer Excess Meta-Risk of EMRM}\label{sec:EMRM_excessrisk}
In this section, we obtain an upper bound on the average transfer meta-excess risk \eqref{eq:transfer_metaexcessrisk} for the EMRM meta-learner \eqref{eq:EMRM}. We will omit the dependence of $U^{\rm EMRM}$ on $\mset$ to simplify notation. We start by decomposing the average transfer excess meta-risk \eqref{eq:transfer_metaexcessrisk} of EMRM as
\begin{align}
&\Ebb_{P_{\mset}}[\Lscr'_g(U^{\rm EMRM})-\Lscr'_g(u^{*})]\non\\
&=\Ebb_{P_{\mset}}\Bigl[\Bigl(\Lscr'_g(U^{\rm EMRM})-\Lscr_t(U^{\rm EMRM}|\mset)\Bigr)+\Bigl(\Lscr_t(U^{\rm EMRM}|\mset) - \Lscr_t(u^{*}|\mset)\Bigr)\non \\&+\Bigl(\Lscr_t(u^{*}|\mset)- \Lscr'_g(u^{*})]\Bigr) \Bigr].
\label{eq:excessrisk_1}
\end{align} We first observe that  we have the inequality $\Lscr_t(U^{\rm EMRM}|\mset) \leq  \Lscr_t(u^{*}|\mset)$ which is by the definition of EMRM \eqref{eq:EMRM}.  Therefore, from \eqref{eq:excessrisk_1}, the average transfer meta-excess risk is upper bounded by the sum of average transfer meta-generalization gap studied above, which is the first difference in \eqref{eq:excessrisk_1}, and of the average difference $\Ebb_{P_{\mset}}[\Lscr_t(u^{*}|\mset)- \Lscr'_g(u^{*})]$, the last difference in \eqref{eq:excessrisk_1}. Combining a bound on this term with the bound \eqref{eq:avgtfrgap_bound1_ITMI} on the transfer meta-generalization gap yields the following upper bound on the average transfer excess meta-risk.
\begin{theorem}\label{thm:transferrisk_bound}
Under Assumption~\ref{assum:newassumption} and Assumption~\ref{assum:1a}, and for $\beta \in (0,1)$, the following upper bound on the average transfer meta-excess risk holds for the EMRM meta-learner \eqref{eq:EMRM}
\begin{align}
&\Ebb_{P_{\mset}}[\Lscr'_g(U^{\rm EMRM})-\Lscr'_g(u^{*})]\non \\
&\leq \frac{\alpha}{\beta N} \sum_{i=1}^{\beta N} \sqrt{2 \sigma^2 \biggl(D(P_{Z^M}||P'_{Z^M})+I(U^{\rm EMRM};Z^M_i) \biggr)}\non \\&+ \frac{1-\alpha}{(1-\beta) N} \sum_{i=\beta N+1}^{N}\sqrt{2\sigma^2 I(U^{\rm EMRM};Z^M_i) } +\Ebb_{P'_T}\biggl[\frac{1}{M} \sum_{j=1}^M \sqrt{2 \delta_{T}^2I(W;Z_j|T=\tau)}\biggr] \non \\
& + \alpha \sqrt{2 \sigma^2 D(P_{Z^M}||P'_{Z^M})}+\Ebb_{P'_T}\biggl[\frac{1}{M} \sum_{j=1}^M \sqrt{2 \delta_T^2 I(W;Z_j|T=\tau,u^{*}}) \biggr], \label{eq:transferrisk_bound}
\end{align} 
where the MI terms are defined in \eqref{eq:MIterms} with $U=U^{\rm EMRM}$.
\end{theorem}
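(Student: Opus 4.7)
The plan is to start from the decomposition of the transfer excess meta-risk given in equation~\eqref{eq:excessrisk_1} and handle each of its three pieces separately. By the definition of EMRM in~\eqref{eq:EMRM}, the middle difference $\Lscr_t(U^{\rm EMRM}|\mset) - \Lscr_t(u^{*}|\mset)$ is non-positive and can be dropped. The first difference $\Ebb_{P_{\mset}}[\Lscr'_g(U^{\rm EMRM}) - \Lscr_t(U^{\rm EMRM}|\mset)]$ is exactly the average transfer meta-generalization gap for the deterministic meta-learner $U^{\rm EMRM}$, so Theorem~\ref{thm:avgtfrgap_bound1_ITMI} (in the version that holds under Assumption~\ref{assum:newassumption}, as noted at the end of Section~\ref{sec:EMRM_avgtfrgap}) bounds it by the first three summands on the right-hand side of~\eqref{eq:transferrisk_bound}.

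What remains is the third piece $\Ebb_{P_{\mset}}[\Lscr_t(u^{*}|\mset) - \Lscr'_g(u^{*})]$, which involves the fixed, data-independent optimizer $u^{*}$. Using the definition~\eqref{eq:tfr_metatrainingloss} of the weighted meta-training loss together with linearity of expectation, this quantity decomposes cleanly as
\[
\alpha \bigl(\Ebb_{P_{Z^M}}[L_t(u^{*}|Z^M)] - \Lscr'_{g,t}(u^{*})\bigr) + \bigl(\Lscr'_{g,t}(u^{*}) - \Lscr'_g(u^{*})\bigr),
\]
with $\Lscr'_{g,t}(u^{*})$ defined in~\eqref{eq:auxiliaryloss_1}. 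The first summand is a pure meta-environment-shift term at the fixed $u^{*}$: invoking the Donsker--Varadhan change-of-measure inequality together with the $\sigma^2$-sub-Gaussianity of $L_t(u^{*}|Z^M)$ under $Z^M \sim P'_{Z^M}$ (Assumption~\ref{assum:newassumption}), it is bounded in absolute value by $\sqrt{2\sigma^2 D(P_{Z^M}||P'_{Z^M})}$. The second summand is the within-task generalization gap at the fixed hyperparameter $u^{*}$; it is bounded by applying the task-level exponential inequality~\eqref{eq:expinequality_avgtfr_task_ITMI} with $u^{*}$ plugged in and optimizing over $\lambda$ as in the within-task portion of the proof of Theorem~\ref{thm:avgtfrgap_bound1_ITMI}, yielding $\Ebb_{P'_T}\bigl[\frac{1}{M}\sum_{j=1}^M \sqrt{2\delta_T^2 I(W;Z_j|T=\tau,u^{*})}\bigr]$.

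Combining these two contributions with the bound on the first difference gives~\eqref{eq:transferrisk_bound}. The one subtlety to keep straight is that, because $u^{*}$ is a constant independent of $\mset$, every environment-level information density $\imath(u^{*}, Z^M_i)$ degenerates to zero, so the corresponding MI summations disappear from the bound on the third piece; only the KL term (through change of measure on the $\beta N$ source-environment data sets, whose per-dataset weights add up to $\alpha$) and the within-task MI term (through $W \sim P_{W|Z^M,u^{*}}$, whose nontrivial dependence on $Z^M$ persists) remain. The main technical care needed is to align the direction of change of measure with the direction of sub-Gaussianity: the sub-Gaussianity in Assumption~\ref{assum:newassumption} is stated under $P'_{Z^M}$, so the Donsker--Varadhan step must be applied so that the reference measure is $P'_{Z^M}$ and the tilted measure is $P_{Z^M}$, matching the $D(P_{Z^M}\|P'_{Z^M})$ appearing in~\eqref{eq:transferrisk_bound}.
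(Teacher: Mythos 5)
Your proposal is correct and follows essentially the same route as the paper's proof: the decomposition \eqref{eq:excessrisk_1} with the middle term dropped by optimality of EMRM, Theorem~\ref{thm:avgtfrgap_bound1_ITMI} for the generalization-gap piece, and the split of $\Ebb_{P_{\mset}}[\Lscr_t(u^{*}|\mset)-\Lscr'_g(u^{*})]$ into an $\alpha$-weighted source-environment shift term (bounded via change of measure and sub-Gaussianity, exactly the Donsker--Varadhan step you describe) plus a within-task term at the fixed $u^{*}$ (bounded via the task-level exponential inequality conditioned on $u^{*}$). Your observation that the target-environment contribution vanishes in expectation and that $\imath(u^{*},Z^M_i)$ degenerates for the data-independent $u^{*}$ matches the paper's argument.
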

\begin{proof}
See Appendix~\ref{app:transferrisk_bound}.
\end{proof}

Comparing \eqref{eq:transferrisk_bound} with \eqref{eq:avgtfrgap_bound1_ITMI} reveals that, in addition to the terms contributing to the average transfer meta-generalization gap, the excess meta-risk of EMRM meta-learner also includes the KL divergence between source and target environment per-task data $D(P_{Z^M}||P'_{Z^M})$ and the MI $I(W;Z_j|u^{*},\tau)$. The latter captures the sensitivity of the base learner $P_{W|Z^M,u^{*}}$ under the optimal hyperparameter $u^{*}$ to a training sample $Z_j$ of the meta-test task data $Z^M \sim P'_{Z^M}$. Since $u^{*}$ is unknown in general, once can further upper bound this mutual information by  the supremum value $\sup_{u \in \Uscr} I(W;Z_j|T=\tau,u)$. 

All the bounds obtained in this section depend on the distributions of source and target task environments, namely $P_T$, and $P'_T$, and per-task data distributions $\{P_{Z|T=\tau}\}_{\tau \in \mathcal{T}}$, all of which are generally unknown. In the next section, we obtain high-probability PAC-Bayesian bounds on the transfer meta-generalization gap, which are in general independent of these distributions except for the quantity that captures the meta-environment shift. We further build on this bound to define a novel meta-learner inspired by the principle of information risk minimization \cite{zhang2006information}.

%
\section{Information Risk Minimization for Transfer Meta-Learning}
In this section, we first obtain a novel PAC-Bayesian bound on the transfer meta-generalization gap which holds with high probability over the meta-training set.  Based on the derived bound, we then propose a new meta-training algorithm, termed Information Meta Risk Minimization (IMRM), that is inspired by the principle of information risk minimization \cite{zhang2006information}. This will be compared to EMRM through a numerical example in Section~\ref{sec:example}. 

We first discuss in the next sub-section some technical assumptions that are central to the derivation of PAC-Bayesian bound for transfer meta-learning. We then present the PAC-Bayesian bounds in Section~\ref{sec:PAC-Bayesian bound}, and we introduce IMRM in Section~\ref{sec:IMRM}.
\subsection{Assumptions}
The derivation of the PAC-Bayesian bound relies on slightly different conditions than Assumption~\ref{assum:newassumption}, which are stated next. 
\begin{assumption}\label{assum:2}
The environment distributions $P_T,P'_T$ and $\{P_{Z|T=\tau}\}_{\tau \in \mathcal{T}}$, the base learner $P_{W|Z^M,U}$ and the meta-learner $P_{U|\mset}$ satisfy the following assumptions.
\begin{enumerate}
\item [$(a)$] For each task $\tau \in \mathcal{T}$, the loss function $l(w,Z)$ is $\delta_{\tau}^2$-sub-Gaussian under $Z \sim P_{Z|\tau}$ for all $w \in \Wscr$.
\item[$(b)$] The average per-task generalization loss $L_g(u|T,Z^M)$ in \eqref{eq:avgtestloss} is $\sigma^2$-sub-Gaussian when $(T,Z^M) \sim P'_T P^M_{Z|T}$ for all $u \in \Uscr$.
\end{enumerate}
\end{assumption}
Assumption~\ref{assum:2}$(a)$ on the loss function $l(w,Z)$ is the same as the one considered in Assumption~\ref{assum:newassumption}. In contrast, while Assumption~\ref{assum:2}$(b)$ is on the average per-task generalization loss, Assumption~\ref{assum:newassumption} considers average per-task training loss. This distinction is necessary in order to also bound the task-level generalization gap in high probability. If the loss function is bounded in the interval $[a,b]$, then both Assumption~\ref{assum:newassumption} and Assumption~\ref{assum:2} are satisfied with  $\sigma^2=\delta_{\tau}^2$ for all $\tau \in \mathcal{T}$.

PAC-Bayes bounds depend on arbitrary reference data-independent ``prior" distributions that allow the evaluation of sensitivity measures for base learners \cite{guedj2019primer} and meta-learners \cite{amit2018meta}. Accordingly, in the following sections, we consider a hyper-prior $Q_U \in \Pscr(\Uscr)$ for the  hyperparameter and a family of priors $Q_{W|U=u} \in \Pscr(\Wscr)$ for each $u \in \Uscr$ satisfying the following assumption.
\begin{assumption}\label{assum:2a}
The hyper-prior $Q_U \in \Pscr(\Uscr)$ must satisfy that  $P_{U|\mset=z^M_{1:N}}(u)=0$ almost surely for every $u \in \Uscr$ such that $Q_U(u)=0$, for all $z^M_{1:N} \in \Zscr^{MN}$.
 Similarly, for given $u \in \Uscr$, the prior $Q_{W|U=u} \in \Pscr(\Wscr)$ must satisfy that $P_{W|Z^M=z^M,U=u}(w)=0$ almost surely for every $ w \in \Wscr$ such that $Q_{W|U=u}(w)=0$, for all $z^M \in \Zscr^M$.
 Finally, $P_T(\tau)=0$ almost surely for every $\tau \in \mathcal{T}$ such that $P'_T(\tau)=0$.
\end{assumption}

The derivation of PAC-Bayesian bound is based on novel exponential inequalities that are derived in a similar manner as in the previous section and can be found in Appendix~\ref{app:expinequality_PACBayesian}. In the following, we use $T_{1:N}=(T_1,\hdots,T_N)$ to denote the $N$ selected tasks for generating the meta-training data set $\mset$ 
with $P_{T_{1:N}}=\prod_{i=1}^{\beta N}P_{T_i} \prod_{j=\beta N+1}^N P'_{T_i}$ and $P_{\mset|T_{1:N}}$ denoting the product distribution $\prod_{i=1}^N P^M_{Z|T_i}$.


\subsection{PAC-Bayesian Bound for Transfer Meta-Learning}\label{sec:PAC-Bayesian bound}
In this section, we focus on obtaining PAC-Bayesian bounds of the following form: With probability at least $1-\delta$ over the distribution of meta-training tasks and data $(T_{1:N},\mset) \sim P_{T_{1: N}}P_{\mset|T_{1:N}}$, the transfer meta-generalization gap satisfies
\begin{align}
\Bigl| \Ebb_{P_{U|\mset}}[\Delta \Lscr'(U|\mset)]\Bigr| \leq \epsilon,
\end{align} for $\delta \in (0,1)$.
To start, we define the the empirical weighted average of the per-task test loss of the meta-training set as
\begin{align}
\Lscr_{t,g}(u|\mset,T_{1:N})=\frac{\alpha}{\beta N} \sum_{i=1}^{\beta N}L_g(u|Z^M_i,T_i) +\frac{1-\alpha}{(1-\beta)N}\sum_{i=\beta N+1}^N L_g(u|Z^M_i,T_i),\label{eq:auxiliaryloss_2}
\end{align} where $L_g(u|Z^M_i,T_i)$ is defined in \eqref{eq:avgtestloss}. Then,  the transfer meta-generalization gap can be decomposed as
\begin{align}
&\Ebb_{P_{U|\mset}}\bigl[\Delta \Lscr'(U|\mset)\bigr]\non \\ &= \Ebb_{P_{U|\mset}}\Bigl[\Bigl( \Lscr'_g(U)-\Lscr_{t,g}(U|\mset,T_{1:N}) \Bigr)+ \Bigl(\Lscr_{t,g}(U|\mset,T_{1:N}) -\Lscr_t(U|\mset)\Bigr)\Bigr]. \label{eq:decomposition-2}
\end{align}
In \eqref{eq:decomposition-2}, the first difference accounts for the \textit{environment-level generalization gap} resulting from the observation of a finite number $N$ of meta-training tasks and also from the meta-environment shift between source and target task distributions. The second difference accounts for the \textit{within-task generalization gap} in each subset of the meta-training set $\mset$ arising from observing a finite number $M$ of per-task data samples.
We note that the decomposition in \eqref{eq:decomposition-2} can also be used to obtain an upper bound on the average transfer meta-generalization gap.
 However, the resulting bound
   does not recover the bound in \cite{jose2020information},
or  specialize to the case of conventional transfer learning. We leave a full investigation of this alternate bound to future work.

 As in the bounds on average transfer meta-generalization gap presented in Section~\ref{sec:EMRM_avgtfrgap}, the idea is to separately bound the above two differences in high probability over $(T_{1:N},\mset)\sim P_{T_{1:N}}P_{\mset|T_{1:N}}$ and then combine the results via union bound.
This results in the following PAC-Bayesian bound.

%
\begin{theorem}\label{thm:PAC-Bayesianbound}
For a fixed base learner $P_{W|Z^M,U}$, let $Q_U \in \Pscr(\Uscr)$ be an arbitrary hyper-prior distribution over the space of hyper-parameters and $Q_{W|U=u} \in \Pscr(\Wscr)$ be an arbitrary prior distribution over the space of model parameters for each $u \in \Uscr$ and $\beta \in (0,1)$. Then, under Assumption~\ref{assum:2} and Assumption~\ref{assum:2a}, the following inequality holds uniformly for any meta-learner $P_{U|\mset}$ with probability at least $1-\delta$, $\delta \in (0,1)$, over $(T_{1:N},\mset)\sim P_{T_{1:N}}P_{\mset|T_{1:N}}$
\begin{align}
&\biggl| \Ebb_{P_{U|\mset}}[\Delta \Lscr'(U|\mset)]\biggr| \leq \sqrt{ 2 \sigma^2 \biggl( \frac{ \alpha^2 }{\beta N}+\frac{(1-\alpha)^2 }{(1-\beta) N}\biggr) \biggl( \sum_{i=1}^{\beta N}\log \frac{P_{T}(T_i)}{P'_{T}(T_i)}+D(P_{U|\mset}||Q_U)+ \log \frac{2}{\delta}\biggr)} \non \\
&+\frac{\alpha}{\beta N} \sum_{i=1}^{\beta N} \sqrt{\frac{2 \delta_{T_i}^2}{M}\biggl(D(P_{U|\mset}||Q_U)+ \Ebb_{P_{U|\mset}}[ D(P_{W|U,Z^M_i}||Q_{W|U})]+\log \frac{4\beta N}{\delta}\biggr) } \non \\
&+\frac{1-\alpha}{(1-\beta) N} \sum_{i=\beta N+1}^{ N} \sqrt{\frac{2 \delta_{T_i}^2}{M}\biggl(D(P_{U|\mset}||Q_U)+ \Ebb_{P_{U|\mset}}[ D(P_{W|U,Z^M_i}||Q_{W|U})]+\log \frac{4(1-\beta)N}{\delta}\biggr) }.
\label{eq:PACBayesian_bound1}
\end{align}
\end{theorem}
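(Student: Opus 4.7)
The plan is to bound the two differences in the decomposition \eqref{eq:decomposition-2} separately, each with a carefully chosen failure probability, and combine them via a union bound so that the total failure probability is $\delta$. The main tools are: (i) exponential inequalities analogous in spirit to those of Lemma~\ref{lem:expinequality_avgtfrgap_ITMI_1}, but now reformulated with the data-independent priors $Q_U$ and $\{Q_{W|U=u}\}$ in place of data-dependent marginals; (ii) the Donsker--Varadhan change-of-measure inequality $\log \Ebb_{Q}[\exp(f)]\geq \Ebb_{P}[f] - D(P\|Q)$; and (iii) Markov's inequality applied to the resulting unit-mean exponentials. To handle the meta-environment shift that affects only the $\beta N$ source tasks, I will also perform a change of measure from $P'_T$ to $P_T$ on those coordinates, which injects the log-likelihood-ratio sum $\sum_{i=1}^{\beta N}\log(P_T(T_i)/P'_T(T_i))$ into the bound.

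For the environment-level term $\Ebb_{P_{U|\mset}}[\Lscr'_g(U) - \Lscr_{t,g}(U|\mset,T_{1:N})]$, I start from Assumption~\ref{assum:2}(b): under $(T,Z^M)\sim P'_T P^M_{Z|T}$ the centered variable $X_i(u):=L_g(u|Z^M_i,T_i) - \Lscr'_g(u)$ is $\sigma^2$-sub-Gaussian, so the weighted sum $S(u) = \frac{\alpha}{\beta N}\sum_{i=1}^{\beta N} X_i(u) + \frac{1-\alpha}{(1-\beta)N}\sum_{i=\beta N+1}^N X_i(u) = \Lscr_{t,g}(u|\mset,T_{1:N})-\Lscr'_g(u)$ is sub-Gaussian with parameter $\sigma^2_{\mathrm{env}}=\sigma^2(\alpha^2/(\beta N)+(1-\alpha)^2/((1-\beta)N))$ when every $T_i$ is drawn from $P'_T$. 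Using Assumption~\ref{assum:2a}, changing measure on the source coordinates from $P'_T$ to $P_T$ multiplies the integrand by $\prod_{i=1}^{\beta N}(dP'_T/dP_T)(T_i)$ and yields, under the true law $\nu=P_{T_{1:N}}P_{\mset|T_{1:N}}$, the unit-mean inequality $\Ebb_{\nu}[\exp(\lambda S(u) - \tfrac{\lambda^2 \sigma^2_{\mathrm{env}}}{2} - \sum_{i=1}^{\beta N}\log(P_T(T_i)/P'_T(T_i)))]\leq 1$ for every $u$ and every $\lambda$. Integrating over $u\sim Q_U$, applying Donsker--Varadhan to transfer to $P_{U|\mset}$, using Markov at level $\delta/2$, and optimizing over $\lambda$ (for both signs, to handle the absolute value) delivers the first term on the right-hand side of \eqref{eq:PACBayesian_bound1}.

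For the within-task term I bound, for each task index $i$ separately, the quantity $\Ebb_{P_{U|\mset}}[\Delta_i(U)]$ where $\Delta_i(u):=\Ebb_{P_{W|Z^M_i,u}}[L_g(W|T_i)-L_t(W|Z^M_i)]$. By Assumption~\ref{assum:2}(a), for each fixed $(u,w)$ the variable $L_g(w|T_i)-L_t(w|Z^M_i)$ is $(\delta_{T_i}^2/M)$-sub-Gaussian under $Z^M_i\sim P^M_{Z|T_i}$, which yields the canonical exponential-moment bound. I then apply Donsker--Varadhan twice: first after integrating over $w\sim Q_{W|U=u}$ and passing to the base-learner posterior $P_{W|Z^M_i,u}$, which produces the term $\Ebb_{P_{U|\mset}}[D(P_{W|U,Z^M_i}\|Q_{W|U})]$; and second after integrating over $u\sim Q_U$ and passing to the meta-learner $P_{U|\mset}$, which produces $D(P_{U|\mset}\|Q_U)$. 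Applying Markov at level $\delta/(4\beta N)$ for each of the $\beta N$ source tasks and $\delta/(4(1-\beta)N)$ for each of the $(1-\beta)N$ target tasks, followed by optimization over $\lambda$, reproduces each summand in the second and third terms of \eqref{eq:PACBayesian_bound1}. A union bound across the environment event and the $N$ within-task events yields total failure probability $\delta/2 + \beta N\cdot \delta/(4\beta N) + (1-\beta)N\cdot \delta/(4(1-\beta)N) = \delta$, as required.

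The main technical obstacles are twofold. First, in the environment-level step one must fuse two distinct changes of measure---the PAC-Bayesian one that introduces $D(P_{U|\mset}\|Q_U)$ and the source-to-target one that introduces the log-likelihood-ratio sum---into a single unit-mean exponential, so that both contributions appear under the same square root after the $\lambda$-optimization; Assumption~\ref{assum:2a} guarantees absolute continuity of $P_T$ with respect to $P'_T$ so that this measure change is well defined. Second, the within-task step requires two nested Donsker--Varadhan changes of measure with distinct priors $Q_U$ and $Q_{W|U}$, together with a Fubini interchange between the data expectation and the prior integration, while respecting that $P_{W|Z^M_i,U}$ is data-dependent; Assumption~\ref{assum:2a} is precisely what guarantees that the relevant KL divergences are almost-surely finite. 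The uniformity of the bound over arbitrary meta-learners $P_{U|\mset}$ is automatic from this PAC-Bayesian machinery, because all exponential inequalities are established pointwise in $u$ before integrating against the data-independent priors.
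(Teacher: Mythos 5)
Your proposal is correct and follows essentially the same route as the paper's proof: the same decomposition \eqref{eq:decomposition-2}, the same source-to-target change of measure on the $\beta N$ source task coordinates to inject the log-likelihood-ratio sum, the same two nested changes of measure (which the paper phrases via mismatched information densities followed by Jensen's inequality, equivalent to your Donsker--Varadhan formulation) yielding $D(P_{U|\mset}\|Q_U)$ and $\Ebb_{P_{U|\mset}}[D(P_{W|U,Z^M_i}\|Q_{W|U})]$, and the identical allocation $\delta/2$, $\delta/(4\beta N)$, $\delta/(4(1-\beta)N)$ in the union bound. No gaps.
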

\begin{proof}
See Appendix~\ref{app:PAC-Bayesianbound}.\end{proof}

The first term in the upper bound \eqref{eq:PACBayesian_bound1} captures the environment-level generalization gap through the log-likelihood ratio $\log (P_{T}(T_i)/P'_{T}(T_i))$, which accounts for the meta-environment shift, and through the KL divergence $D(P_{U|\mset}||Q_U)$. This quantifies the sensitivity of the meta-learner $P_{U|\mset}$ to the meta-training set $\mset$ through its divergence with respect to the data-independent hyper-prior $Q_U$. The second term of \eqref{eq:PACBayesian_bound1} captures the generalization gap within the task data from source environment in terms of the average KL divergence $\Ebb_{P_{U|\mset}}[ D(P_{W|U,Z^M_i}||Q_{W|U})]$ between model posterior and prior distributions together with $D(P_{U|\mset}||Q_U)$, while the last term accounts for the generalization gap within the task data from the target environment. We note that the average KL divergence, $\Ebb_{P_{U|\mset}}[ D(P_{W|U,Z^M_i}||Q_{W|U})]$, quantifies the sensitivity of the base learner $P_{W|Z^M,U}$ to the training set $Z^M$ through its divergence with respect to the data-independent prior $Q_{W|U}$ for a hyperparameter $U \sim P_{U|\mset}$.

The bound in \eqref{eq:PACBayesian_bound1} can be relaxed to obtain the following looser bound that is more amenable to optimization, as we will discuss in the next subsection.
\begin{corollary}\label{cor:PACBayesian_looser}
In the setting of Theorem~\ref{thm:PAC-Bayesianbound}, the following inequality holds with probability at least $1-\delta$ over $(T_{1:N},Z^M_{1:N})\sim P_{T_{1:N}}P_{\mset|T_{1:N}}$ for $\beta \in (0,1)$,
\begin{align}
\Ebb_{P_{U|\mset}}[\Lscr_g'(U)]& \leq \Ebb_{P_{U|\mset}}\biggl[\Lscr_t(U|\mset)+\frac{\alpha}{\beta NM} \sum_{i=1}^{\beta N} D(P_{W|Z^M_i,U}||Q_{W|U})\non\\
&+\frac{1-\alpha}{(1-\beta) NM} \sum_{i=\beta N+1}^{ N} D(P_{W|Z^M_i,U}||Q_{W|U}) \biggr] +\biggl(\frac{1}{N}+\frac{1}{M} \biggr)D(P_{U|\mset}||Q_U)+\Psi, \label{eq:PACBayesian_bound_looser}
\end{align} 
where we have defined the quantity
\begin{align}
\Psi&=\frac{\sigma^2}{2}\biggl(\frac{\alpha^2}{\beta}+\frac{(1-\alpha)^2}{1-\beta} \biggr) +\frac{1}{N} \sum_{i=1}^{\beta N}\log \frac{P_{T}(T_i)}{P'_{T}(T_i)}+\frac{1}{N} \log \frac{2}{\delta} +\frac{\alpha}{\beta N} \sum_{i=1}^{\beta N} \frac{\delta_{T_i}^2}{2} +\frac{\alpha}{M} \log \frac{4\beta N}{\delta}\non \\&+\frac{1-\alpha}{(1-\beta) N} \sum_{i=\beta N +1}^{ N} \frac{\delta_{T_i}^2}{2}+\frac{1-\alpha}{M} \log \frac{4(1-\beta) N}{\delta}.
\end{align}
\end{corollary}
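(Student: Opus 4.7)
\textbf{Proof proposal for Corollary~\ref{cor:PACBayesian_looser}.} The plan is to derive the corollary directly from Theorem~\ref{thm:PAC-Bayesianbound} by applying a standard weighted arithmetic--geometric mean inequality to each of the three square-root terms on the right-hand side of \eqref{eq:PACBayesian_bound1}. Specifically, I will use the elementary fact that, for any $a,b\ge 0$ and any $\lambda>0$,
\begin{equation}
\sqrt{ab}\;\le\;\frac{a}{2\lambda}+\frac{\lambda b}{2}, \label{eq:AMGM}
\end{equation}
and then choose the free parameter $\lambda$ in each term so that the resulting pre-factors match the coefficients appearing in \eqref{eq:PACBayesian_bound_looser}. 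Since Theorem~\ref{thm:PAC-Bayesianbound} yields an upper bound on $|\Ebb_{P_{U|\mset}}[\Delta\Lscr'(U|\mset)]|$ that holds with probability at least $1-\delta$, and since $\Ebb_{P_{U|\mset}}[\Lscr'_g(U)]=\Ebb_{P_{U|\mset}}[\Lscr_t(U|\mset)]+\Ebb_{P_{U|\mset}}[\Delta\Lscr'(U|\mset)]$, the same high-probability event will deliver the bound in \eqref{eq:PACBayesian_bound_looser} once the relaxations are performed and the terms regrouped.

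For the first square-root term in \eqref{eq:PACBayesian_bound1}, I would apply \eqref{eq:AMGM} with $a=2\sigma^2(\alpha^2/(\beta N)+(1-\alpha)^2/((1-\beta)N))$, $b$ equal to the bracketed sum that involves the log-likelihood ratios, $D(P_{U|\mset}\|Q_U)$ and $\log(2/\delta)$, and the choice $\lambda=2/N$. This converts the geometric-mean shape into the sum $\tfrac{\sigma^2}{2}(\alpha^2/\beta+(1-\alpha)^2/(1-\beta))+\tfrac{1}{N}(\sum_{i=1}^{\beta N}\log(P_T(T_i)/P'_T(T_i))+D(P_{U|\mset}\|Q_U)+\log(2/\delta))$, which yields the $\sigma^2$-, log-ratio-, $\log(2/\delta)$- and $(1/N)D(P_{U|\mset}\|Q_U)$-contributions appearing inside $\Psi$ and in the $D(P_{U|\mset}\|Q_U)$ coefficient on the right-hand side of \eqref{eq:PACBayesian_bound_looser}.

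For the second and third square-root terms (the per-task contributions from source and target meta-training tasks, respectively), I would apply \eqref{eq:AMGM} inside each summand with $a=2\delta_{T_i}^2/M$, $b$ equal to the bracketed KL+log term indexed by $i$, and the choice $\lambda=2/M$. This gives, for each summand, the upper bound $\delta_{T_i}^2/2+(1/M)\cdot b$. Distributing the pre-factors $\alpha/(\beta N)$ and $(1-\alpha)/((1-\beta)N)$ and using that $\sum_{i=1}^{\beta N}1=\beta N$ and $\sum_{i=\beta N+1}^{N}1=(1-\beta)N$, the $D(P_{U|\mset}\|Q_U)$ contributions from these two terms collapse to $\alpha/M$ and $(1-\alpha)/M$, respectively; summed, they produce the remaining $(1/M)D(P_{U|\mset}\|Q_U)$ contribution that completes the coefficient $(1/N+1/M)$ in \eqref{eq:PACBayesian_bound_looser}. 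The $\delta_{T_i}^2/2$ terms populate the corresponding sums inside $\Psi$, the $\log$-type remainders produce the $(\alpha/M)\log(4\beta N/\delta)$ and $((1-\alpha)/M)\log(4(1-\beta)N/\delta)$ contributions of $\Psi$, and the $\Ebb_{P_{U|\mset}}[D(P_{W|U,Z^M_i}\|Q_{W|U})]$ terms, after moving the expectation outside via linearity, give exactly the two empirical KL averages appearing on the right-hand side of \eqref{eq:PACBayesian_bound_looser}.

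The proof is conceptually routine once the right choices of $\lambda$ are identified; the only mildly delicate point is bookkeeping, namely tracking which terms go inside the expectation $\Ebb_{P_{U|\mset}}[\cdot]$ (the training loss and per-task KLs, since these are functions of $U$) versus those that stay outside (the hyper-KL $D(P_{U|\mset}\|Q_U)$, the log-likelihood ratios, the sub-Gaussianity constants and the $\log(1/\delta)$ remainders), and in ensuring that the $1-\delta$ event delivered by Theorem~\ref{thm:PAC-Bayesianbound} is preserved throughout these deterministic manipulations. No further probabilistic argument is needed, so I anticipate no substantive obstacle beyond the algebraic simplification.
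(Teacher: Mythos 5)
Your proposal is correct and is essentially the paper's own argument in a slightly different packaging: the paper obtains the corollary by substituting the fixed values $\lambda=-N$ and $\lambda=-M$ into the pre-optimization parametric inequalities \eqref{eq:envlevl} and \eqref{eq:tasklevelineq} from the proof of Theorem~\ref{thm:PAC-Bayesianbound}, which is algebraically identical to your step of first optimizing to get the square-root bound of \eqref{eq:PACBayesian_bound1} and then relaxing each $\sqrt{ab}$ term via the weighted AM--GM inequality with $\lambda=2/N$ and $\lambda=2/M$. Your choices of $\lambda$, the resulting bookkeeping of which terms sit inside $\Ebb_{P_{U|\mset}}[\cdot]$, and the observation that the $1-\delta$ event is preserved under these deterministic manipulations all match the paper's derivation.
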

\begin{proof}
To obtain the required bound, we proceed as in the proof of Theorem~\ref{thm:PAC-Bayesianbound}. To bound the first difference of \eqref{eq:decomposition-2}, we use the upper bound in \eqref{eq:envlevl} with $\lambda=-N$. To bound the second difference, we use the upper bound in \eqref{eq:tasklevelineq} with $\lambda=-M$. Combining the resultant bounds via the union bound and rearranging results in \eqref{eq:PACBayesian_bound_looser}.
\end{proof}

\subsection{ Information Meta-Risk Minimization (IMRM) for Transfer Meta-Learning}\label{sec:IMRM}
For fixed base learner $P_{W|Z^M,U}$ and given prior $Q_{W|U}$ and hyper-prior $Q_U$ distributions, the PAC-Bayesian bound in \eqref{eq:PACBayesian_bound_looser} holds for any meta-learner $P_{U|\mset}$. Consequently, following the principle of information risk minimization \cite{zhang2006information}, one can design a meta-learner $P_{U|\mset}$ so as to minimize the upper bound \eqref{eq:PACBayesian_bound_looser} on the transfer meta-generalization loss. As compared to EMRM, this approach accounts  for the transfer meta-generalization gap, and can hence outperform EMRM in terms of meta-generalization performance. The same idea was explored in \cite{rothfuss2020pacoh} for conventional meta-learning, \ie, for the special case when $P_T=P'_T$.

To proceed, we  consider $\beta \in (0,1)$ and denote
\begin{align}
\Lscr(u,\mset)&=\Lscr_t(u|\mset)+\frac{\alpha}{\beta NM} \sum_{i=1}^{\beta N} D(P_{W|Z^M_i,U=u}||Q_{W|U=u})\non \\&+\frac{1-\alpha}{(1-\beta) NM} \sum_{i=\beta N+1}^{ N} D(P_{W|Z^M_i,U=u}||Q_{W|U=u})
\end{align}
as the meta-training loss regularized by the average KL divergence $ D(P_{W|Z^M_i,U=u}||Q_{W|U=u})$ between the base learner output and the prior distribution $Q_{W|U=u}$ over the base learner input data from source and target environments. The IMRM meta-learner is then defined as any algorithm that solves the optimization problem
\begin{align}
P^{{\rm IMRM}}_{U|\mset}=\arg \min_{P_{U|\mset} \in \Pscr(\Uscr)} \biggl(\Ebb_{P_{U|\mset}}[\Lscr(U,\mset)] +\biggl(\frac{1}{N}+\frac{1}{M} \biggr)D(P_{U|\mset}||Q_U)\biggr). \label{eq:optimizing_metalearner}
\end{align}
For fixed $N,M$, $Q_U,Q_{W|U}$ and base learner $P_{W|Z^M,U}$, the IMRM meta-learner can be  expressed as
\begin{align}
P^{{\rm IMRM}}_{U|\mset}(u)\propto Q_U(u)\exp\biggl(-\frac{NM}{N+M}\Lscr(u,\mset) \biggr), \label{eq:Gibbsmetalearner}
\end{align} where the normalization constant is given by $\Ebb_{Q_U}\Bigl[\exp\Bigl(-NM\Lscr(U;\mset)/(N+M) \Bigr) \Bigr]$.

For a given meta-training set, EMRM outputs the single value of the hyperparameter $u \in \Uscr$ that minimizes the meta-training loss \eqref{eq:tfr_metatrainingloss}. In contrast, the IMRM meta-learner  \eqref{eq:Gibbsmetalearner} updates the prior belief $Q_U$ after observing meta-training set, producing a distribution in the hyperparameter space. Given the significance of the meta-learning criterion \eqref{eq:optimizing_metalearner} as an upper bound on the transfer meta-generalization loss, the optimizing distribution \eqref{eq:Gibbsmetalearner} captures the impact of the epistemic uncertainty related to the limited availability of the meta-training data.  In line with this discussion, it can be seen from \eqref{eq:optimizing_metalearner} that as $M, N \rightarrow \infty$ with $M/N$ equal to a constant, the IMRM meta-learner tends to EMRM. 

To implement the proposed IMRM meta-learner, we adopt one of the two approaches. The first, referred to as \textit{IMRM-mode}, selects a single hyperparameter centered at the mode of \eqref{eq:Gibbsmetalearner} as
\begin{align}
U^{{\rm IMRM-mode}}(\mset)= \arg \max_{u \in \Uscr} \hspace{0.2cm} Q_U(u)\exp\biggl(-\frac{NM}{N+M}\Lscr(u;\mset)\biggr).
\end{align} IMRM-mode is akin to Maximum A Posteriori (MAP) inference in conventional machine learning. Alternatively, we obtain one sample from the IMRM meta-learner \eqref{eq:Gibbsmetalearner} for use by the base learner and then average the obtained transfer meta-generalization loss as per definition \eqref{eq:avgtfr_metagengap}. This can be in practice done by using Monte Carlo methods such as Metropolis-Hastings or Langevin dynamics \cite{bishop2006pattern}.
As mentioned, this approach, referred to \textit{IMRM-Gibbs}, reduces to the EMRM in the limit as $M,N \rightarrow \infty $ when $M/N$ is a constant.

\section{Single-Draw Probability Bounds on Transfer Meta-Learning}\label{sec:singledraw}
So far, we have considered the performance of meta-learning procedures defined by a stochastic mapping $P_{U|\mset}$ on average over distributions $P_{U|\mset}$. As discussed in the context of IMRM, this implies that the performance metric of interest is to be evaluated by averaging over realizations of the hyperparameter $U \sim P_{U|\mset}$. It is, however, also of interest to quantify performance guarantees under the assumption that a single draw $U \sim P_{U|\mset}$ is fixed and used throughout. Similar single-draw bounds have been derived  for conventional learning in \cite{hellstrom2020generalization}. With this goal in mind,
in this section, we present novel single-draw probability bounds for transfer meta-learning. The bound takes the following form: With probability at least $1-\delta$, with $\delta \in (0,1)$, over $(T_{1:N},\mset,U)\sim P_{T_{1:N}}P_{\mset|T_{1:N}}P_{U|\mset}$, the transfer meta-generalization gap satisfies the bound
\begin{align}
\bigl | \Delta \Lscr'(U|\mset) \bigr | \leq \epsilon. \label{eq:singledraw_form}
\end{align}
Towards the evaluation of single-draw bounds of this form, we resort again to the decomposition \eqref{eq:decomposition-2} used to derive the PAC-Bayesian bound in Section~\ref{sec:PAC-Bayesian bound}. We use the following  \textit{mismatched information density}
\begin{align}
\jmath(U,\mset)=\log \frac{\dP_{U|\mset}(U|\mset)}{\dQ_{U}(U)},
\end{align} which quantifies the evidence for the hyperparameter $U$ to be generated according to the meta-learner $P_{U|\mset}$ based on meta-training set, rather than being generated according to the hyper-prior distribution $Q_U$.
Considering Assumption~\ref{assum:2} on loss functions and Assumption~\ref{assum:2a} on information densities then yield the following single-draw probability bound for transfer meta-learning.
\begin{theorem}\label{thm:singledraw_bound}
For a fixed base learner $P_{W|Z^M,U}$, let $Q_U \in \Pscr(\Uscr)$ be a hyper-prior distribution over the space of hyperparameters and $Q_{W|U=u} \in \Pscr(\Wscr)$ be a prior distribution over the space of model parameters for each $u \in \Uscr$ and $\beta \in (0,1)$. Then,
under Assumption~\ref{assum:2} and Assumption~\ref{assum:2a}, the following inequality holds uniformly for any meta-learner $P_{U|\mset}$ with probability at least $1-\delta$, $\delta \in (0,1)$, over $(T_{1:N},\mset,U)\sim P_{T_{1:N}}P_{\mset|T_{1:N}}P_{U|\mset}$
\begin{align}
\Bigl| \Delta \Lscr'(U|\mset)\Bigr| &\leq \sqrt{ 2 \sigma^2 \biggl( \frac{ \alpha^2 }{\beta N}+\frac{(1-\alpha)^2 }{(1-\beta) N}\biggr) \biggl( \sum_{i=1}^{\beta N}\log \frac{P_{T}(T_i)}{P'_T(T_i)}+\jmath(U,\mset)+ \log \frac{2}{\delta}\biggr)}\non\\
&+ \frac{\alpha}{\beta N}\sum_{i=1}^{\beta N}\sqrt{\frac{2 \delta_{T_i}^2}{M}\biggl(D(P_{W|Z^M_i,U}||Q_{W|U})+\jmath(U,\mset)+\log \frac{4\beta N}{\delta}\biggr) } \non \\
&+\frac{1-\alpha}{(1-\beta) N}\sum_{i=\beta N+1}^{ N}\sqrt{\frac{2 \delta_{T_i}^2}{M}\biggl(D(P_{W|Z^M_i,U}||Q_{W|U})+\jmath(U,\mset)+\log \frac{4(1-\beta)N}{\delta}\biggr) }. \label{eq:singledraw_bound}
\end{align}
\end{theorem}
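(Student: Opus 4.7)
The plan is to mirror the derivation of the PAC-Bayesian bound in Theorem 4.1, but replace the step where one integrates against $P_{U|\mset}$ (and thereby obtains $D(P_{U|\mset}\|Q_U)$ via Jensen's inequality) by a change-of-measure step that produces the mismatched information density $\jmath(U,\mset)$. As in the PAC-Bayesian case, I would begin from the decomposition \eqref{eq:decomposition-2} into the environment-level gap $\Lscr'_g(U)-\Lscr_{t,g}(U|\mset,T_{1:N})$ and the within-task gap $\Lscr_{t,g}(U|\mset,T_{1:N})-\Lscr_t(U|\mset)$, bound each in high probability, and combine via the union bound (each at confidence $1-\delta/2$, further split across the $N$ tasks for the within-task piece).

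For the environment-level gap, I would start from the exponential inequality \eqref{eq:envlevl} used to derive the first term of \eqref{eq:PACBayesian_bound1}, but carry out the change of measure \emph{before} taking the expectation over the hyperparameter. Concretely, using Assumption~\ref{assum:2}$(b)$ together with the identity $dP_{U|\mset}/dQ_U = \exp(\jmath(U,\mset))$ and the sequence of changes of measure $P'_{T_i}\to P_{T_i}$ for $i\le \beta N$ (which produces $\sum_{i=1}^{\beta N}\log(P_T(T_i)/P'_T(T_i))$), I would obtain the inequality
\begin{align*}
\Ebb_{P_{T_{1:N}}P_{\mset|T_{1:N}}P_{U|\mset}}\Biggl[\exp\Biggl(& \lambda\bigl(\Lscr'_g(U)-\Lscr_{t,g}(U|\mset,T_{1:N})\bigr)\\
& -\tfrac{\lambda^2\sigma^2}{2}\Bigl(\tfrac{\alpha^2}{\beta N}+\tfrac{(1-\alpha)^2}{(1-\beta)N}\Bigr)-\sum_{i=1}^{\beta N}\log\tfrac{P_T(T_i)}{P'_T(T_i)}-\jmath(U,\mset)\Biggr)\Biggr]\le 1.
\end{align*}
Markov's inequality then gives, with probability at least $1-\delta/2$, a quadratic-in-$\lambda$ upper bound on $\lambda(\Lscr'_g(U)-\Lscr_{t,g}(U|\mset,T_{1:N}))$; optimizing over $\lambda\in\Real$ yields the first square-root term of \eqref{eq:singledraw_bound}.

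For the within-task gap, I would proceed task by task. For each $i$, the gap $L_g(U|Z^M_i,T_i)-L_t(U|Z^M_i)$ is itself an expectation under $P_{W|Z^M_i,U}$, so the natural exponential inequality (Assumption~\ref{assum:2}$(a)$, $M$ i.i.d.\ samples) is of PAC-Bayesian type with respect to $W$ against the prior $Q_{W|U}$, producing the KL term $D(P_{W|Z^M_i,U}\|Q_{W|U})$ (without an outer $\Ebb_{P_{U|\mset}}$, since $U$ is no longer averaged). Performing the change of measure $Q_U\to P_{U|\mset}$ introduces an additional $\jmath(U,\mset)$ term inside the exponential. Applying Markov's inequality at confidence $1-\delta/(4\beta N)$ for $i\le \beta N$ and $1-\delta/(4(1-\beta)N)$ for $i>\beta N$, and then optimizing over the free parameter, yields the per-task square-root terms of \eqref{eq:singledraw_bound}; the weighted sum combined with a union bound over the $N$ tasks and with the environment-level bound completes the proof.

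The main obstacle, in my view, is bookkeeping the change of measure cleanly so that each source of randomness is ``paid for'' exactly once: the tasks $T_{1:N}$ contribute the log-likelihood ratio $\log(P_T/P'_T)$ only for the source environment tasks, the hyperparameter $U$ contributes $\jmath(U,\mset)$ in \emph{both} the environment-level and each within-task exponential inequality (since the same $U$ appears everywhere), and the per-task posterior $P_{W|Z^M_i,U}$ against $Q_{W|U}$ must retain a KL structure rather than an information-density structure, because the loss at the task level is already an expectation over $W$ and we are not conditioning on a single draw of $W$. Getting the constants and the union bound budgets to align with the stated right-hand side of \eqref{eq:singledraw_bound} is where the bulk of the care is needed.
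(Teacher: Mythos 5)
Your proposal is correct and follows essentially the same route as the paper: the same decomposition \eqref{eq:decomposition-2}, the same exponential inequalities (the environment-level one of Lemma~\ref{lem:expinequality_PB_env} with $\jmath(U,\mset)$ retained rather than converted to $D(P_{U|\mset}\|Q_U)$, and the task-level one of Lemma~\ref{lem:expinequality_PB_task} with Jensen applied only with respect to $P_{W|Z^M_i,U}$ so that the $W$-level term becomes a KL divergence while $\jmath(U,\mset)$ survives), followed by Markov's inequality, optimization over $\lambda$, and the identical union-bound budget $\delta/2$, $\delta/(4\beta N)$, $\delta/(4(1-\beta)N)$. No gaps to report.
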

\begin{proof}
See Appendix~\ref{app:singledraw_bound}.
\end{proof}

As in the PAC-Bayesian bound \eqref{eq:PACBayesian_bound1}, the upper bound  in \eqref{eq:singledraw_bound} comprises of three contributions: $(i)$ the environment-level generalization gap, which is captured by the meta-environment shift term $\log (P_T(T_i)/P'_T(T_i))$ and by the mismatched information density $\jmath(U,\mset)$, with the latter quantifying the sensitivity of the meta-learner $P_{U|\mset}$ to the meta-training set;  $(ii)$ the generalization within the task drawn from source environment, which is accounted for by the KL divergence $D(P_{W|Z^M_i,U}||Q_{W|U})$ quantifying the sensitivity of the base learner $P_{W|Z^M,U}$ to the training set $Z^M$ through its divergence with respect to the prior distribution $Q_{W|U}$, along with the mismatched information density $\jmath(U,\mset)$, and finally, $(iii)$ the generalization gap within the task data from target environment, which is similarly captured through the KL divergence $D(P_{W|Z^M_i,u}||Q_{W|U})$ and the mismatched information density $\jmath(U,\mset)$.

The bound in \eqref{eq:singledraw_bound} can be specialized to the case of conventional meta-learning as given in the following corollary, which appears also to be a novel result.
\begin{corollary}
Assume that the source and target task distributions coincide, \ie, $P_T=P'_T$, and $\alpha=\beta=1$. Then, under the setting of Theorem~\ref{thm:singledraw_bound}, the following bound holds with probability at least $1-\delta$, $\delta \in (0,1)$, over $(T_{1:N},\mset,U) \sim P_{T_{1:N}}P_{\mset|T_{1:N}}P_{U|\mset}$
\begin{align}
\biggl| \Delta \Lscr(U|\mset)\biggr| &\leq \sqrt{ \frac{2 \sigma^2}{N}  \biggl(\jmath(U,\mset)+ \log \frac{2}{\delta}\biggr)}\non\\
&+ \frac{1}{N}\sum_{i=1}^{ N}\sqrt{\frac{2 \delta_{T_i}^2}{M}\biggl(D(P_{W|Z^M_i,U}||Q_{W|U})+\jmath(U,\mset)+\log \frac{2 N}{\delta}\biggr) }.
\end{align}
\end{corollary}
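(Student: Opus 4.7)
The plan is to directly specialize the derivation of Theorem~\ref{thm:singledraw_bound} under the restrictions $\alpha=\beta=1$ and $P_T=P'_T$, rather than trying to evaluate the bound \eqref{eq:singledraw_bound} at $\beta=1$ (where the second sum has indeterminate form $0/0$ and the term $\log(4(1-\beta)N/\delta)$ diverges). First, I would observe that the three simplifications operate independently: setting $\alpha=1$ kills the $(1-\alpha)$ prefactors in \eqref{eq:decomposition-2}; setting $\beta=1$ means no tasks are drawn from the target environment, so the within-task gap is only over the $N$ source tasks; and setting $P_T=P'_T$ ensures that the log-likelihood ratio $\log(P_T(T_i)/P'_T(T_i))=0$ almost surely for every $i$, eliminating the meta-environment shift term from the bound.

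Next, I would revisit the decomposition \eqref{eq:decomposition-2} of $\Delta\Lscr(U|\mset)=\Lscr_g(U)-\Lscr_t(U|\mset)$ into the environment-level gap $\Lscr_g(U)-\Lscr_{t,g}(U|\mset,T_{1:N})$ and the within-task gap $\Lscr_{t,g}(U|\mset,T_{1:N})-\Lscr_t(U|\mset)$. For the single-draw bound \eqref{eq:singledraw_form}, I would reuse the very same exponential inequalities from Appendix~\ref{app:expinequality_PACBayesian} that underlie Theorem~\ref{thm:singledraw_bound}: the environment-level inequality specialized with $P_T=P'_T$ reduces to a statement involving only $\jmath(U,\mset)$ (no log-likelihood ratio), and the within-task inequality applied to each source task $i=1,\ldots,N$ yields control in terms of $D(P_{W|Z^M_i,U}\|Q_{W|U})+\jmath(U,\mset)$. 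Each of these is converted to a high-probability statement via Markov's inequality in the same manner as in the proof of Theorem~\ref{thm:singledraw_bound}.

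The only subtle bookkeeping step is the reallocation of the probability budget in the union bound, which explains why the constant $\log(4\beta N/\delta)$ from the general theorem becomes $\log(2N/\delta)$ in the corollary rather than $\log(4N/\delta)$. In the general proof, the total budget $\delta$ is split as $\delta/2$ for the environment-level event plus $\delta/2$ for the within-task event, and the latter is further split as $\delta/4$ for source-task failures and $\delta/4$ for target-task failures, yielding $\delta/(4\beta N)$ per source task. When $\beta=1$ no target-task failures exist, so the full $\delta/2$ can be distributed across the $N$ source tasks, giving $\delta/(2N)$ per task and hence $\log(2N/\delta)$ inside the square root. Combining the two events by the union bound and applying the triangle inequality then yields the stated bound.

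I do not foresee a serious technical obstacle: the proof is essentially a careful re-instantiation of the argument already developed for Theorem~\ref{thm:singledraw_bound}, and the only thing to watch is to replace $\beta N$ by $N$ consistently and to drop the target-environment terms before taking the union bound, so that no indeterminate or diverging quantities appear. The mildest point of care is verifying that Assumption~\ref{assum:2a} continues to hold trivially (since $P_T=P'_T$ makes the absolute-continuity requirement on task distributions automatic), which ensures $\jmath(U,\mset)$ is well defined and the derivation goes through verbatim.
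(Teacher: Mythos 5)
Your proposal is correct and is essentially the proof the paper intends (the paper states this corollary without a separate proof, leaving it as a specialization of the argument in Appendix~\ref{app:singledraw_bound}). In particular, you correctly identify the one non-trivial point: the constant $\log(2N/\delta)$ cannot be obtained by naively substituting $\beta=1$ into \eqref{eq:singledraw_bound}, but follows from re-allocating the within-task probability budget $\delta/2$ over the $N$ source tasks once the target-environment failure events are absent, with the environment-level term retaining $\delta_0=\delta/2$ and the log-likelihood ratio vanishing under $P_T=P'_T$.
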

\section{Example} \label{sec:example}
In this section, we 
consider the problem of estimating the mean of a Bernoulli process based on a few samples. To this end, we adopt a base learner based on biased regularization and meta-learn the bias as the hyperparameter \cite{denevi2020advantage}.
\subsection{Setting}
 The data distribution for each task is given as $P_{Z|T=\tau} \sim \Ber(\tau)$ for a task-specific mean parameter $\tau \in [0,1]$. For meta-training, we sample tasks from the source task distribution $\tau \sim P_{T}$ given by a beta distribution $\Beta(\tau;a,b)$ with shape parameters $a,b>0$, while
the target task distribution $\tau \sim P'_T$ encountered during meta-testing is $\Beta(\tau;a',b')$ with generally different shape parameters $a',b'>0$. We recall that the mean of a random variable $\tau \sim {\rm Beta}(\tau;a,b)$ is given as $R(a,b)=a/(a+b)$ and the variance is $V(a,b)=ab/((a+b)^2(a+b+1))$. For any task $\tau$, the base learner uses training data, distributed  i.i.d. from $\Ber(\tau)$, to determine the model parameter $W$, which is used as a predictor of a new observation $Z \sim \Ber(\tau)$ at test time. The loss function  $l(w,z)=(w-z)^2$ measures the quadratic error between prediction and the test input $z$.

 The base learner adopts a quadratic regularizer with bias given by a hyperparameter $u \in [0,1]$ \cite{denevi2020advantage}, and randomizes its output. Accordingly, the base learner computes
the empirical average $
D_i=\frac{1}{M}\sum_{j=1}^{M} Z^M_{i,j},
$ over the training set, where $Z^M_{i,j}$ denotes the $j$th data sample in the training set of $i$th task. Then, it computes the convex combination
$
R_i(u)=\gamma D_i +(1-\gamma)u,
$
with the hyperparameter $u \in [0,1]$, where $\gamma \in [0,1]$ is a fixed scalar. Finally, it outputs a random model parameter $W$ with mean $R_i(u)$ by drawing $W$ as \begin{align}
P_{W|Z^M_i,U=u}(w)= {\rm Beta}(w;c R_i(u),c(1-R_i(u))),
\label{eq:example_phi}
\end{align}
where $c>0$ is fixed and it determines the variance $V_i(u):=V(cR_i(u),c (1-R_i(u)))$ of the output of the base learner.


The meta-training loss \eqref{eq:tfr_metatrainingloss} can be directly computed as
\begin{align}
\Lscr_t(u|\mset)&=\frac{\alpha}{\beta N } \sum_{i=1}^{\beta N} \Bigl(V_i(u)+R_i(u)^2-2R_i(u)D_i+\sum_{j=1}^M \frac{1}{M} Z_{i,j}^2 \Bigr)\non \\&+\frac{(1-\alpha)}{(1-\beta) N}\sum_{i=\beta N+1}^{N} \Bigl(V_i(u)+R_i(u)^2-2R_i(u)D_i+\sum_{j=1}^M \frac{1}{M} Z_{i,j}^2 \Bigr),
\end{align}
while the transfer meta-generalization loss \eqref{eq:tfr_metatestloss} evaluates as
\begin{align}
\Lscr'_g(u)&=u(1-\gamma) \biggl( \frac{1}{c+1}+u(1-\gamma)\frac{c}{c+1}+2\gamma R'\frac{c}{c+1}-2R'\biggr)+\frac{\gamma R'}{c+1}\non \\&+\frac{\gamma^2c}{c+1}\biggl(\frac{R'}{M}+(V'+R'^2)\biggl(1-\frac{1}{M}\biggr)\biggr)-2\gamma(V'+R'^2)+R',
\end{align}
where $V'=V(a',b')$ is the variance and $R'=R(a',b')$ is the mean of the random variable $\tau \sim P'_T$.

\subsection{Experiments}
For the base learner as described above, we analyze the average  transfer meta-generalization gap $\Ebb_{P_{\mset}P_{U|\mset}}[\Delta \Lscr'(U|\mset)]$ in \eqref{eq:avgtfr_metagengap}  under EMRM \eqref{eq:EMRM} and IMRM \eqref{eq:Gibbsmetalearner}, as well as the average excess meta-risk
$\Ebb_{P_{U}}[\Lscr_g'(U)]-\min_{u \in [0,1]}\Lscr'_g(u)$.  For IMRM, we consider a prior distribution $Q_{W|U=u}(w)  ={\rm \Beta}(w;cu,c-cu) $ in the space of model parameters and a hyper-prior distribution $Q_{U}(u)  ={\rm \Beta}(u;1.8,2.5)$ in the space of hyperparameters.
The prior distribution $Q_{W|U}$ indicates that, in the absence of data, the base learner should select a model parameter with mean equal to the hyperparameter $u$. For the IMRM, we consider both IMRM-mode and IMRM-Gibbs.

\begin{figure}[h!]
 \centering 
   \includegraphics[scale=0.5,trim=1in 1in 1in 1.2in,clip=true]{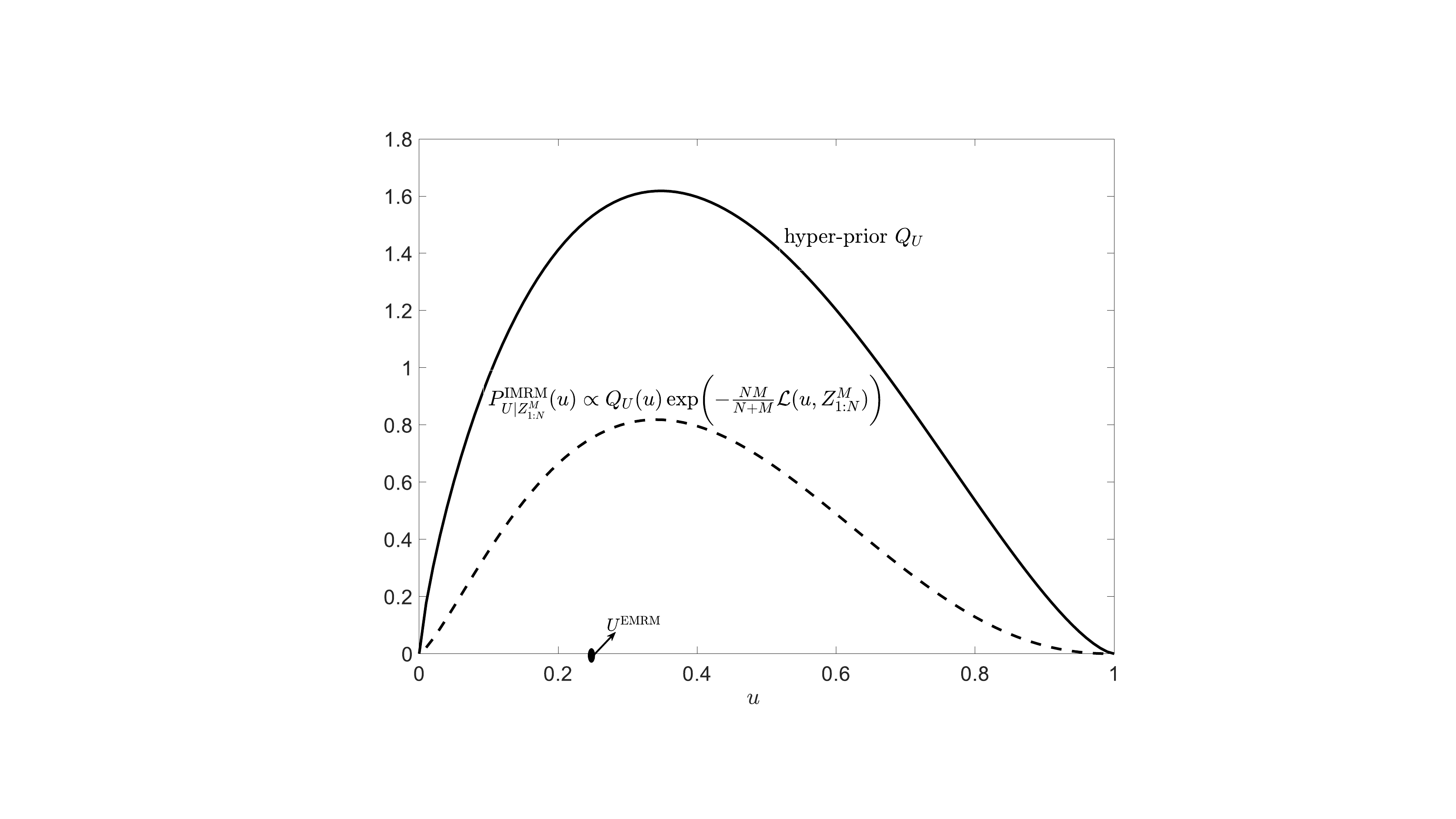} 
   \caption{Hyper-prior distribution $Q_U$, IMRM hyper-posterior $P^{{\rm IMRM}}_{U|\mset}$ in \eqref{eq:Gibbsmetalearner}, and EMRM solution \eqref{eq:tfr_metatrainingloss} for a given meta-training set $\mset$. ($M=10$ , $N=8$, $a=1.5$, $b=7.5$, $a'=4$, $b'=5$, $c=5$, $\alpha=0.1$, $\beta=0.6$, $\gamma=0.55$)  } \label{fig:exp7}
  \end{figure}
To start, in Figure~\ref{fig:exp7}, we illustrate the hyper-prior $Q_U(u)$, the IMRM hyper posterior $P^{{\rm IMRM}}_{U|\mset}$ in \eqref{eq:Gibbsmetalearner}, and the output of EMRM \eqref{eq:EMRM}. It is observed that, for the given values of $M=10$ and $N=8$ and for the given hyper-prior, the IMRM hyper-posterior retains information about the residual uncertainty on the value of the hyperparameter $u$, which is instead reduced to a point estimate based solely on meta-training data by EMRM.

In Figure~\ref{fig:exp5}, we then analyze the performance of EMRM, IMRM-mode and IMRM-Gibbs as a function of increasing values of $M$ and $N$, for a fixed ratio $M/N=0.85$, where $a=1.5,b=7.5$, $a'=4$, $b'=5$, $\gamma=0.55$, $\alpha=0.48$, $\beta=0.48$ and $c=5$. It can be seen that while EMRM yields, by definition, the smallest meta-training loss, IMRM improves the average transfer meta-generalization loss (Figure~\ref{fig:exp5}$(a)$) by decreasing the average transfer meta-generalization gap (Figure~\ref{fig:exp5}$(b)$). 
This gain is more significant for sufficiently small values of $M$ and $N$, since, as $M$ and $N$ increases, IMRM tends to EMRM.
 We also observe that there exists a non-vanishing generalization gap even at high values of $M$ and $N$. As discussed in Section~\ref{sec:EMRM_avgtfrgap}, this is caused by the meta-environment shift from $P_T$ to $P'_T$. Finally, IMRM-mode and IMRM-Gibbs are seen to perform similarly, with the former being generally advantageous in this example. This suggest that the main advantage of IMRM is due to the meta-regularizing effect of the KL term in \eqref{eq:optimizing_metalearner}. In the following two experiments, we adopt IMRM-mode.

\begin{figure}[h!]
 \centering 
%
   \subfloat[]{
   \includegraphics[scale=0.2,trim=3.2in 0.9in 3.2in 1in,clip=true,width=0.5\textwidth]{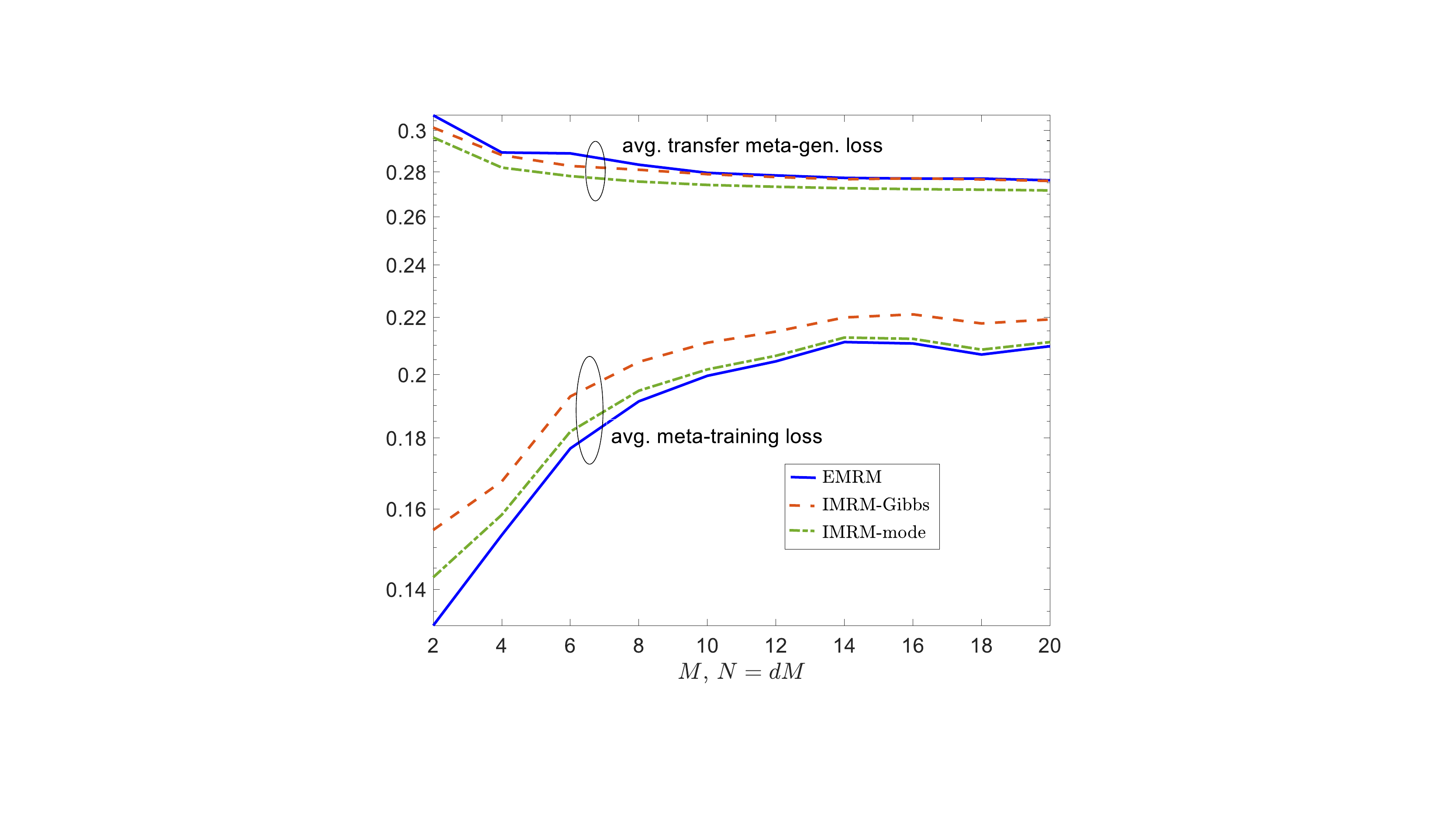}
   \label{fig:exp10_fig1}}
    \subfloat[]{
   \includegraphics[scale=0.18,trim=3.1in 0.9in 3.2in 1in,clip=true,width=0.5\textwidth]{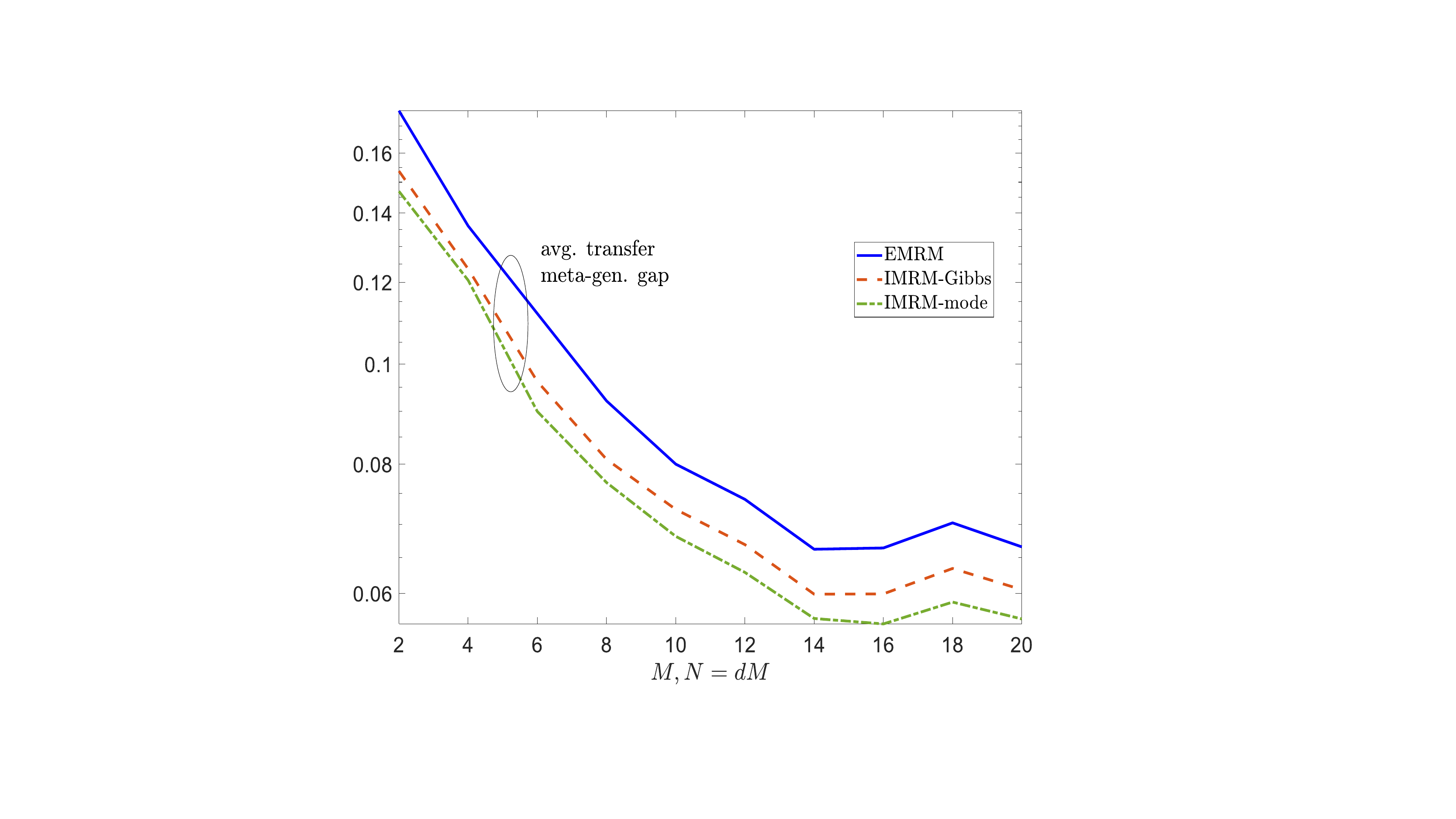}\label{fig:exp10_fig2}} 
    \hspace{0.2cm}
   %
   \caption{Average losses under EMRM, IMRM-mode and IMRM-Gibbs against increasing values of $M$ and $N=dM$ with $d=1/0.85$ for $a=1.5,b=7.5$, $a'=4$, $b'=5$, $\gamma=0.55$, $\alpha=0.48$, $\beta=0.48$ and $c=5$: $(a)$ average transfer meta-generalization loss \eqref{eq:tfr_metatestloss} and average meta-training loss \eqref{eq:tfr_metatrainingloss}, and $(b)$ average transfer meta-generalization gap \eqref{eq:avgtfr_metagengap}.} \label{fig:exp5}
   \end{figure}
   
   
   \begin{figure}[h!]
 \centering 
 \subfloat[]{
   \includegraphics[scale=0.5,trim=3in 0.6in 3in 0.7in,clip=true,width=0.4\textwidth]{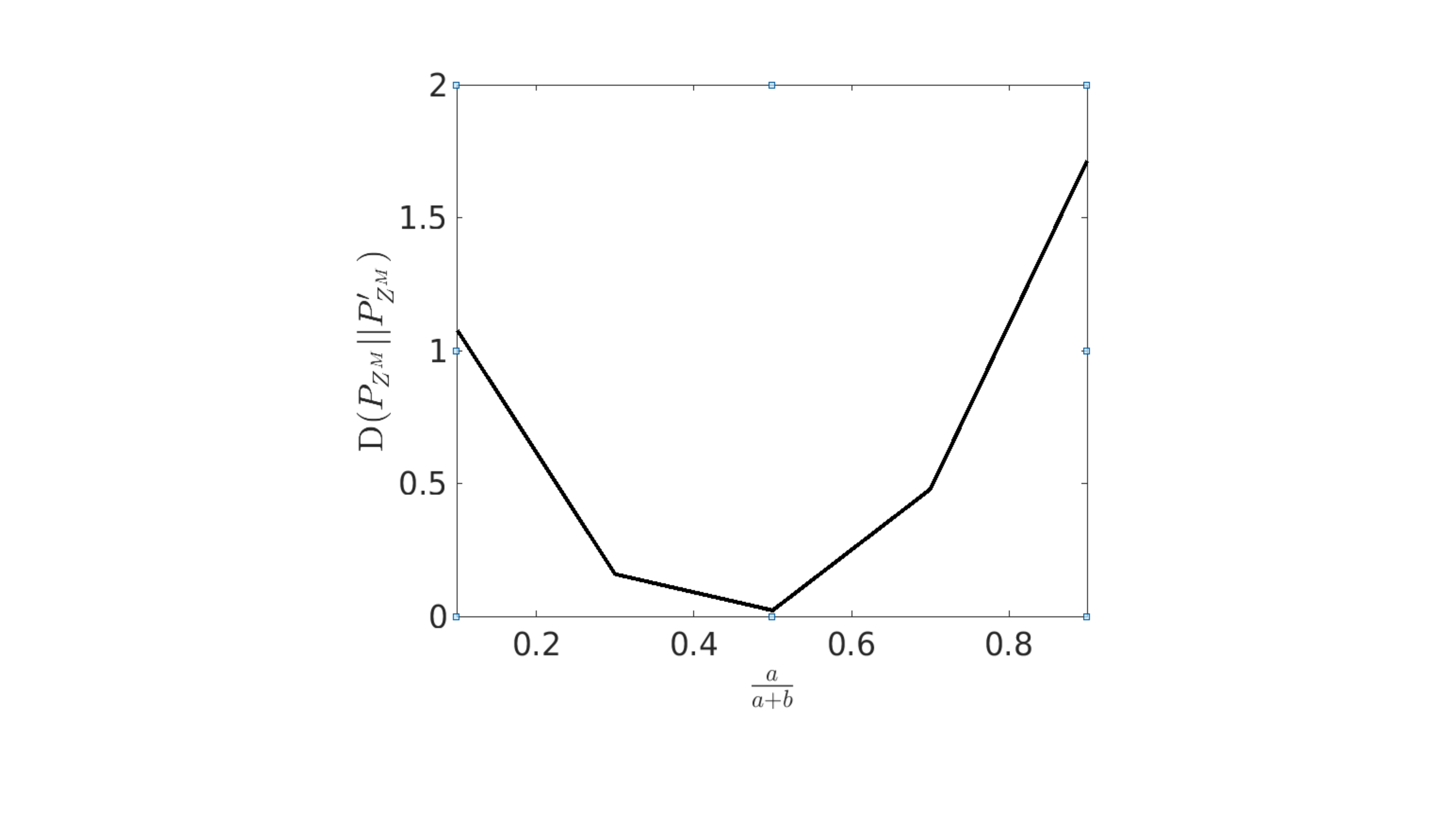}  \label{fig:subfigure2_exp4}}
 \subfloat[]{
   \includegraphics[scale=0.48,trim=2.4in 0in 2.8in 0.1in,clip=true,width=0.6\textwidth]{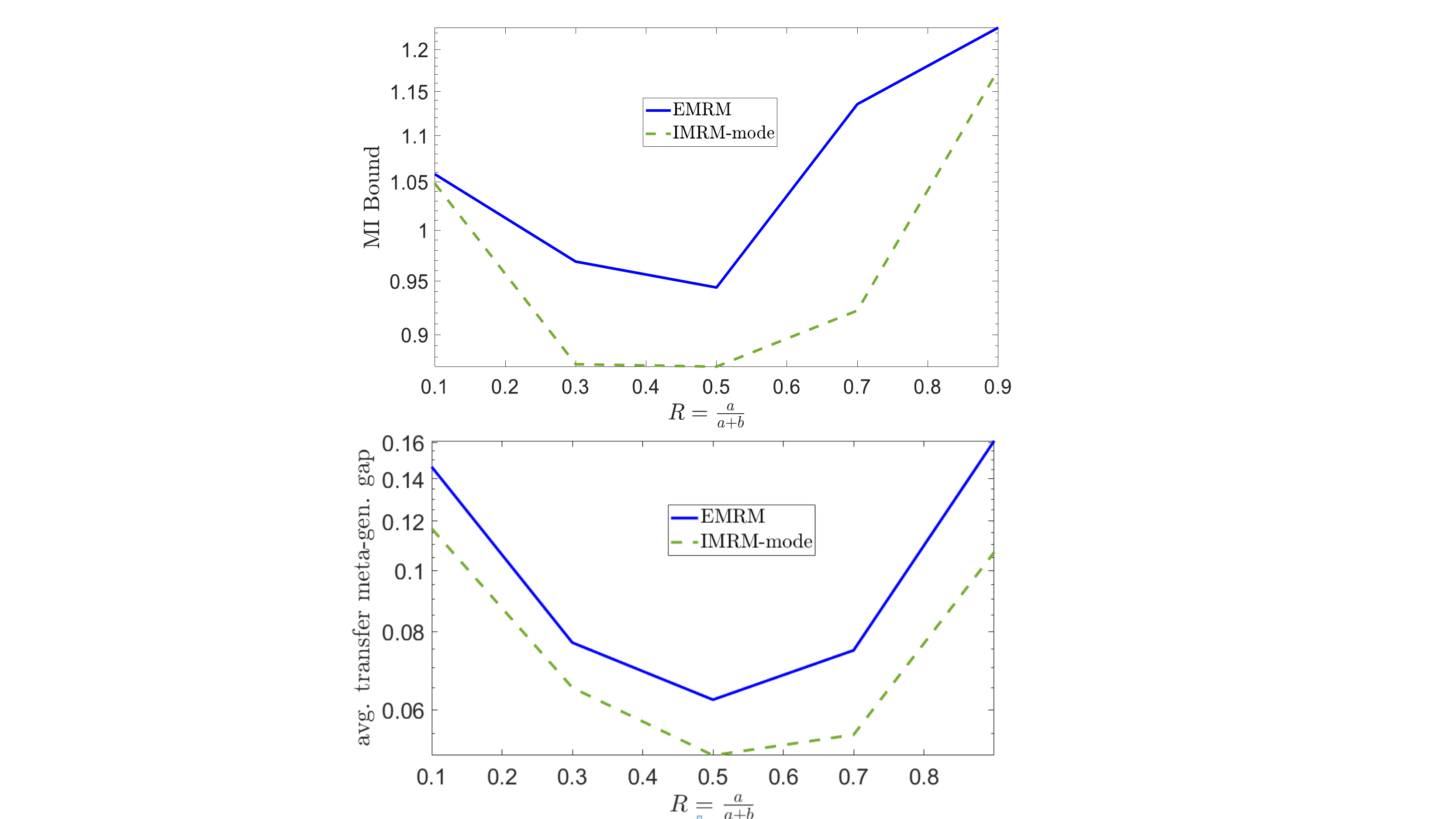}  }
 
   \caption{Impact of meta-environment shift when $P'_T$ is fixed to ${\rm Beta}( a'=4, b'=5)$ and $P_T$ varies as ${\rm Beta} (a=9-b,b=9(1-R))$, $R=a/(a+b)$: $(a)$ the KL divergence between $P_{Z^M}$ and $P'_{Z^M}$; and $(b)$ (top)  MI bound on the average transfer meta-generalization gap \eqref{eq:avgtfrgap_bound1_ITMI}; (bottom) average transfer meta-generalization gap for EMRM and IMRM-mode ($\gamma=0.55$, $\alpha=0.6$, $\beta=0.6$, $N=10$, $M=5$ and $c=5$). } \label{fig:exp4}
   \end{figure}
  Figure~\ref{fig:exp4} studies the impact of the meta-environment shift when the target task distribution $P'_T$ is fixed to ${\rm Beta}( a'=4, b'=5)$ and the source task distribution $P_T$ is given as ${\rm Beta} (a=9-b,b=9(1-R))$ with a varying mean $R=a/(a+b)$. Other parameters are set as $\gamma=0.55$, $\alpha=0.6$, $\beta=0.6$, $N=10$, $m=5$ and $c=5$.
  The analysis in Section~\ref{sec:EMRM_avgtfrgap} revealed that the KL divergence $D(P_{Z^M}||P'_{Z^M})$ between the data distributions under source and target environments is a key quantity in bounding the average transfer meta-generalization gap. The numerical results in the figure confirm that average transfer meta-generalization gap \eqref{eq:avgtfrgap_bound1_ITMI} for EMRM and IMRM-mode also shows a similar trend as the KL divergence as we vary the degree of meta-environment shift: 
  The gap is small when $P_T$ and $P'_T$ are similar in term of KL divergence, and it increases when the divergence grows.
  \begin{figure}[h!]
 \centering 
   \includegraphics[scale=0.5,trim=0.2in 1in 1in 1in,clip=true]{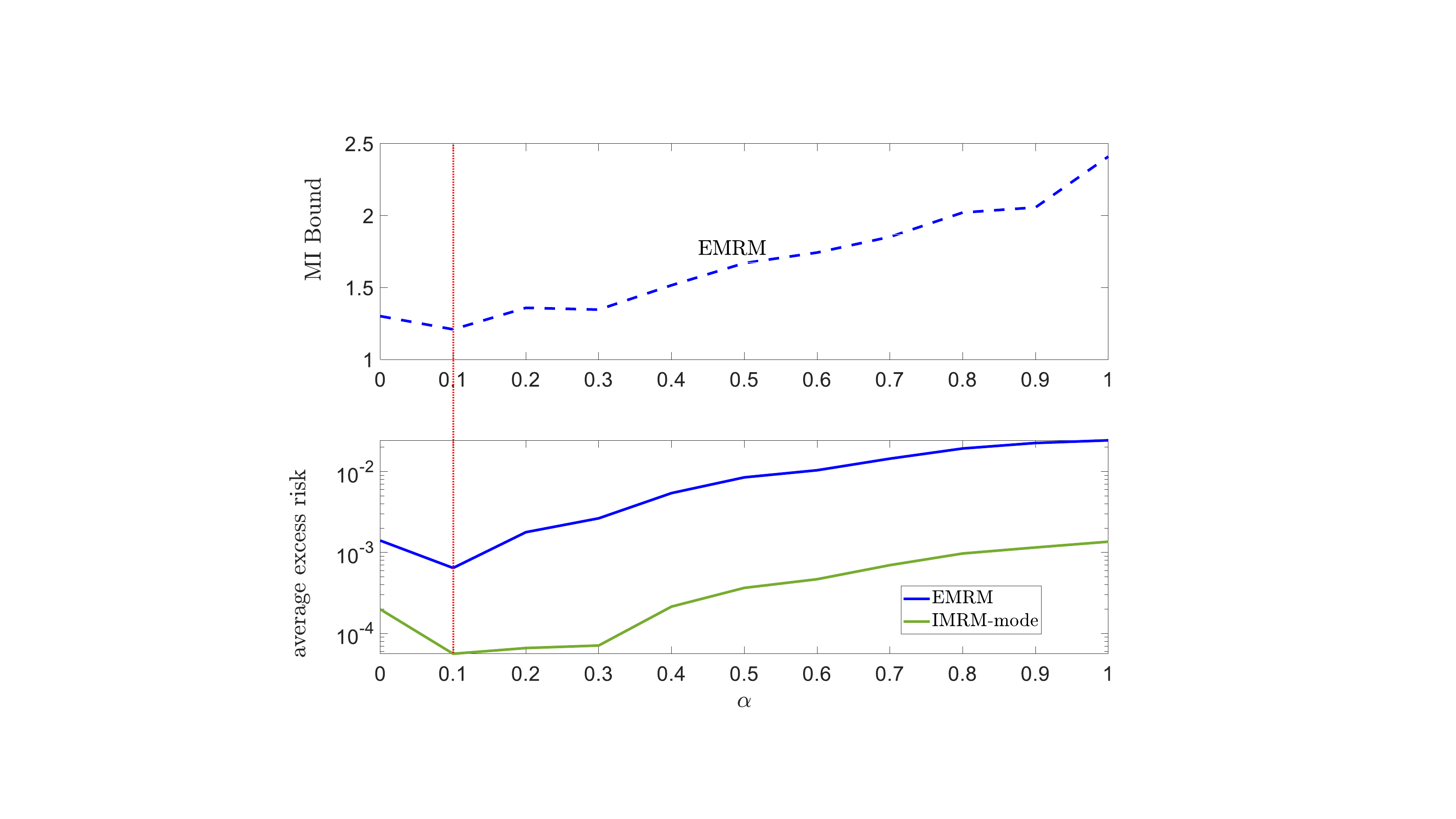}  \label{fig:subfigure1_exp6}
   \caption{Average transfer excess meta-risk as a function of the parameter $\alpha$ used in the definition \eqref{eq:tfr_metatrainingloss} of the weighted meta-training loss: (top) MI-based bound on the average transfer excess meta-risk \eqref{eq:transferrisk_bound} for EMRM; (bottom) average excess transfer meta-risk for EMRM and IMRM-mode  ($a=1.67,b=8.3$, $a'=4.45,b'=5.55$, $\gamma=0.55$, $\beta=0.4$, $ N=23$, $M=15$ and $c=5$). } \label{fig:exp6}
   \end{figure}
   
The average transfer excess meta-risk of EMRM and IMRM-mode are considered in Figure~\ref{fig:exp6} as a function of the parameter $\alpha$ used in the definition \eqref{eq:tfr_metatrainingloss} of the weighted meta-training loss.
The choice of $\alpha$ that minimizes the average transfer excess meta-risk is seen to generally lie somewhere between the extreme points $\alpha=0$, which prescribes the use of only target environment data, or $\alpha=1$, corresponding to the exclusive use of source environment datasets. Furthermore, the analytical bound \eqref{eq:transferrisk_bound} for EMRM (top figure) is  seen to accurately predict the optimal value of $\alpha$ obtained from the actual average transfer excess meta-risk \eqref{eq:transfer_metaexcessrisk} (bottom figure).
 We note that it would also be interesting to derive similar analytical upper bound on the average transfer excess meta-risk for IMRM, by following the methodologies of papers such as \cite{raginsky2017non,kuzborskij2019distribution}.
  
Finally, in Figure~\ref{fig:exp12}, we evaluate the \textit{single-draw} probability bounds obtained in \eqref{eq:singledraw_bound} for IMRM-Gibbs. Note that the single-draw performance of EMRM coincides with its average performance since it is deterministic. 
To illustrate the single-draw scenario, for each meta-training set of $N$ tasks, we generate samples $U$ of the hyperparameter according to $P_{U|\mset}^{\rm IMRM}$. We then compute the transfer meta-generalization gap  $\Delta \Lscr'(u|\mset)$ for each of the generated samples.
In the bottom panel of Figure~\ref{fig:exp12}, we use a box plot to illustrate the obtained empirical distribution of the transfer meta-generalization gap $\Delta \Lscr'(U|\mset)$ with $U \sim P_{U|\mset}^{\rm IMRM}$ for increasing values of $N$ and fixed $M=5$. For each value of $N$, the top of the box represents the $25$th percentile ($\delta=0.25$), the bottom corresponds to the $75$th percentile ($\delta=0.75$) and the centre dash correspond to the median ($\delta=0.5$) of the distribution of $\Delta \Lscr'(U|\mset)$. The two lines outside the box are the ``whiskers'' that indicate the support of the empirical distribution. The information-density based single-draw upper bound \eqref{eq:singledraw_bound}
 is illustrated for comparison in the top panel of Figure~\ref{fig:exp12} for $\delta=0.25,0.5,$ and $0.75$. It can be seen that the bounds exhibit a similar decreasing trend as the empirical transfer meta-generalization gap in the bottom panel.
  \begin{figure}[h!]
 \centering 
   \includegraphics[scale=0.74,trim=2in 1.6in 2.5in 1.8in,clip=true]{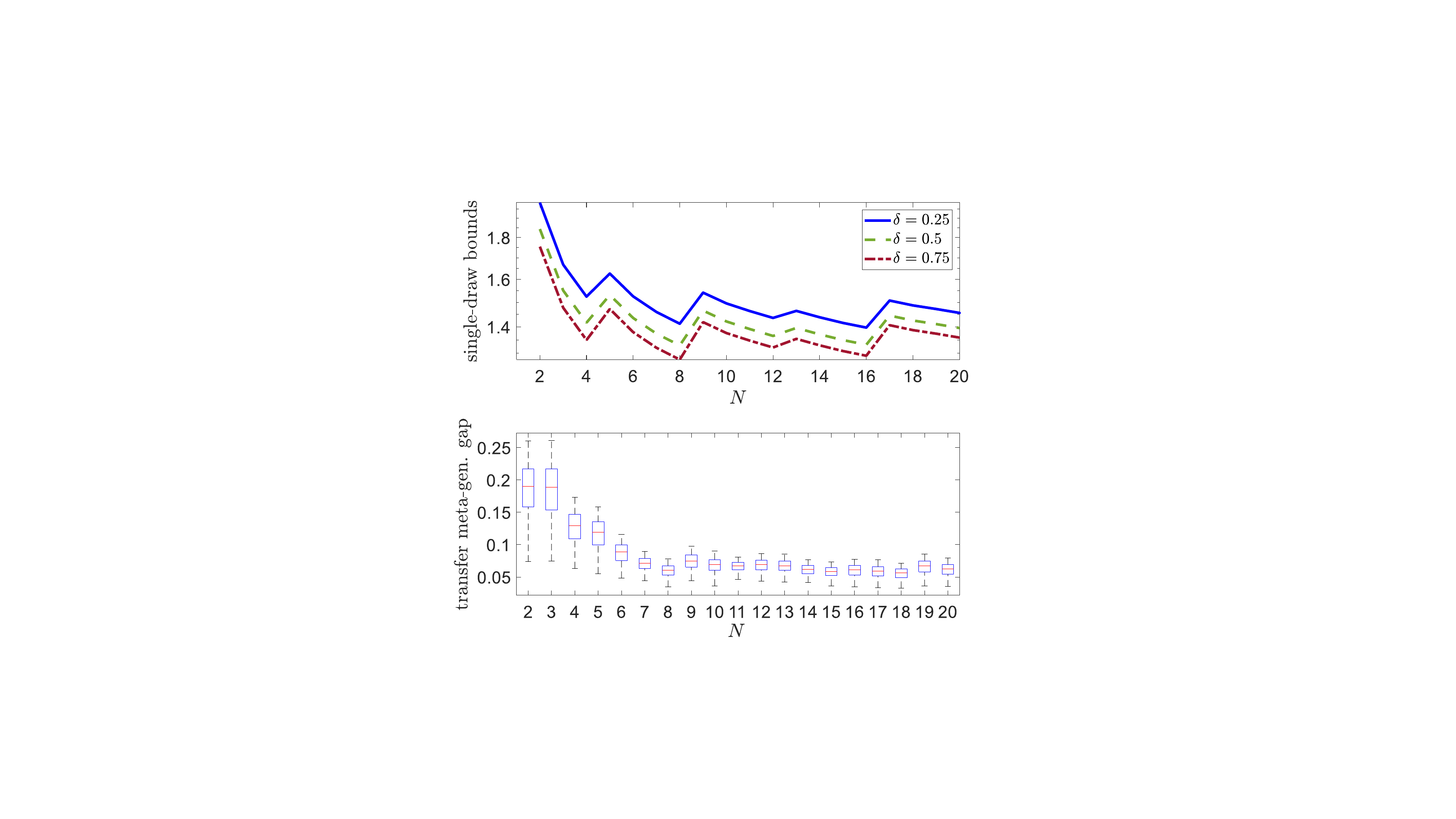}  \label{fig:subfigure1_exp12}
   \caption{Analysis of single-draw probability bounds for IMRM-Gibbs as a function of the number of tasks $N$ for $\delta=0.25,0.5$ and $0.75$. The top panel illustrates the  single-draw bound \eqref{eq:singledraw_bound} on transfer meta-generalization gap, while the bottom panel shows a box plot of the numerical evaluation of transfer meta-generalization gap. The lower quantile ($\delta=0.25$) correspond to the top of the box and the upper quantile $(\delta=0.75)$ correspond to the bottom of the box, while the circled dot in the middle of the box indicates the median $(\delta=0.5)$ ($a=1.5,b=7.5$, $a'=4,b'=5$, $\gamma=0.55$, $\beta=\alpha=0.25$,  $M=5$ and $c=5$). } \label{fig:exp12}
   \end{figure}
\section{Conclusions}
This paper introduced the problem of transfer meta-learning, in which the meta-learner observes data from tasks belonging to a source task environment, while its performance is evaluated on a new meta-test task drawn from the target task environment. We obtained three forms of upper bounds on the transfer meta-generalization gap -- bounds on average generalization gap, high-probability PAC-Bayesian bounds and high-probability single-draw bounds.
These bounds capture the meta-environment shift between source and target task distributions via the KL divergence between source and target data distributions for the average generalization gap bound, and the log-likelihood ratio between the source and target task distributions for the PAC-Bayesian and single-draw bounds. We note that these metrics can be numerically estimated from finite per-task data sets via various parametric or non-parametric methods \cite{sugiyama2012density}. 
Furthermore, we leveraged the derived PAC-Bayesian bound to propose a new meta-learning algorithm for transfer meta-learning, IMRM, which was shown in experiments to outperform an empirical weighted meta-risk minimization algorithm.

Directions for future work include the development of larger-scale experiments for linear and non-linear base learners, the application of the bounding methodologies of \cite{negrea2019information}, \cite{steinke2020reasoning} and the analysis of the excess risk for IMRM by adapting the tools of \cite{raginsky2017non,kuzborskij2019distribution}. It would also be interesting to analyze bounds on transfer meta-generalization gap that capture the meta-environment shift via other statistical divergences like Jensen-Shannon divergences \cite{jose2020informationtheoretic}.
\appendices
\section{Proofs of Lemma~\ref{lem:expinequality_avgtfrgap_ITMI} and Lemma~\ref{lem:expinequality_avgtfrgap_ITMI_1}}\label{app:expinequality_avgtfrgap_ITMI}
Throughout the Appendices, we use the notation $P_{W|\tau}$ to denote the distribution $P_{W|T=\tau}$, $P_{Z|\tau}$ to denote $P_{Z|T=\tau}$ and $P_{W|Z^M,u}$ to denote $P_{W|Z^M,U=u}$.
Under Assumption~\ref{assum:1}$(a)$, the following inequality holds for each task $\tau \in \mathcal{T}$,
\begin{align}
\Ebb_{P_{W|\tau}P_{Z_j|\tau}}\biggl[\exp \biggl( \lambda( l(W,Z_j)- \Ebb_{P_{W|\tau}P_{Z_j|\tau}}[l(W,Z)] -\frac{\lambda^2\delta_{\tau}^2}{2} \biggr) \biggr] \leq 1,
\end{align}
which in turn implies that
\begin{align}
\Ebb_{P_{W|\tau}P_{Z_j|\tau}}\biggl[\Ibb_{\Escr}\exp \biggl( \lambda( l(W,Z)- \Ebb_{P_{W|\tau}P_{Z_j|\tau}}[l(W,Z)] -\frac{\lambda^2\delta_{\tau}^2}{2} \biggr) \biggr] \leq 1,
\end{align}where $\Escr={\rm supp}(P_{W,Z_j|\tau})$. Subsequently, using a change of measure from $P_{W|\tau}P_{Z_j|\tau}$ to $P_{W,Z_j|\tau}$ as in  \cite[Prop. 17.1]{polyanskiy2014lecture} then yield the inequality \eqref{eq:expinequality_avgtfr_task_ITMI}.

Under Assumption~\ref{assum:1}$(b)$, the following inequality holds for $i=1,\hdots, N$,
\begin{align}
\Ebb_{P_{U}P'_{Z^M_i}}\biggl[\exp \biggl( \lambda( L_t(U|Z^M_i)- \Ebb_{P_{U}P'_{Z^M_i}}[ L_t(U|Z^M_i)] -\frac{\lambda^2\sigma ^2}{2} \biggr) \biggr] \leq 1. \label{eq:ineq_proof1}
\end{align} To get to \eqref{eq:expinequality_avgtfr_env_ineq1_ITMI}, we note that \eqref{eq:ineq_proof1} implies the following inequality for $i=\beta N+1,\hdots, N$,
\begin{align}
\Ebb_{P_{U}P'_{Z^M_i}}\biggl[\Ibb_{\Escr_1}\exp \biggl( \lambda( L_t(U|Z^M_i)- \Ebb_{P_{U}P'_{Z^M_i}}[ L_t(U|Z^M_i)] -\frac{\lambda^2\sigma ^2}{2} \biggr) \biggr] \leq 1, 
\end{align} where $\Escr_1={\rm supp}(P_{U|Z^M_i}P'_{Z^M_i})$. Applying change of measure as before from $P'_{Z^M_i}P_{U}$ to $P'_{Z^M_i} P_{U|Z^M_i}$ then yields inequality \eqref{eq:expinequality_avgtfr_env_ineq1_ITMI}.

To get to \eqref{eq:expinequality_avgtfr_env_ineq2_ITMI}, we start from \eqref{eq:ineq_proof1}, which implies for $i=1,\hdots, \beta N$
\begin{align}
\Ebb_{P_{U}P'_{Z^M_i}}\biggl[\Ibb_{\Escr_2}\exp \biggl( \lambda( L_t(U|Z^M_i)- \Ebb_{P_{U}P'_{Z^M_i}}[ L_t(U|Z^M_i)] -\frac{\lambda^2\sigma ^2}{2} \biggr) \biggr] \leq 1, 
\end{align}where $\Escr_2={\rm supp}(P_{Z^M_i})$. Performing change of measure from $P'_{Z^M_i}$ to $ P_{Z^M_i}$ then gives that
\begin{align}
\Ebb_{P_{U}P_{Z^M_i}}\biggl[\exp \biggl( \lambda( L_t(U|Z^M_i)- \Ebb_{P_{U}P'_{Z^M_i}}[ L_t(U|Z^M_i)] -\frac{\lambda^2\sigma ^2}{2} -\log \frac{\dP_{Z^M_i}(Z^M_i)}{\dP'_{Z^M_i}(Z^M_i)} \biggr) \biggr] \leq 1. 
\end{align}
Applying the change of measure again from $ P_{Z^M_i}P_U$ to $P_{Z^M_i}P_{U|Z^M_i}$ then yields \eqref{eq:expinequality_avgtfr_env_ineq2_ITMI}.
\section{Proof of Theorem~\ref{thm:avgtfrgap_bound1_ITMI}}\label{app:avgtfrgap_bound1_ITMI}
To obtain the required upper bound on $|\Ebb_{P_{\mset}P_{U|\mset}}[\Delta \Lscr'(U|\mset)]|$, we leverage the decomposition in \eqref{eq:decomposition-1}. Using triangle inequality, it then follows that
\begin{align}
&|\Ebb_{P_{\mset}P_{U|\mset}}[\Delta \Lscr'(U|\mset)]|\non \\ & \leq |\Ebb_{P_{U}}[\Lscr'_g(U)-\Lscr'_{g,t}(U)| +|\Ebb_{P_{\mset}P_{U|\mset}}[\Lscr'_{g,t}(U)-\Lscr_t(U|\mset)|. \label{eq:decomp_ineq1_ITMI}
\end{align}
The idea is to separately bound the two averages in \eqref{eq:decomp_ineq1_ITMI}. Towards this, we first consider the average difference $|\Ebb_{P_{U}}[\Lscr'_g(U)-\Lscr'_{g,t}(U)|$ which can be equivalently written as
\begin{align}
&|\Ebb_{P'_{T} P^M_{Z|T}}\Ebb_{P_U P_{W|Z^M,U}}[L_g(W|T)-L_{t}(W|Z^M)|\non \\
 &\leq \Ebb_{P'_{T}}|\Ebb_{ P^M_{Z|T}}\Ebb_{P_{W|Z^M}}[L_g(W|T)-L_{t}(W|Z^M)|  \\
 &\leq \Ebb_{P'_{T}}\biggl[\frac{1}{M} \sum_{j=1}^M \biggl| \Ebb_{P_{W|T}P_{Z_j|T}}[l(W,Z_j)]-\Ebb_{P_{W,Z_j|T}}[l(W,Z_j)]\biggr| \biggr].\label{eq:decomp_ineq2_ITMI}
\end{align}We now bound the difference $\Ebb_{P_{W|T}P_{Z_j|T}}[l(W,Z_j)]-\Ebb_{P_{W,Z_j|T}}[l(W,Z_j)]$ using \eqref{eq:expinequality_avgtfr_task_ITMI}. For $T=\tau$, applying Jensen's inequality on \eqref{eq:expinequality_avgtfr_task_ITMI} and  taking log on both sides of the resultant inequality gives  that
\begin{align}
\lambda \biggl(\Ebb_{P_{W,Z_j|T=\tau}}[l(W,Z_j)]-\Ebb_{P_{W|T=\tau}P_{Z_j|T=\tau}}[l(W,Z)] \biggr)\leq \frac{\lambda^2 \delta_{\tau}^2}{2}+ I(W;Z_j|T=\tau). \label{eq:point1}
\end{align}
Choosing $\lambda=\sqrt{2 I(W;Z_j|T=\tau)}/\delta_{\tau}$ then yields that
\begin{align}
[|\Ebb_{P_{W,Z_j|T=\tau}}[l(W,Z_j)]-\Ebb_{P_{W|T=\tau}P_{Z_j|T=\tau}}[l(W,Z_j)] | \leq \sqrt{2 \delta_{\tau}^2I(W;Z_j|T=\tau)}.
\end{align}
Substituting back in \eqref{eq:decomp_ineq2_ITMI}, averaging over $T$, then yields the following upper bound
\begin{align}
\Ebb_{P'_{T}}|\Ebb_{P^M_{Z|T}}\Ebb_{P_{W|Z^M}}[L_g(W|T)-L_{t}(W|Z^M)]| \leq \Ebb_{P'_T}\biggl[\frac{1}{M} \sum_{j=1}^M \sqrt{2 \delta_{T}^2I(W;Z_j|T=\tau)}\biggr]. \label{eq:b}
\end{align}

We now bound the second average difference in \eqref{eq:decomp_ineq1_ITMI} using the
 the exponential inequalities \eqref{eq:expinequality_avgtfr_env_ineq1_ITMI}--\eqref{eq:expinequality_avgtfr_env_ineq2_ITMI}. Towards this, we denote by $P_{Z^M_{1:\beta N}}$  the marginal of the joint distribution $\prod_{i=1}^{\beta N}P_{T_i}P^M_{Z|T_i}$ and by $P'_{Z^M_{\beta N+1:N}}$ the marginal of the joint distribution $\prod_{i=\beta N+1}^{ N}P'_{T_i}P^M_{Z|T_i} $. We will also use \[\Lscr_t(u|Z^M_{1:\beta N})=\frac{1}{\beta N}\sum_{i=1}^{\beta N}L_t(u|Z^M_i)\] for the  the meta-training loss on task data from source environment and \[\Lscr_t(u|Z^M_{\beta N+1:N})=\frac{1}{(1-\beta)N}\sum_{i=\beta N+1}^{ N}L_t(u|Z^M_i)\] for the meta-training loss on task data from target environment. Then, the second average difference in \eqref{eq:decomp_ineq1_ITMI} can be equivalently written as
\begin{align}
&|\Ebb_{P_{\mset,U}}[\Lscr'_{g,t}(U)-\Lscr_t(U|\mset)|\non\\
&=\biggl|\Ebb_{P_{Z^M_{1:\beta N}}P'_{Z^M_{\beta N+1:N}}P_{U|\mset}}\biggl[\alpha\biggl(\Lscr'_{g,t}(U)-\Lscr_t(U|Z^M_{1:\beta N})\biggr)+(1-\alpha)\biggl(\Lscr'_{g,t}(U)-\Lscr_t(U|Z^M_{\beta N+1:N})\biggr)\biggr]\biggr|\non\\
& \leq \alpha |\Ebb_{P_{Z^M_{1:\beta N}}P_{U|Z^M_{1:\beta N}}}[\Lscr'_{g,t}(U)-\Lscr_t(U|Z^M_{1:\beta N}) ]| \non \\&+(1-\alpha)|\Ebb_{P'_{Z^M_{\beta N+1:N}}P_{U|Z^M_{\beta N+1:N}}}[\Lscr'_{g,t}(U)-\Lscr_{t}'(U|Z^M_{\beta N+1:N}) ]| \non \\
&=  \alpha \biggl| \frac{1}{\beta N} \sum_{i=1}^{\beta N} \biggl(\Ebb_{P_{U}P'_{Z^M_i}}[L_t(U|Z^M_i)]-\Ebb_{P_{Z^M_i}P_{U|Z^M_i}}[L_t(U|Z^M_i) ]\biggr) \biggr| \non \\&+(1-\alpha)\biggl| \frac{1}{(1-\beta) N} \sum_{i=\beta N+1}^{N} \biggl(\Ebb_{P_{U}P'_{Z^M_i}}[L_t(U|Z^M_i)]-\Ebb_{P'_{Z^M_i}P_{U|Z^M_i}}[L_t(U|Z^M_i) ]\biggr) \biggr| \label{eq:env1}.
\end{align}
We now proceed to use the exponential inequalities in \eqref{eq:expinequality_avgtfr_env_ineq1_ITMI} and \eqref{eq:expinequality_avgtfr_env_ineq2_ITMI} to bound the two terms in \eqref{eq:env1}. To bound the first difference, we use \eqref{eq:expinequality_avgtfr_env_ineq2_ITMI}. Applying Jensen's inequality and taking log on both sides of the resulting inequality yields 
\begin{align}
\lambda \biggl(\Ebb_{P_{Z^M_i}P_{U|Z^M_i}}[L_t(U|Z^M_i) ]-\Ebb_{P_{U}P'_{Z^M_i}}[L_t(U|Z^M_i)]\biggr) \leq \frac{\lambda^2 \sigma^2}{2}+ D(P_{Z^M}||P'_{Z^M})+I(U;Z^M_i). \label{eq:point2}
\end{align}
Further, choosing $\lambda=\sqrt{2(D(P_{Z^M}||P'_{Z^M})+I(U;Z^M_i))}/\sigma$ then gives that
\begin{align}
|\Ebb_{P_{Z^M_i}P_{U|Z^M_i}}[L_t(U|Z^M_i) ]-\Ebb_{P_{U}P'_{Z^M_i}}[L_t(U|Z^M_i)]| &\leq \sqrt{2 \sigma^2 \biggl(D(P_{Z^M}||P'_{Z^M})+I(U;Z^M_i) \biggr)}. \label{eq:b1}
\end{align}
In a similar way, the second difference in \eqref{eq:env1} can be bounded by using \eqref{eq:expinequality_avgtfr_env_ineq1_ITMI}. Applying Jensen's inequality, taking log on both sides, and finally choosing  $\lambda=\sqrt{2I(U;Z^M_i)}/\sigma$ then yields
\begin{align}
|\Ebb_{P_{U}P'_{Z^M_i}}[L_t(U|Z^M)]-\Ebb_{P'_{Z^M_i}P_{U|Z^M_i}}[L_t(U|Z^M_i) ]| \leq \sqrt{2 \sigma^2 I(U;Z^M_i)} \label{eq:b2}
\end{align}
 Combining \eqref{eq:b1} and \eqref{eq:b2} in \eqref{eq:env1} and using it in \eqref{eq:decomp_ineq1_ITMI} together with \eqref{eq:b} gives the upper bound in \eqref{eq:avgtfrgap_bound1_ITMI}.
\section{Proof of Corollary~\ref{cor:avgtfrgap_tfrlearning_ISMIbound}}\label{app:avgtfrgap_tfrlearning_ISMIbound}
The bound \eqref{eq:avgtfrgap_transferlearning} follows by specializing the bound \eqref{eq:avgtfrgap_bound1_ITMI} to the setting considered here. Towards this, we first note that the meta-training set $\mset=Z^{\bar{M}}=(Z_1,\hdots Z_{\bar{M}})$ with its $i$th sub-set $Z^M_i$ corresponding to the data sample $Z_i$, where  $Z_i\sim P_{Z|\tau}$ for $i=1,\hdots,\beta \bar{M}$ and $Z_i\sim P_{Z|\tau'}$ for $i=\beta \bar{M}+1,\hdots,\bar{M}$. Thus, there are $\beta N M =\beta \bar{M}$ data samples from the source task environment and $(1-\beta)\bar{M}$ samples from the target task environment. Using $P_{W|Z^M,U}=\delta(W-U)$ and $U=W$, we then have $\Lscr'_g(u)=L_g(w|\tau')$, and $\Lscr_t(u|\mset)=L_t(w|Z^{\bar{M}})$ with $L_t(u|Z^M_i)=l(w,Z_i)$. Consequently, we have $I(U;Z^M_i)=I(W;Z_i)$ and the KL divergence $D(P_{Z^M}||P'_{Z^M})=D(P_{Z|\tau}||P_{Z|\tau'})$.  It can also be verified that $P_{W,Z_j|\tau'}=P_{W|\tau'}P_{Z_j|\tau'}=P_{W}P_{Z_j|\tau'}$ whereby the MI $I(W;Z_j|\tau')=0$ in \eqref{eq:avgtfrgap_bound1_ITMI}. Further, since $L_t(u|z^M)=l(w,z)$, Assumption~\ref{assum:1} then implies that $\sigma^2=\delta_{\tau'}^2$.
 Using all these expressions in \eqref{eq:avgtfrgap_bound1_ITMI} yields the bound \eqref{eq:avgtfrgap_transferlearning}.
 \section{Exponential Inequalities Based on Assumption~\ref{assum:newassumption}} \label{app:newexponentialinequalities}
 \begin{lemma}\label{lem:expinequality_avgtfrgap_ITMI_new}
Under Assumption~\ref{assum:newassumption} and Assumption~\ref{assum:1a}, the following inequality holds for $j=1,\hdots,M$,
\begin{align}
\Ebb_{P_{W,Z_j|\tau}}\biggl[\exp \biggl( \lambda( l(W,Z_j)- \Ebb_{P_{Z_j|\tau}}[l(W,Z_j)]-\imath(W,Z_j|T=\tau) -\frac{\lambda^2\delta_{\tau}^2}{2} \biggr) \biggr] \leq 1,\label{eq:exp_task_newassumption}
\end{align}
 for all $\lambda \in \Real$ and for each task $\tau \in \mathcal{T}$. Moreover, we have the following inequality for $i=1,\hdots,\beta N$
 \begin{align}
\Ebb_{P_{Z^M_i}P_{U|Z^M_i}}\biggl[\exp \biggl( \lambda( L_t(U|Z^M_i)- \Ebb_{P'_{Z^M_i}}[ L_t(U|Z^M_i)]-\log \frac{\dP_{Z^M_i}(Z^M_i)}{\dP'_{Z^M_i}(Z^M_i)}-\imath(U,Z^M_i)-\frac{\lambda^2\sigma ^2}{2} \biggr) \biggr] \leq 1, \label{eq:exp_env1_newassumption}
\end{align}
and for $i=\beta N+1,\hdots, N$, we have
\begin{align}
\Ebb_{P'_{Z^M_i}P_{U|Z^M_i}}\biggl[\exp \biggl( \lambda( L_t(U|Z^M_i)- \Ebb_{P'_{Z^M_i}}[ L_t(U|Z^M_i)]-\imath(U,Z^M_i)-\frac{\lambda^2\sigma ^2}{2} \biggr) \biggr] \leq 1, \label{eq:exp_env2_newassumption}
\end{align}
which holds for all $\lambda \in \Real$.
\end{lemma}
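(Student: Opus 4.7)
The plan is to mirror the three-step derivation of Lemma~\ref{lem:expinequality_avgtfrgap_ITMI}--\ref{lem:expinequality_avgtfrgap_ITMI_1}, but exploit the fact that Assumption~\ref{assum:newassumption} gives sub-Gaussianity of $l(w,Z)$ \emph{for each fixed $w$} (and of $L_t(u|Z^M)$ for each fixed $u$) under the relevant data distribution, rather than only after marginalizing the learner. The strategy is: (i) write the sub-Gaussian moment generating inequality pointwise in $w$ (or $u$); (ii) integrate against the product of marginals $P_{W|\tau}P_{Z_j|\tau}$ (or $P_U P'_{Z^M_i}$); (iii) change measure to the joint distribution, picking up the information density; and (iv) for the source-environment case only, perform an additional change of measure from $P'_{Z^M_i}$ to $P_{Z^M_i}$ that produces the log-likelihood ratio $\log(dP_{Z^M_i}/dP'_{Z^M_i})$.

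For \eqref{eq:exp_task_newassumption}, I would start from Assumption~\ref{assum:newassumption}, which gives, for every fixed $w$ and $\lambda$,
\begin{align*}
\Ebb_{P_{Z_j|\tau}}\!\left[\exp\!\left(\lambda(l(w,Z_j)-\Ebb_{P_{Z_j|\tau}}[l(w,Z_j)])-\tfrac{\lambda^2\delta_\tau^2}{2}\right)\right]\leq 1.
\end{align*}
Integrating this inequality against $P_{W|\tau}(dw)$ yields the same bound with expectation taken under $P_{W|\tau}P_{Z_j|\tau}$. Finally, restricting the integration to the support of $P_{W,Z_j|\tau}$ and applying the change-of-measure identity of \cite[Prop.~17.1]{polyanskiy2014lecture} (from $P_{W|\tau}P_{Z_j|\tau}$ to $P_{W,Z_j|\tau}$) introduces the factor $\exp(-\imath(W,Z_j|T=\tau))$ inside the expectation, producing exactly \eqref{eq:exp_task_newassumption}.

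For \eqref{eq:exp_env2_newassumption}, the target-environment case, the argument is analogous. Assumption~\ref{assum:newassumption}(b) applied pointwise in $u$ gives
\begin{align*}
\Ebb_{P'_{Z^M_i}}\!\left[\exp\!\left(\lambda(L_t(u|Z^M_i)-\Ebb_{P'_{Z^M_i}}[L_t(u|Z^M_i)])-\tfrac{\lambda^2\sigma^2}{2}\right)\right]\leq 1,
\end{align*}
and integrating against $P_U$ followed by a single change of measure from $P_U P'_{Z^M_i}$ to $P_{U,Z^M_i}$ (whose $Z^M_i$-marginal equals $P'_{Z^M_i}$ when $i\in\{\beta N{+}1,\ldots,N\}$) inserts the factor $\exp(-\imath(U,Z^M_i))$ and yields \eqref{eq:exp_env2_newassumption}.

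The source-environment inequality \eqref{eq:exp_env1_newassumption} is the one requiring the extra step, and it is the main technical obstacle because the $Z^M_i$-marginal of $P_{U,Z^M_i}$ is $P_{Z^M_i}$ (source) while the sub-Gaussianity hypothesis is posed under $P'_{Z^M_i}$ (target). After the same pointwise-in-$u$ inequality and the integration against $P_U$, I would perform a change of measure from $P_U P'_{Z^M_i}$ to $P_U P_{Z^M_i}$: Assumption~\ref{assum:1a} guarantees absolute continuity $P'_{Z^M_i}\ll P_{Z^M_i}$\footnote{Equivalently, this use of absolute continuity mirrors the step in Lemma~\ref{lem:expinequality_avgtfrgap_ITMI_1}, with the roles of $P$ and $P'$ swapped relative to the direction of integration.} in the sense needed, and the resulting Radon--Nikodym factor equals $\exp(-\log(dP_{Z^M_i}/dP'_{Z^M_i}))$. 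A second change of measure from $P_U P_{Z^M_i}$ to $P_{U,Z^M_i}$ then contributes the $\exp(-\imath(U,Z^M_i))$ factor, giving \eqref{eq:exp_env1_newassumption}. Care must be taken with the direction of the log-likelihood ratio and with the order of the two changes of measure so that the sign conventions in the exponent match those displayed in the statement; this sign bookkeeping is the principal place where errors could creep in.
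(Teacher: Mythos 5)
Your proposal follows the paper's own proof essentially step for step: the pointwise (in $w$ or $u$) sub-Gaussian moment inequality from Assumption~\ref{assum:newassumption}, averaging against the marginal $P_{W|\tau}$ (resp.\ $P_U$), and then one change of measure to the joint for \eqref{eq:exp_task_newassumption} and \eqref{eq:exp_env2_newassumption}, versus two changes of measure (data measure $P'_{Z^M_i}\to P_{Z^M_i}$ first, then product to joint) producing the log-likelihood ratio for \eqref{eq:exp_env1_newassumption}. The one slip is the stated direction of absolute continuity: Assumption~\ref{assum:1a} gives $P_{Z^M}\ll P'_{Z^M}$ (source absolutely continuous with respect to target), not the reverse as written, and this is precisely the direction needed so that $\log\bigl(\dP_{Z^M_i}(Z^M_i)/\dP'_{Z^M_i}(Z^M_i)\bigr)$ is finite $P_{Z^M_i}$-almost surely and the factor $\dP'_{Z^M_i}/\dP_{Z^M_i}$ is a valid Radon--Nikodym derivative on the support of $P_{Z^M_i}$.
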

\begin{proof}
 Under Assumption~\ref{assum:newassumption}, the following inequality holds for each task $\tau \in \mathcal{T}$ and for all $w \in \Wscr$ and $\lambda \in \Real$,
\begin{align}
\Ebb_{P_{Z_j|\tau}}\biggl[\exp \biggl( \lambda( l(w,Z_j)- \Ebb_{P_{Z_j|\tau}}[l(w,Z)] -\frac{\lambda^2\delta_{\tau}^2}{2} \biggr) \biggr] \leq 1. \label{eq:tasklevel_exp_assum}
\end{align} Now, averaging both sides with respect to $W \sim P_{W|\tau}$, where $P_{W|\tau}$ is obtained by marginalizing $P_{W|Z^M,U}P_U P_{Z^M|T=\tau}$, we get that
 \begin{align}
\Ebb_{P_{W|\tau}P_{Z_j|\tau}}\biggl[\exp \biggl( \lambda( l(W,Z_j)- \Ebb_{P_{Z_j|\tau}}[l(W,Z)] -\frac{\lambda^2\delta_{\tau}^2}{2} \biggr) \biggr] \leq 1.
\end{align}
Performing change of measure from $P_{Z_j|\tau}P_{W|\tau}$ to $P_{W,Z_j|\tau}$ similar to Appendix~\ref{app:expinequality_avgtfrgap_ITMI} gets us to the exponential inequality in \eqref{eq:exp_task_newassumption}.

Similarly, for obtaining environment-level exponential inequalities, we have from Assumption~\ref{assum:newassumption} the following inequality
\begin{align}
\Ebb_{P'_{Z^M_i}}\biggl[\exp \biggl( \lambda( L_t(u|Z^M_i)- \Ebb_{P'_{Z^M_i}}[ L_t(u|Z^M_i)] -\frac{\lambda^2\sigma ^2}{2} \biggr) \biggr] \leq 1, \label{eq:ineq_proof12}
\end{align} for $i=1,\hdots, N$, which holds for all $u \in \Uscr$ and $\lambda \in \Real$. Now, to get to \eqref{eq:exp_env1_newassumption}, average both sides with respect to $U \sim P_U$, and change measure from $P'_{Z^M_i}$ to $ P_{Z^M_i}$. This results in the following for $i=1,\hdots,\beta N$
\begin{align}
\Ebb_{P_{Z^M_i}P_{U}}\biggl[\exp \biggl( \lambda( L_t(U|Z^M_i)- \Ebb_{P'_{Z^M_i}}[ L_t(U|Z^M_i)] -\log \frac{\dP_{Z^M_i}(Z^M_i)}{\dP'_{Z^M_i}(Z^M_i)}-\frac{\lambda^2\sigma ^2}{2} \biggr) \biggr] \leq 1. 
\end{align}
Performing a second change of measure from $P_{Z^M_i}P_{U}$ to $P_{Z^M_i}P_{U|Z^M_i}$ then yields the exponential inequality in \eqref{eq:exp_env1_newassumption}.
For $i=\beta N+1,\hdots,N$, we obtain \eqref{eq:exp_env2_newassumption} from \eqref{eq:ineq_proof12} by first averaging over $P_U$, then performing a change of measure from $P'_{Z^M_i}P_U$ to $P'_{Z^M_i}P_{U|Z^M_i}$. 
\end{proof}
%

To see how the exponential inequalities in Lemma~\ref{lem:expinequality_avgtfrgap_ITMI_new} yields the upper bound in Theorem~\ref{thm:avgtfrgap_bound1_ITMI}, we proceed as in the proof of Theorem~\ref{thm:avgtfrgap_bound1_ITMI} in Appendix~\ref{app:avgtfrgap_bound1_ITMI}. To bound the difference in expectation in \eqref{eq:decomp_ineq2_ITMI}, we use the exponential inequality \eqref{eq:exp_task_newassumption}. Note that applying Jensen's inequality on \eqref{eq:exp_task_newassumption} results in the inequality \eqref{eq:point1}.
Similarly, to bound the enviroment-level generalization gap in \eqref{eq:env1}, we use the exponential inequalities \eqref{eq:exp_env1_newassumption} and \eqref{eq:exp_env2_newassumption} and apply Jensen's inequality. In particular, applying Jensen's inequality on \eqref{eq:exp_env1_newassumption} leads to \eqref{eq:point2}. The required bound is then obtained by proceeding as in Appendix~\ref{app:avgtfrgap_bound1_ITMI}.
\section{Proof of Theorem~\ref{thm:transferrisk_bound}}\label{app:transferrisk_bound}
For obtaining an upper bound on the average transfer meta-excess risk, we bound the average transfer generalization gap, the first difference in \eqref{eq:excessrisk_1}, by \eqref{eq:avgtfrgap_bound1_ITMI}.

We now bound the second difference in \eqref{eq:excessrisk_1}. This can be equivalently written as
\begin{align}
&\Ebb_{P_{\mset}}[\Lscr_t(u^{*}|\mset)- \Lscr'_g(u^{*})]\\
&=\Ebb_{P_{\mset}}[\Lscr_t(u^{*}|\mset)-\Lscr'_{g,t}(u^{*})+\Lscr'_{g,t}(u^{*})- \Lscr'_g(u^{*})]\\
&=\alpha\Ebb_{P_{Z^M_{1:\beta N}}}[\Lscr_t(u^{*}|Z^M_{1:\beta N})-\Lscr'_{g,t}(u^{*})]+(1-\alpha)\Ebb_{P'_{Z^M_{\beta N+1:N}}}[\Lscr_t(u^{*}|Z^M_{\beta N+1:N})-\Lscr'_{g,t}(u^{*})]\non \\&+\Lscr'_{g,t}(u^{*})- \Lscr'_g(u^{*})\non \\
&=\alpha\Ebb_{P_{Z^M_{1:\beta N}}}[\Lscr_t(u^{*}|Z^M_{1:\beta N})-\Lscr'_{g,t}(u^{*})]+\Lscr'_{g,t}(u^{*})- \Lscr'_g(u^{*})
\label{eq:eq11}
\end{align}
where the last equality follows since $\Ebb_{P'_{Z^M_{\beta N+1:N}}}[\Lscr_t(u^{*}|Z^M_{\beta N+1:N})]=\Lscr'_{g,t}(u^{*})$.
We now separately bound the two differences in \eqref{eq:eq11}.

To bound the first difference in \eqref{eq:eq11}, note that
$$\Ebb_{P_{Z^M_{1:\beta N}}}[\Lscr_t(u^{*}|Z^M_{1:\beta N})-\Lscr'_{g,t}(u^{*})]=\Ebb_{P_{Z^M}}[L_t(u^{*}|Z^M)]-\Ebb_{P'_{Z^M}}[L_t(u^{*}|Z^M)]. $$ To bound this term, we resort to the inequality \eqref{eq:ineq_proof12} which is a consequence of Assumption~\ref{assum:newassumption} (note that we can ignore the subscript $i$ in the current context), and fix $u=u^{*}$. Applying change of measure from $P'_{Z^M}$ to $P_{Z^M}$ then yields the following inequality,
\begin{align}
\Ebb_{P_{Z^M}}\biggl[\exp \biggl( \lambda( L_t(u^{*}|Z^M)- \Ebb_{P'_{Z^M}}[ L_t(u^{*}|Z^M)] -\log \frac{\dP_{Z^M}(Z^M)}{\dP'_{Z^M}(Z^M)}-\frac{\lambda^2\sigma ^2}{2} \biggr) \biggr] \leq 1, \label{eq:ineq_proof123}
\end{align}which holds for all $\lambda \in \Real$. Applying Jensen's inequality and choosing $\lambda=\sqrt{2D(P_{Z^M}||P'_{Z^M})}/\sigma$ then gives that
\begin{align}
\Ebb_{P_{Z^M}}[L_t(u^{*}|Z^M)]-\Ebb_{P'_{Z^M}}[L_t(u^{*}|Z^M)] \leq \sqrt{2 \sigma^2 D(P_{Z^M}||P'_{Z^M})}. \label{eq:ub}
\end{align}

We now bound the second difference in \eqref{eq:eq11}. Towards this, note that the following set of relations hold,
\begin{align}
\Lscr'_{g,t}(u^{*})-\Lscr'_g(u^{*})
&= \Ebb_{P'_T} \Ebb_{P^M_{Z|T}}\Ebb_{P_{W|Z^M,u^{*}}}[L_t(W|Z^M)-L_g(W|T)] \non \\
&=\Ebb_{P'_T}\biggl[ \frac{1}{M}\sum_{j=1}^M \biggl(\Ebb_{P_{W,Z_j|u^{*},T=\tau}}[l(W,Z_j)]-\Ebb_{P_{W|u^{*},T=\tau} P_{Z_j|T=\tau}}[l(W,Z_j)]\biggr)\biggr]. \label{eq:eq2}
\end{align}
 To bound the difference $\Ebb_{P_{W,Z_j|u^{*},T=\tau}}[l(W,Z_j)]-\Ebb_{P_{W|u^{*},T=\tau} P_{Z_j|T=\tau}}[l(W,Z_j)]$, we slightly modify the exponential inequality \eqref{eq:exp_task_newassumption} in Lemma~\ref{lem:expinequality_avgtfrgap_ITMI_new}. Towards this, we average the inequality \eqref{eq:tasklevel_exp_assum} with respect to $W \sim P_{W|\tau,u^{*}}$, where $P_{W|\tau,u^{*}}$ is the marginal of the joint $P_{W|Z^M,u^{*}}P_{Z^M|\tau}$, and subsequently perform a change of measure from $P_{Z_j|\tau}P_{W|\tau,u^{*}}$ to $P_{W,Z_j|\tau,u^{*}}$. This results in the following modified form of \eqref{eq:exp_task_newassumption}
 \begin{align}
\Ebb_{P_{W,Z_j|\tau,u^{*}}}\biggl[\exp \biggl( \lambda( l(W,Z_j)- \Ebb_{P_{Z_j|\tau}}[l(W,Z_j)]-\imath(W,Z_j|T=\tau,u^{*}) -\frac{\lambda^2\delta_{\tau}^2}{2} \biggr) \biggr] \leq 1.\label{eq:exp_task_newassumption_modified}
\end{align}
Now, applying Jensen's inequality, and choosing $\lambda=\sqrt{2I(W;Z_j|T=\tau,u^{*})}/\delta_{\tau}$ gives that
\begin{align}
\Ebb_{P_{W,Z_j|\tau,u^{*}}}[ l(W,Z_j)]- \Ebb_{P_{W|\tau,u^{*}}P_{Z_j|\tau}}[l(W,Z_j)] \leq \sqrt{2 \delta_{\tau}^2 I(W;Z_j|T=\tau,u^{*})}.
\end{align}
Substituting this in \eqref{eq:eq2}, and using the resulting inequality together with \eqref{eq:ub} in \eqref{eq:eq11} yields the required bound.
\section{ Exponential Inequalities for PAC-Bayesian and Single-Draw Probability Bounds}\label{app:expinequality_PACBayesian}
We now present two exponential inequalities that are crucial to the derivation of high-probability PAC-Bayesian and high-probability single-draw bounds. Towards this, we first define the following \textit{mismatched information densities}
\begin{align}
\jmath(U,\mset)=\log \frac{\dP_{U|\mset}(U|\mset)}{\dQ_{U}(U)}, \quad \jmath(W,Z^M|U)=\log \frac{\dP_{W|Z^M,U}(W|Z^M,U)}{\dQ_{W|U}(W|U)}
\end{align} where  $Q_U \in \Pscr(\Uscr)$ represents an arbitrary data-independent hyper-prior over the space of hyperparameters, and $Q_{W|U=u} \in \Pscr(\Wscr)$ represents a class of arbitrary data-independent priors over the space of model parameters for each $u\in \Uscr$. The mismatched information density $\jmath(U,\mset)$ quantifies the evidence for the hyperparameter $U$ to be generated according to the meta-learner $P_{U|\mset}$ based on meta-training set, rather than being generated according to the hyper-prior distribution $Q_U$. Similarly, the density $\jmath(W,Z^M|U)$ quantifies the evidence of the model parameter $W$ being generated by the base learner $P_{W|Z^M,U}$ based on the training set $Z^M$, rather than being generated according to the prior.

We denote $Z^M_{1:N /i}:=(Z^M_1,\hdots,Z^M_{i-1},Z^M_{i+1},\hdots,Z^M_N)$, for $i=1,\hdots,N$, to be the meta-training set without the $i$th subset and is distributed according to $P_{Z^M_{1:N/i}}$ which is obtained by marginalizing $P_{\mset}$.
\begin{lemma}\label{lem:expinequality_PB_task}
Under Assumption~\ref{assum:2}$(a)$ and Assumption~\ref{assum:2a}, the following 
exponential inequality holds for the $i$th sub-set, $Z^M_i \sim P^M_{Z|T=T_i}$, of the meta-training set $\mset=(Z^M_i,Z^M_{1:N/i})$ for $i=1,\hdots,N$,
\begin{align}
\Ebb_{P_{Z^M_{1:N/i}}}\Ebb_{P^M_{Z|T=T_i}P_{U|\mset}P_{W|Z^M_i,U}}\biggl[&\exp \biggl(\lambda (L_t(W|Z^M_i)-L_g(W|T_i))-\frac{\lambda^2 \delta_{T_i}^2}{2M}\non \\&-\jmath({W},Z^M_i|U)-\jmath({U},\mset) \biggr) \biggr] \leq 1. \label{eq:expinequality_PB_task2}
\end{align} 
\end{lemma}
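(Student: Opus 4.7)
\begin{proofarg}[Plan]
The strategy is to build the claimed exponential inequality in three layers, matching the three distinct change-of-measure terms that appear in the statement. We begin at the within-task level, with a fixed model parameter $w \in \Wscr$ and a fixed task realization $T_i = \tau$. By Assumption~\ref{assum:2}(a), $l(w,Z)$ is $\delta_\tau^2$-sub-Gaussian under $Z \sim P_{Z|\tau}$. Since $L_t(w|Z^M_i) = \frac{1}{M}\sum_{j=1}^M l(w,Z_{i,j})$ is the average of $M$ independent copies with mean $L_g(w|\tau)$, it is $\delta_\tau^2/M$-sub-Gaussian. This yields the elementary MGF bound
\[
\Ebb_{P^M_{Z|\tau}}\biggl[\exp\biggl(\lambda(L_t(w|Z^M_i)-L_g(w|\tau)) - \frac{\lambda^2 \delta_\tau^2}{2M}\biggr)\biggr] \leq 1, \qquad \forall\,\lambda \in \Real.
\]

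Next, I would integrate the model parameter $W$ under the hyper-prior-based reference measure. Averaging the previous inequality with respect to $W \sim Q_{W|U=u}$ (which, being data-independent, is independent of $Z^M_i$), and swapping the order of expectation by Fubini, gives
\[
\Ebb_{P^M_{Z|\tau}}\Ebb_{Q_{W|U=u}}\biggl[\exp\biggl(\lambda(L_t(W|Z^M_i)-L_g(W|\tau)) - \frac{\lambda^2 \delta_\tau^2}{2M}\biggr)\biggr] \leq 1.
\]
I would then perform a change of measure in $W$ from $Q_{W|U=u}$ to $P_{W|Z^M_i,U=u}$, justified by the absolute continuity condition in Assumption~\ref{assum:2a} (following the standard device used in Appendix~\ref{app:expinequality_avgtfrgap_ITMI} via \cite[Prop.~17.1]{polyanskiy2014lecture}). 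The Radon--Nikodym derivative $dQ_{W|U=u}/dP_{W|Z^M_i,U=u} = \exp(-\jmath(W,Z^M_i|U=u))$ enters the exponential as the additional term $-\jmath(W,Z^M_i|U)$.

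For the final layer, I would take expectation with respect to the hyper-prior $U \sim Q_U$, specialize $\tau$ to the random task $T_i$, and introduce the remaining meta-training subsets $Z^M_{1:N/i}$ by averaging under $P_{Z^M_{1:N/i}}$ (this averaging is harmless since the integrand does not depend on these variables at this stage). A second change of measure, this time on $U$ from $Q_U$ to $P_{U|\mset}$ with $\mset=(Z^M_i,Z^M_{1:N/i})$, produces the factor $\exp(-\jmath(U,\mset))$ inside the expectation; the absolute continuity needed here is again provided by Assumption~\ref{assum:2a}. Combining the two changes of measure with the initial MGF bound yields exactly \eqref{eq:expinequality_PB_task2}.

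The routine piece is the sub-Gaussian MGF bound at the per-task level. The main obstacle is bookkeeping: one must order the nested expectations carefully (conditioning on $T_i$ throughout so that $\delta_{T_i}^2$ is well-defined), verify that the independence structure allows Fubini at each swap, and ensure that both changes of measure are applied to marginals that are genuinely reference (data-independent) distributions so that the corresponding information densities match the definitions of $\jmath(W,Z^M_i|U)$ and $\jmath(U,\mset)$ used in the paper.
\end{proofarg}
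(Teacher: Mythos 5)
Your proposal is correct and follows essentially the same route as the paper: the per-task sub-Gaussian MGF bound $\Ebb_{P^M_{Z|T_i}}[\exp(\lambda(L_t(w|Z^M_i)-L_g(w|T_i))-\lambda^2\delta_{T_i}^2/(2M))]\leq 1$ for fixed $w$, followed by change of measure justified through Assumption~\ref{assum:2a}. The only cosmetic difference is that you perform two sequential changes of measure (first $Q_{W|U=u}\to P_{W|Z^M_i,U=u}$, then $Q_U\to P_{U|\mset}$ after averaging in $Z^M_{1:N/i}$), whereas the paper does a single joint change from $Q_{W,U}=Q_UQ_{W|U}$ to $P_{U|\mset}P_{W|Z^M_i,U}$ whose log Radon--Nikodym derivative decomposes as $\jmath(W,Z^M_i|U)+\jmath(U,\mset)$; the two are equivalent given this factorization, and your ordering (averaging over $Z^M_{1:N/i}$ before the change of measure on $U$) is the correct one.
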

\begin{proof}
From Assumption~\ref{assum:2}$(a)$, we have that for task $T=T_i$, 
 $L_t(w|Z^M_i)$ is the average of $M$ independent $\delta_{T_i}^2$-sub-Gaussian random variables $l(w,Z_i)$. It is then easy to see that $L_t(w|Z^M_i)$ is $\delta_{T_i}^2/M$-sub-Gaussian under $Z^M_i \sim P_{Z|T_i}^M$ for all $w \in \Wscr$. This can be equivalently expressed as
\begin{align}
\Ebb_{P^M_{Z|T=T_i}}\biggl[\exp \biggl(\lambda (L_t(w|Z^M_i)-L_g(w|T_i))-\frac{\lambda^2 \delta_{T_i}^2}{2M} \biggr) \biggr] \leq 1
\end{align}
which holds for all $w \in \Wscr$ and $\lambda \in \Real$. Averaging both sides with respect to $Z^M_{1:N /i}$ gives that
\begin{align}
\Ebb_{P_{Z^M_{1:N/i}}}\Ebb_{P^M_{Z|T=T_i}}\biggl[\exp \biggl(\lambda (L_t(w|Z^M_i)-L_g(w|T_i))-\frac{\lambda^2 \delta_{T_i}^2}{2M} \biggr) \biggr] \leq 1 \label{eq:ineq11}
\end{align} for all $w \in \Wscr$.
To get to the inequality \eqref{eq:expinequality_PB_task2}, we consider \eqref{eq:ineq11} as a function of both model parameter $w$ and hyperparameter $u$. Subsequently, average both sides of inequality \eqref{eq:ineq11} with respect to $Q_{W,U}=Q_UQ_{W|U} \in \Pscr(\Wscr \times \Uscr)$. We now follow the approach of \cite[Prop. 17.1]{polyanskiy2014lecture} and apply a change of measure as detailed below. Towards this, we first note that average over $Q_{W,U}$ on \eqref{eq:ineq11} implies the following inequality
\begin{align}
\Ebb_{P_{Z^M_{1:N/i}}}\Ebb_{P^M_{Z|T=T_i}}\Ebb_{Q_{W,U}}\biggl[\Ibb_{\Escr(Z^M_i,Z^M_{1:N/i})}\exp \biggl(\lambda (L_t(W|Z^M_i)-L_g(W|T_i))-\frac{\lambda^2 \delta_{T_i}^2}{2M} \biggr) \biggr] \leq 1, \label{ineq:12}
\end{align}where $\Escr(z^M_i,z^M_{1:N/i})={\rm supp}(P_{W,U|z^M_i,z^M_{1:N/i}})$ and $P_{W,U|Z^M_i,Z^M_{1:N/i}}=P_{U|\mset}P_{W|U,Z^M_i}$.
It is then easy to see that for $Z^M_i=z^M_i$, $Z^M_{1:N/i}=z^M_{1:N/i}$, the following relation holds
\begin{align}
&\Ebb_{Q_{W,U}}\biggl[\Ibb_{\Escr(z^M_i,z^M_{1:N/i})}\exp \biggl(\lambda (L_t(W|z^M_i)-L_g(W|T_i))-\frac{\lambda^2 \delta_{T_i}^2}{2M} \biggr) \biggr]\\
&=\Ebb_{P_{W,U|z^M_i,z^M_{1:N/i}}}\biggl[\exp \biggl(\lambda (L_t(W|z^M_i)-L_g(W|T_i))-\frac{\lambda^2 \delta_{T_i}^2}{2M} -\log \frac{\dP_{W,U|z^M_i,z^M_{1:N/i}}(W,U)}{\dQ_{W,U}(W,U)} \biggr) \biggr].
\end{align}
Using this in \eqref{ineq:12} and averaging over $Z^M_i,Z^M_{1:N/i}$ then yields inequality \eqref{eq:expinequality_PB_task2} with $$\log \frac{\dP_{W,U|Z^M_i,Z^M_{1:N/i}}(W,U|Z^M_i,Z^M_{1:N/i})}{\dQ_{W,U}(W,U)}=\jmath({W},Z^M_i|U)+\jmath({U},\mset).$$ 
\end{proof}
Inequality \eqref{eq:expinequality_PB_task2} relates the per-task training loss to the per-task generalization loss and the information densities $\jmath({W},Z^M_i|U)$, $\jmath({U},\mset)$. We will use this to bound the contribution of within-task generalization gap to transfer meta-generalization gap. 
Assumption~\ref{assum:2}$(b)$ then provides the following exponential inequality on the difference $\Lscr_{t,g}(u|\mset,T_{1:N})-\Lscr'_g(u)$.
%
\begin{lemma}\label{lem:expinequality_PB_env}
Under Assumption~\ref{assum:2}$(b)$ and Assumption~\ref{assum:2a}, the following exponential inequality  holds
\begin{align}
\Ebb_{P_{T_{1: N}}P_{\mset|T_{1: N}}}\Ebb_{P_{U|\mset}}&\biggl[\exp \biggl(\lambda  \biggl(\Lscr_{t,g}(U|T_{1: N},Z^M_{1: N})-\Lscr_g'(U)\biggr)- \frac{\lambda^2 \alpha^2 \sigma^2}{2\beta N} \non \\&- \sum_{i=1}^{\beta N}\log \frac{P_{T}(T_i)}{P'_{T}(T_i)} -\frac{\lambda^2 (1-\alpha)^2 \sigma^2}{2(1-\beta) N}-\jmath({U},\mset)\biggr) \biggr] \leq 1 \label{eq:expinequality_PB_env1}.
\end{align}
\end{lemma}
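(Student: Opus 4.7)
The plan is to mimic the structure of Lemma~\ref{lem:expinequality_PB_task}, but now accumulate sub-Gaussian exponential bounds across the $N$ independent per-task pairs $(T_i,Z^M_i)$ for a fixed hyperparameter $u$, and then lift the inequality to an arbitrary data-dependent meta-learner $P_{U|\mset}$ by a single change of measure from the hyper-prior $Q_U$.

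First I would start from Assumption~\ref{assum:2}$(b)$: for every $u\in\Uscr$, the random variable $L_g(u|T,Z^M)$ is $\sigma^2$-sub-Gaussian when $(T,Z^M)\sim P'_TP^M_{Z|T}$. This yields the pointwise bound
\begin{align*}
\Ebb_{P'_TP^M_{Z|T}}\!\left[\exp\!\left(\lambda_i\bigl(L_g(u|T_i,Z^M_i)-\Lscr'_g(u)\bigr)-\frac{\lambda_i^2\sigma^2}{2}\right)\right]\le 1.
\end{align*}
For the target indices $i=\beta N+1,\dots,N$, where $(T_i,Z^M_i)\sim P'_TP^M_{Z|T}$, I set $\lambda_i=\lambda(1-\alpha)/((1-\beta)N)$. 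For the source indices $i=1,\dots,\beta N$, where $(T_i,Z^M_i)\sim P_TP^M_{Z|T}$, I first set $\lambda_i=\lambda\alpha/(\beta N)$ and then apply a change of measure from $P'_TP^M_{Z|T}$ to $P_TP^M_{Z|T}$; this is justified by Assumption~\ref{assum:2a} ($P_T\ll P'_T$) together with the fact that the per-task data distribution is common, and it introduces the log-likelihood ratio $\log(P_T(T_i)/P'_T(T_i))$ inside the exponent.

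Next I would multiply the $N$ resulting exponential inequalities together. Because $(T_i,Z^M_i)_{i=1}^N$ are mutually independent under the joint $P_{T_{1:N}}P_{\mset|T_{1:N}}$, the expectation of the product factors, so the product is bounded by $1$. Summing the exponents yields $\lambda(\Lscr_{t,g}(u|\mset,T_{1:N})-\Lscr'_g(u))$ on the linear side, the aggregated variance proxy
\begin{align*}
\sum_{i=1}^{\beta N}\frac{\lambda^2\alpha^2\sigma^2}{2\beta^2N^2}+\sum_{i=\beta N+1}^{N}\frac{\lambda^2(1-\alpha)^2\sigma^2}{2(1-\beta)^2N^2}=\frac{\lambda^2\alpha^2\sigma^2}{2\beta N}+\frac{\lambda^2(1-\alpha)^2\sigma^2}{2(1-\beta)N},
\end{align*}
and the cumulated log-likelihood ratio $\sum_{i=1}^{\beta N}\log(P_T(T_i)/P'_T(T_i))$. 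The resulting inequality holds for every fixed $u\in\Uscr$.

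Finally, to convert the pointwise-in-$u$ statement into one averaged under $P_{U|\mset}$, I would apply the change-of-measure trick already used in the proof of Lemma~\ref{lem:expinequality_PB_task}: average both sides over $U\sim Q_U$, restrict to the support of $P_{U|\mset=\mset}$ (a restriction that preserves the bound and is non-trivial by virtue of Assumption~\ref{assum:2a}), and rewrite the $Q_U$-expectation as an expectation under $P_{U|\mset}$ with the Radon--Nikodym factor $Q_U/P_{U|\mset}=\exp(-\jmath(U,\mset))$. This brings the mismatched information density $\jmath(U,\mset)$ into the exponent with the correct sign, producing exactly \eqref{eq:expinequality_PB_env1}. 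The main subtlety I anticipate is the bookkeeping of probability measures in the change-of-measure steps: one must carefully distinguish source versus target expectations over $(T_i,Z^M_i)$, and ensure the final $Q_U\to P_{U|\mset}$ step is performed \emph{after} the per-task bounds have already been combined, so that $\jmath(U,\mset)$ appears only once rather than once per task.
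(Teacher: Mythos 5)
Your proposal is correct and follows essentially the same route as the paper's proof: establish the sub-Gaussian exponential moment bound under the target task distribution, change measure from $P'_T$ to $P_T$ on the source indices to pick up the log-likelihood ratios, exploit independence across tasks to combine the bounds, and finally lift from the hyper-prior $Q_U$ to $P_{U|\mset}$ via a single change of measure that introduces $\jmath(U,\mset)$ exactly once. The only difference is organizational: you multiply $N$ per-task exponential inequalities, whereas the paper treats the source and target blocks as two sub-Gaussian averages with variance proxies $\sigma^2/(\beta N)$ and $\sigma^2/((1-\beta)N)$ — these are the same computation.
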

\begin{proof}
In the following, we denote $T_{1:\beta N}:=(T_1,\hdots, T_{\beta N})$, $T_{\beta N+1:N}:=(T_{\beta N+1}, \hdots,T_N)$, the empirical average per-task test loss of the source environment data set as \begin{align}
\Lscr_{t,g}(u|T_{1:\beta N},Z^M_{1:\beta N})&=\frac{1}{\beta N} \sum_{i=1}^{\beta N}L_g(u|Z^M_i,T_i),\non
\end{align} and the empirical average per-task test loss of the target environment data set as
\begin{align}
\Lscr_{t,g}(u|T_{\beta N+1:N},Z^M_{\beta N+1:N})&=\frac{1}{(1-\beta) N} \sum_{i=\beta N+1}^{N}L_g(u|Z^M_i,T_i). \non
\end{align}

From Assumption~\ref{assum:2}$(b)$, we get that
$\Lscr_{t,g}(u|T_{1:\beta N},Z^M_{1:\beta N})$ is the average of i.i.d. $\sigma^2$-sub-Gaussian random variables under $(T_i,Z^M_i) \sim P'_{T_i}P^M_{Z|T_i}$. Consequently, it is $\sigma^2/\beta N$-sub-Gaussian when $(T_{1:\beta N},Z^M_{1:\beta N})\sim P'_{T_{1:\beta N}}P_{Z^M_{1:\beta N}|T_{1:\beta N}}$ for all $u \in \Uscr$. Note here that we use $P'_{T_{1:\beta N}}P_{Z^M_{1:\beta N}|T_{1:\beta N}}$ to denote the product distribution $\prod_{i=1}^{\beta N} P'_{T_i}P^M_{Z|T_i}$.
Similarly,  $\Lscr_{t,g}(u|T_{\beta N+1:N},Z^M_{\beta N+1:N})$ is $\sigma^2/(1-\beta)N$-sub-Gaussian under $(T_{\beta N+1:N},Z^M_{\beta N+1:N})\sim P'_{T_{\beta N+1:N}}P_{Z^M_{\beta N+1:N}|T_{\beta N+1:N}}$ for all $u \in \Uscr$. Here, $P'_{T_{\beta N+1:N}}P_{Z^M_{\beta N+1:N}|T_{\beta N+1:N}}$ denotes the product distribution $\prod_{i=\beta N+1}^{N} P'_{T_i}P^M_{Z|T_i}$.
Denoting $P'_{T_{1:N}}=\prod_{i=1}^N P'_{T_i}$,
 the following set of relations then follow from the sub-Gaussianity assumptions discussed above, and holds for all $u \in \Uscr$ and $\lambda \in \Real$:
\begin{align}
&\Ebb_{P'_{T_{1:N}}P_{\mset|T_{1:N}}}\biggl[\exp \biggl(\lambda (\Lscr_{t,g}(u|T_{1:N},\mset)-\Lscr_g'(u)) \biggr) \biggr]\\
&=\Ebb_{P'_{T_{1:\beta N}}P_{Z^M_{1:\beta N}|T_{1:\beta N}}}\biggl[\exp \biggl(\lambda \alpha \biggl(\Lscr_{t,g}(u|T_{1:\beta N},Z^M_{1:\beta N})-\Lscr_g'(u)\biggr) \biggr) \biggr] \times \non \\& \qquad \qquad  \Ebb_{P'_{T_{\beta N+1:N}}P_{Z^M_{\beta N+1:N}|T_{\beta N+1:N}}}\biggl[\exp \biggl(\lambda (1-\alpha) \biggl(\Lscr_{t,g}(u|T_{\beta N+1:N},Z^M_{\beta N+1:N})-\Lscr_g'(u)\biggr) \biggr) \biggr] \non \\
& \leq \exp \biggl(\frac{\lambda^2 \alpha^2 \sigma^2}{2\beta N} \biggr) \exp \biggl(\frac{\lambda^2 (1-\alpha)^2 \sigma^2}{2(1-\beta) N} \biggr). \label{eq:separateterms}
\end{align}
This in turn implies that
\begin{align}
\Ebb_{P'_{T_{\beta N+1:N}}P_{Z^M_{\beta N+1:N}|T_{\beta N+1:N}}}\Ebb_{P'_{T_{1:\beta N}}P_{Z^M_{1:\beta N}|T_{1:\beta N}}}&\biggl[\Ibb_{\Escr}\exp \biggl(\lambda (\Lscr_{t,g}(u|T_{1:N},\mset)-\Lscr_g'(u))-\frac{\lambda^2 \alpha^2 \sigma^2}{2\beta N} \non \\&-\frac{\lambda^2 (1-\alpha)^2 \sigma^2}{2(1-\beta) N} \biggr) \biggr] \leq 1
\end{align}
where $\Escr={\rm \supp}(P_{T_{1:\beta N}}P_{Z^M_{1:\beta N}|T_{1:\beta N}})$. Applying change of measure from $P'_{T_{1:\beta N}}P_{Z^M_{1:\beta N}|T_{1:\beta N}}$ to $P_{T_{1:\beta N}}P_{Z^M_{1:\beta N}|T_{1:\beta N}}$, then yields 
%
\begin{align}
\Ebb_{P_{T_{1:\beta N}}P'_{T_{\beta N+1:N}}P_{\mset|T_{1: N}}}&\biggl[\exp \biggl(\lambda  \biggl(\Lscr_{t,g}(u|T_{1: N},Z^M_{1: N})-\Lscr_g'(u)\biggr)- \frac{\lambda^2 \alpha^2 \sigma^2}{2\beta N} \non \\&-\log \frac{\dP_{T_{1:\beta N}}(T_{1:\beta N})}{\dP'_{T_{1:\beta N}}(T_{1:\beta N})} -\frac{\lambda^2 (1-\alpha)^2 \sigma^2}{2(1-\beta) N}\biggr) \biggr] \leq 1,
\end{align} which holds for all $u \in \Uscr$. 
Average both sides of the inequality with respect to $Q_U \in \Pscr(\Uscr)$. The resultant inequality implies the following
\begin{align}
\Ebb_{P_{T_{1:\beta N}}P'_{T_{\beta N+1:N}}P_{\mset|T_{1: N}}}\Ebb_{Q_U}&\biggl[\Ibb_{\Escr(\mset)}\exp \biggl(\lambda  \biggl(\Lscr_{t,g}(U|T_{1: N},Z^M_{1: N})-\Lscr_g'(u)\biggr)- \frac{\lambda^2 \alpha^2 \sigma^2}{2\beta N} \non \\&-\log \frac{P_{T_{1:\beta N}}(T_{1:\beta N})}{P'_{T_{1:\beta N}}(T_{1:\beta N})} -\frac{\lambda^2 (1-\alpha)^2 \sigma^2}{2(1-\beta) N}\biggr) \biggr] \leq 1,
\end{align} where $\Escr(z^M_{1:N})={\rm supp}(P_{U|\mset=z^M_{1:N}})$.
 Applying change of measure from $Q_U $ to $P_{U|\mset}$ together with $\log \Bigl(\dP_{T_{1:\beta N}}(T_{1:\beta N})/\dP'_{T_{1:\beta N}}(T_{1:\beta N}) \Bigr) =\sum_{i=1}^{\beta N} \log (\dP_T(T_i)/\dP'_T(T_i))$ then gives the required inequality \eqref{eq:expinequality_PB_env1}.
\end{proof}
The inequality \eqref{eq:expinequality_PB_env1} relates the difference between weighted average per-task test loss and transfer meta-generalization loss, $\Lscr_{t,g}(U|T_{1: N},Z^M_{1: N})-\Lscr'_g(U)$, to the mismatched information density $\jmath({U},\mset)$ and the log-likelihood ratio $\log (\dP_T(T_i)/\dP'_T(T_i))$, that captures the meta-environment shift in task distributions.
\section{Proof of Theorem~\ref{thm:PAC-Bayesianbound}}\label{app:PAC-Bayesianbound}
To obtain the required PAC-Bayesian bound, we use the decomposition \eqref{eq:decomposition-2}. The idea is to separately bound the two differences in \eqref{eq:decomposition-2} in high probability over $(T_{1:N},\mset)$, and subsequently combine the bounds via union bound.

To start, we bound the first difference in \eqref{eq:decomposition-2}. Towards this, we resort to the exponential inequality \eqref{eq:expinequality_PB_env1}. Applying Jensen's inequality with respect to just $P_{U|\mset}$ on \eqref{eq:expinequality_PB_env1} results in
\begin{align}
\Ebb_{P_{T_{1: N}}P_{\mset|T_{1: N}}}&\biggl[\exp \biggl(\lambda  \Ebb_{P_{U|\mset}}\biggl[\Lscr_{t,g}(U|T_{1: N},Z^M_{1: N})-\Lscr_g'(U)\biggr]- \frac{\lambda^2 \alpha^2 \sigma^2}{2\beta N} \non \\&-\sum_{i=1}^{\beta N}\log \frac{\dP_{T}(T_i)}{\dP'_{T}(T_i)} -\frac{\lambda^2 (1-\alpha)^2 \sigma^2}{2(1-\beta) N}-D(P_{U|\mset}||Q_U)\biggr) \biggr] \leq 1.
\end{align}
Take $V=\exp \bigl(\lambda  \Ebb_{P_{U|\mset}}\bigl[\Lscr_{t,g}(U|T_{1: N},Z^M_{1: N})-\Lscr_g'(U)\bigr]- \lambda^2 \alpha^2 \sigma^2/(2\beta N)-\sum_{i=1}^{\beta N}\log (\dP_{T}(T_i)/\dP'_{T}(T_i)) -\lambda^2 (1-\alpha)^2 \sigma^2/(2(1-\beta) N)-D(P_{U|\mset}||Q_U)\bigr)$. Applying Markov's inequality of the form $\Pbb[V \geq \frac{1}{\delta_0}]\leq \delta_0 \Ebb[V]\leq \delta_0$ then gives that with probability at least $1-\delta_0$ over $(\mset,T_{1:N})\sim P_{T_{1:N}}P_{\mset|T_{1:N}}$ we have $V \leq \frac{1}{\delta_0}$. Taking logarithm on both sides of the inequality then results in
\begin{align}
 \lambda \biggl( \Ebb_{P_{U|\mset}}\biggl[\Lscr_{t,g}(U|T_{1: N},Z^M_{1: N})&-\Lscr_g'(U)\biggr]\biggr)  \leq \frac{\lambda^2 \alpha^2 \sigma^2}{2\beta N}+\frac{\lambda^2 (1-\alpha)^2 \sigma^2}{2(1-\beta) N} \non \\&+ \sum_{i=1}^{\beta N}\log \frac{\dP_{T}(T_i)}{\dP'_{T}(T_i)}+D(P_{U|\mset}||Q_U)+ \log \frac{1}{\delta_0} \label{eq:envlevl}
\end{align}
which when optimized over $\lambda$ with the choice $$\lambda=\sqrt{\frac{\sum_{i=1}^{\beta N}\log \frac{\dP_{T}(T_i)}{\dP'_{T}(T_i)}+D(P_{U|\mset}||Q_U)+ \log \frac{1}{\delta_0}}{\frac{ \alpha^2 \sigma^2}{2\beta N}+\frac{ (1-\alpha)^2 \sigma^2}{2(1-\beta) N}} } $$ yields
\begin{align}
&\biggl|\Ebb_{P_{U|\mset}}\biggl[\Lscr_{t,g}(u|T_{1: N},Z^M_{1: N})-\Lscr_g'(u)\biggr]\biggr| \non \\& \leq \sqrt{ 2 \sigma^2 \biggl( \frac{ \alpha^2 }{\beta N}+\frac{(1-\alpha)^2 }{(1-\beta) N}\biggr) \biggl( \sum_{i=1}^{\beta N}\log \frac{\dP_{T}(T_i)}{\dP'_{T}(T_i)}+D(P_{U|\mset}||Q_U)+ \log \frac{1}{\delta_0}\biggr)}. \label{eq:envlevl_opt}
\end{align}

We now bound the second difference in \eqref{eq:decomposition-2}, which can be equivalently written as
\begin{align}
&\Ebb_{P_{U|\mset}}\biggl[ \Lscr_{t,g}(U|\mset,T_{1:N}) -\Lscr_t(U|\mset)\biggr] \non\\
&=\Ebb_{P_{U|\mset}}\biggl[ \frac{\alpha}{\beta N}\sum_{i=1}^{\beta N} (L_g(U|Z^M_i,T_i)-L_t(U|Z^M_i)) + \frac{1-\alpha}{(1-\beta)N}\sum_{i=\beta N+1}^{N} (L_g(U|Z^M_i,T_i)-L_t(U|Z^M_i))\biggr]. \label{eq:1111}
\end{align}
The idea then is to bound each of the terms
$
\Ebb_{P_{U|\mset}}[L_g(U|Z^M_i,T_i)-L_t(U|Z^M_i)]
$ separately  with probability at least $(1-\delta_i)$ over $(Z^M_{1:N/i},T_i,Z^M_i)\sim P_{Z^M_{1:N/i}}P_{T_i}P^M_{Z|T_i}$ for $i=1,\hdots, \beta N$ and over  $(Z^M_{1:N/i},T_i,Z^M_i)\sim P_{Z^M_{1:N/i}}P'_{T_i}P^M_{Z|T_i}$ for $i=\beta N+1,\hdots, N$. Towards this, we resort to the exponential inequality \eqref{eq:expinequality_PB_task2} and apply Jensen's inequality with respect to $P_{U|\mset}P_{W|Z^M_i,U}$. This results in
\begin{align}
\Ebb_{P_{Z^M_{1:N/i}}}\Ebb_{P^M_{Z|T_i}}&\biggl[\underbrace{\exp  \biggl(\lambda \Ebb_{P_{U|\mset}}[L_t(U|Z^M_i)-L_g(U|Z^M_i,T_i)]-\frac{\lambda^2 \delta_{T_i}^2}{2M} -D(P_{W,U|\mset,Z^M_i}||Q_{W,U})  \biggr)}_{V}\biggr]\non \\& \leq 1,
\end{align} which holds for $i=1,\hdots,N$. Note that the above inequality holds even after averaging both sides of the inequality with respect to $P_{T_i}$ (or $P'_{T_i}$).
Applying Markov's inequality of the form $\Pbb[V \geq \frac{1}{\delta_i}]\leq \beta_0 \Ebb[V]\leq \delta_i$ then gives that with probability at least $1-\delta_i$ over $(Z^M_{1:N/i},T_i,Z^M_i)\sim P_{Z^M_{1:N/i}}P_{T_i}P^M_{Z|T_i}$ for $i=1,\hdots, \beta N$ and over  $(Z^M_{1:N/i},T_i,Z^M_i)\sim P_{Z^M_{1:N/i}}P'_{T_i}P^M_{Z|T_i}$ for $i=\beta N+1,\hdots, N$
\begin{align}
 &\lambda \biggl(\Ebb_{P_{U|\mset}}[L_t(U|Z^M_i)-L_g(U|Z^M_i,T_i)]\biggr)\non \\&\leq \frac{\lambda^2 \delta_{T_i}^2}{2M} +D(P_{U|\mset}||Q_U)+ \Ebb_{P_{U|\mset}}[ D(P_{W|U,Z^M_i}||Q_{W|U})]+\log \frac{1}{\delta_i}. \label{eq:tasklevelineq}
\end{align}
Now, choosing $$\lambda=\sqrt{\frac{D(P_{U|\mset}||Q_U)+ \Ebb_{P_{U|\mset}}[ D(P_{W|U,Z^M_i}||Q_{W|U})]+\log \frac{1}{\delta_i}}{\frac{ \delta_{T_i}^2}{2M}}}$$ then results in 
\begin{align}
&\biggl| \Ebb_{P_{U|\mset}}[L_t(U|Z^M_i)-L_g(U|Z^M_i,T_i)]\biggr|\non 
\\
&= \sqrt{\frac{2 \delta_{T_i}^2}{M}\biggl(D(P_{U|\mset}||Q_U)+ \Ebb_{P_{U|\mset}}[ D(P_{W|U,Z^M_i}||Q_{W|U})]+\log \frac{1}{\delta_i}\biggr) }
\label{eq:tasklevelineq_opt}.
\end{align}
Choosing $\delta_0=\frac{\delta}{2}$ and $\delta_i=\frac{\delta}{4\beta N}$ for $i=1,\hdots, \beta N$ and $\delta_i=\frac{\delta}{4(1-\beta) N}$ for $i=\beta N+1,\hdots,N$, and combining the bounds \eqref{eq:envlevl_opt} and \eqref{eq:tasklevelineq_opt} in \eqref{eq:1111} via union bound then yields the bound \eqref{eq:PACBayesian_bound1}.
\section{Proof of Theorem~\ref{thm:singledraw_bound}}\label{app:singledraw_bound}
To obtain the required single-draw bound, we use the decomposition \eqref{eq:decomposition-2}.
We start by bounding the first difference in \eqref{eq:decomposition-2} without the expectation over meta-training algorithm.  Towards this, we resort to the exponential inequality \eqref{eq:expinequality_PB_env1}. 
Take $$V=\exp \biggl(\lambda  \bigl(\Lscr_{t,g}(U|T_{1: N},Z^M_{1: N})-\Lscr_g'(U)\bigr)- \frac{\lambda^2 \alpha^2 \sigma^2}{2\beta N}- \sum_{i=1}^{\beta N}\log \frac{\dP_{T}(T_i)}{\dP'_{T}(T_i)} -\frac{\lambda^2 (1-\alpha)^2 \sigma^2}{2(1-\beta) N}-\jmath(U,\mset)\biggr).$$ Applying Markov's inequality of the form $\Pbb[V \geq \frac{1}{\delta_0}]\leq \delta_0 \Ebb[V]\leq \delta_0$ then gives that with probability at least $1-\delta_0$ over $(T_{1:N},\mset,U)\sim P_{T_{1:N}}P_{\mset|T_{1:N}}P_{U|\mset}$ we have $V \leq \frac{1}{\delta_0}$. Taking logarithm on both sides of the inequality then results in
\begin{align}
\lambda \biggl(\Lscr_{t,g}(U|T_{1: N},Z^M_{1: N})-\Lscr_g'(U)\biggr) & \leq \frac{\lambda^2 \alpha^2 \sigma^2}{2\beta N}+\frac{\lambda^2 (1-\alpha)^2 \sigma^2}{2(1-\beta) N} \non \\&+  \sum_{i=1}^{\beta N}\log \frac{\dP_{T}(T_i)}{\dP'_{T}(T_i)}+\jmath(U,\mset)+ \log \frac{1}{\delta_0} \label{eq:envlevl_sd}
\end{align}
which when optimized over $\lambda$ yields
\begin{align}
&\biggl|\Lscr_{t,g}(u|T_{1: N},Z^M_{1: N})-\Lscr_g'(u)\biggr| \non \\& \leq \sqrt{ 2 \sigma^2 \biggl( \frac{ \alpha^2 }{\beta N}+\frac{(1-\alpha)^2 }{(1-\beta) N}\biggr) \biggl(  \sum_{i=1}^{\beta N}\log \frac{\dP_{T}(T_i)}{\dP'_{T}(T_i)}+\jmath(U,\mset)+ \log \frac{1}{\delta_0}\biggr)}. \label{eq:envlevl_opt_sd}
\end{align}

We now bound the second difference in \eqref{eq:decomposition-2}, which can be equivalently written as
\begin{align}
&\Lscr_{t,g}(U|\mset,T_{1:N}) -\Lscr_t(U|\mset) \non\\
&= \frac{\alpha}{\beta N}\sum_{i=1}^{\beta N} (L_g(U|Z^M_i,T_i)-L_t(U|Z^M_i)) + \frac{1-\alpha}{(1-\beta)N}\sum_{i=\beta N+1}^{N} (L_g(U|Z^M_i,T_i)-L_t(U|Z^M_i)). \label{eq:1111_sd}
\end{align}
We now bound each of the terms
$
L_g(U|Z^M_i,T_i)-L_t(U|Z^M_i)
$ separately with probability at least $(1-\delta_i)$ over $(Z^M_{1:N/i},T_i,Z^M_i,U)\sim P_{Z^M_{1:N/i}}P_{T_i}P^M_{Z|T_i}P_{U|\mset}$ for $i=1,\hdots,\beta N$ and over $(Z^M_{1:N/i},T_i,Z^M_i,U)\sim P_{Z^M_{1:N/i}}P'_{T_i}P^M_{Z|T_i}P_{U|\mset}$ for $i=\beta N+1,\hdots,N$.  Towards this, we resort to the exponential inequality \eqref{eq:expinequality_PB_task2} and apply Jensen's inequality with respect to $P_{W|Z^M_i,U}$. This results in
\begin{align}
\Ebb_{P_{Z^M_{1:N/i}}}\Ebb_{P^M_{Z|T_i}P_{U|\mset}}&\biggl[\underbrace{\exp  \biggl(\lambda (L_t(U|Z^M_i)-L_g(U|Z^M_i,T_i)]-\frac{\lambda^2 \delta_{T_i}^2}{2M} -D(P_{W|U,Z^M_i}||Q_{W|U})-\jmath(U,\mset)}_{V}\biggr]\non \\& \leq 1,
\end{align} which holds for $i=1,\hdots,N$. Note that the above inequality holds even after averaging both sides of the inequality with respect to $P_{T_i}$ (or $P'_{T_i}$).
Applying Markov's inequality of the form $\Pbb[V \geq \frac{1}{\delta_i}]\leq \beta_0 \Ebb[V]\leq \delta_i$ then gives that with probability at least $(1-\delta_i)$ over $(Z^M_{1:N/i},T_i,Z^M_i,U)\sim P_{Z^M_{1:N/i}}P_{T_i}P^M_{Z|T_i}P_{U|\mset}$ for $i=1,\hdots,\beta N$ and over $(Z^M_{1:N/i},T_i,Z^M_i,U)\sim P_{Z^M_{1:N/i}}P'_{T_i}P^M_{Z|T_i}P_{U|\mset}$ for $i=\beta N+1,\hdots,N$, the following inequality holds,
\begin{align}
\lambda(L_t(U|Z^M_i)-L_g(U|Z^M_i,T_i))&\leq \frac{\lambda^2 \delta_{T_i}^2}{2M} +D(P_{W|U,Z^M_i}||Q_{W|U})+\jmath(U,\mset)+\log \frac{1}{\delta_i} \label{eq:tasklevelineq_sd}.
\end{align}
Optimizing over $\lambda$ then results in 
\begin{align}
&\biggl| L_t(U|Z^M_i)-L_g(U|Z^M_i,T_i)\biggr|\non \\&\leq \sqrt{\frac{2 \delta_{T_i}^2}{M}\biggl(D(P_{W|U,Z^M_i}||Q_{W|U})+\jmath(U,\mset)+\log \frac{1}{\delta_i}\biggr) } 
\label{eq:tasklevelineq_opt_sd}.
\end{align}
Choosing $\delta_0=\frac{\delta}{2}$ and $\delta_i=\frac{\delta}{4\beta N}$ for $i=1,\hdots, \beta N$ and $\delta_i=\frac{\delta}{4(1-\beta) N}$ for $i=\beta N+1,\hdots,N$, and combining the bounds \eqref{eq:envlevl_opt_sd} and \eqref{eq:tasklevelineq_opt_sd} via union bound then yields the bound \eqref{eq:singledraw_bound}.

\bibliographystyle{IEEEtran}
\bibliography{ref}

\begin{thebibliography}{10}
\providecommand{\url}[1]{#1}
\csname url@samestyle\endcsname
\providecommand{\newblock}{\relax}
\providecommand{\bibinfo}[2]{#2}
\providecommand{\BIBentrySTDinterwordspacing}{\spaceskip=0pt\relax}
\providecommand{\BIBentryALTinterwordstretchfactor}{4}
\providecommand{\BIBentryALTinterwordspacing}{\spaceskip=\fontdimen2\font plus
\BIBentryALTinterwordstretchfactor\fontdimen3\font minus
  \fontdimen4\font\relax}
\providecommand{\BIBforeignlanguage}[2]{{%
\expandafter\ifx\csname l@#1\endcsname\relax
\typeout{** WARNING: IEEEtran.bst: No hyphenation pattern has been}%
\typeout{** loaded for the language `#1'. Using the pattern for}%
\typeout{** the default language instead.}%
\else
\language=\csname l@#1\endcsname
\fi
#2}}
\providecommand{\BIBdecl}{\relax}
\BIBdecl

\bibitem{schmidhuber1987evolutionary}
J.~Schmidhuber, ``Evolutionary {P}rinciples in {S}elf-{R}eferential {L}earning,
  or {O}n {L}earning {H}ow to {L}earn: {T}he {M}eta-meta-... {H}ook,'' Ph.D.
  dissertation, Technische Universit{\"a}t M{\"u}nchen, 1987.

\bibitem{thrun1996learning}
S.~Thrun, ``Is {L}earning the {N}-th {T}hing {A}ny {E}asier than {L}earning the
  {F}irst?'' in \emph{Proc. of Adv. in Neural Inf. Processing Sys. (NIPS)},
  Dec. 1996, pp. 640--646.

\bibitem{thrun1998learning}
S.~Thrun and L.~Pratt, ``Learning to {L}earn: {I}ntroduction and {O}verview,''
  in \emph{Learning to Learn}.\hskip 1em plus 0.5em minus 0.4em\relax Springer,
  1998, pp. 3--17.

\bibitem{baxter2000model}
J.~Baxter, ``A {M}odel of {I}nductive {B}ias {L}earning,'' \emph{Journal of
  Artificial Intelligence Research}, vol.~12, pp. 149--198, March 2000.

\bibitem{collins2020taskrobust}
L.~Collins, A.~Mokhtari, and S.~Shakkottai, ``{T}ask-{R}obust
  {M}odel-{A}gnostic {M}eta-{L}earning,'' \emph{arXiv preprint 2002.04766},
  2020.

\bibitem{ben2007analysis}
S.~Ben-David, J.~Blitzer, K.~Crammer, and F.~Pereira, ``Analysis of
  {R}epresentations for {D}omain {A}daptation,'' in \emph{Advances in {N}eural
  {I}nformation {P}rocessing {S}ystems}, 2007, pp. 137--144.

\bibitem{bickel2007discriminative}
S.~Bickel, M.~Br{\"u}ckner, and T.~Scheffer, ``Discriminative {L}earning for
  {D}iffering {T}raining and {T}est {D}istributions,'' in \emph{Proceedings of
  the ICML}, 2007, pp. 81--88.

\bibitem{blitzer2006domain}
J.~Blitzer, R.~McDonald, and F.~Pereira, ``Domain {A}daptation with
  {S}tructural {C}orrespondence {L}earning,'' in \emph{Proceedings of the 2006
  conference on empirical methods in natural language processing}, 2006, pp.
  120--128.

\bibitem{hellstrom2020generalization}
F.~Hellstr{\"o}m and G.~Durisi, ``Generalization {B}ounds via {I}nformation
  {D}ensity and {C}onditional {I}nformation {D}ensity,'' \emph{arXiv preprint
  arXiv:2005.08044}, 2020.

\bibitem{russo2016controlling}
D.~Russo and J.~Zou, ``Controlling {B}ias in {A}daptive {D}ata {A}nalysis
  {U}sing {I}nformation {T}heory,'' in \emph{Proc. of Artificial Intelligence
  and Statistics (AISTATS)}, May 2016, pp. 1232--1240.

\bibitem{xu2017information}
A.~Xu and M.~Raginsky, ``Information-{T}heoretic {A}nalysis of {G}eneralization
  {C}apability of {L}earning {A}lgorithms,'' in \emph{Proc. of Adv. in Neural
  Inf. Processing Sys. (NIPS)}, Dec. 2017, pp. 2524--2533.

\bibitem{bu2019tightening}
Y.~Bu, S.~Zou, and V.~V. Veeravalli, ``Tightening {M}utual {I}nformation
  {B}ased {B}ounds on {G}eneralization {E}rror,'' in \emph{Proc. of IEEE Int.
  Symp. Inf. Theory (ISIT)}, July 2019, pp. 587--591.

\bibitem{negrea2019information}
J.~Negrea, M.~Haghifam, G.~K. Dziugaite, A.~Khisti, and D.~M. Roy,
  ``Information-{T}heoretic {G}eneralization {B}ounds for {SGLD} via
  {D}ata-{D}ependent {E}stimates,'' in \emph{Proc. of Adv. Neural Inf.
  Processing Sys. (NIPS)}, Dec 2019, pp. 11\,013--11\,023.

\bibitem{steinke2020reasoning}
T.~Steinke and L.~Zakynthinou, ``{R}easoning {A}bout {G}eneralization via
  {C}onditional {M}utual {I}nformation,'' vol. 125, pp. 3437--3452, 09--12 Jul
  2020.

\bibitem{wu2020information}
X.~Wu, J.~H. Manton, U.~Aickelin, and J.~Zhu, ``Information-{T}heoretic
  {A}nalysis for {T}ransfer {L}earning,'' \emph{arXiv preprint
  arXiv:2005.08697}, 2020.

\bibitem{jose2020information}
S.~T. Jose and O.~Simeone, ``Information-{T}heoretic {G}eneralization {B}ounds
  for {M}eta-{L}earning and {A}pplications,'' \emph{arXiv preprint
  arXiv:2005.04372}, 2020.

\bibitem{vapnik2015uniform}
V.~N. Vapnik and A.~Y. Chervonenkis, ``On the {U}niform {C}onvergence of
  {R}elative {F}requencies of {E}vents to {T}heir {P}robabilities,'' in
  \emph{Theory of Probability and its Applications}.\hskip 1em plus 0.5em minus
  0.4em\relax SIAM, May 1971, vol.~16, no.~2, pp. 264--280.

\bibitem{koltchinskii2000rademacher}
V.~Koltchinskii and D.~Panchenko, ``Rademacher {P}rocesses and {B}ounding the
  {R}isk of {F}unction {L}earning,'' in \emph{High {D}imensional {P}robability
  II}.\hskip 1em plus 0.5em minus 0.4em\relax Springer, 2000, vol.~47, pp.
  443--457.

\bibitem{mcallester1999pac}
D.~A. McAllester, ``{PAC}-{B}ayesian {M}odel {A}veraging,'' in \emph{Proc. of
  Annual Conf. Computational Learning Theory (COLT)}, July 1999, pp. 164--170.

\bibitem{dziugaite2020role}
G.~K. Dziugaite, K.~Hsu, W.~Gharbieh, and D.~M. Roy, ``On the {R}ole of {D}ata
  in {PAC-B}ayes {B}ounds,'' 2020.

\bibitem{seeger2002pac}
M.~Seeger, ``{PAC}-{B}ayesian {G}eneralization {E}rror {B}ounds for {G}aussian
  {P}rocess {C}lassification,'' \emph{Journal of Machine Learning Research},
  vol.~3, pp. 233--269, Oct 2002.

\bibitem{mcallester2003pac}
D.~A. McAllester, ``{PAC-B}ayesian {S}tochastic {M}odel {S}election,''
  \emph{Machine Learning}, vol.~51, no.~1, pp. 5--21, 2003.

\bibitem{maurer2004note}
A.~Maurer, ``A {N}ote on the {PAC}-{B}ayesian {T}heorem,'' \emph{arXiv preprint
  cs/0411099}, 2004.

\bibitem{alquier2016properties}
P.~Alquier, J.~Ridgway, and N.~Chopin, ``On the {P}roperties of {V}ariational
  {A}pproximations of {G}ibbs {P}osteriors,'' \emph{The Journal of Machine
  Learning Research}, vol.~17, no.~1, pp. 8374--8414, Dec 2016.

\bibitem{pentina2014pac}
A.~Pentina and C.~Lampert, ``A {PAC}-{B}ayesian {B}ound for {L}ifelong
  {L}earning,'' in \emph{Proc. of Int. Conf. on Machine Learning (ICML)}, June
  2014, pp. 991--999.

\bibitem{amit2018meta}
R.~Amit and R.~Meir, ``Meta-{L}earning by {A}djusting {P}riors {B}ased on
  {E}xtended {PAC}-{B}ayes {T}heory,'' in \emph{Proc. of Int. Conf. Machine
  Learning (ICML)}, Jul 2018, pp. 205--214.

\bibitem{rothfuss2020pacoh}
J.~Rothfuss, V.~Fortuin, and A.~Krause, ``{PACOH}: {B}ayes-{O}ptimal
  {M}eta-{L}earning with {PAC}-{G}uarantees,'' \emph{arXiv preprint
  arXiv:2002.05551}, 2020.

\bibitem{germain2017pac}
P.~Germain, A.~Habrard, F.~Laviolette, and E.~Morvant, ``{PAC-B}ayes and
  {D}omain {A}daptation,'' \emph{arXiv preprint arXiv:1707.05712}, 2017.

\bibitem{zhang2006information}
T.~Zhang, ``Information-{T}heoretic {U}pper and {L}ower {B}ounds for
  {S}tatistical {E}stimation,'' \emph{IEEE Transactions on Information Theory},
  vol.~52, no.~4, pp. 1307--1321, 2006.

\bibitem{raginsky2016information}
M.~Raginsky, A.~Rakhlin, M.~Tsao, Y.~Wu, and A.~Xu, ``Information-{T}heoretic
  {A}nalysis of {S}tability and {B}ias of {L}earning {A}lgorithms,'' in
  \emph{Proc. of IEEE Inf. Theory Workshop (ITW)}, Sep. 2016, pp. 26--30.

\bibitem{bassily2016algorithmic}
R.~Bassily, K.~Nissim, A.~Smith, T.~Steinke, U.~Stemmer, and J.~Ullman,
  ``Algorithmic {S}tability for {A}daptive {D}ata {A}nalysis,'' in \emph{Proc.
  of ACM Symp. Theory of Computing (STOC)}, June 2016, pp. 1046--1059.

\bibitem{esposito2019generalization}
A.~R. Esposito, M.~Gastpar, and I.~Issa, ``Generalization {E}rror {B}ounds via
  {R}{\'e}nyi-, $f$-{D}ivergences and {M}aximal leakage,'' \emph{arXiv preprint
  arXiv:1912.01439}, 2019.

\bibitem{blitzer2008learning}
J.~Blitzer, K.~Crammer, A.~Kulesza, F.~Pereira, and J.~Wortman, ``Learning
  {B}ounds for {D}omain {A}daptation,'' in \emph{Advances in {N}eural
  {I}nformation {P}rocessing {S}ystems}, 2008, pp. 129--136.

\bibitem{mansour2009domain}
Y.~Mansour, M.~Mohri, and A.~Rostamizadeh, ``{D}omain {A}daptation: {L}earning
  {B}ounds and {A}lgorithms,'' \emph{arXiv preprint arXiv:0902.3430}, 2009.

\bibitem{zhang2012generalization}
C.~Zhang, L.~Zhang, and J.~Ye, ``Generalization {B}ounds for {D}omain
  {A}daptation,'' in \emph{Advances in {N}eural {I}nformation {P}rocessing
  {S}ystems}, 2012, pp. 3320--3328.

\bibitem{ben2010theory}
S.~Ben-David, J.~Blitzer, K.~Crammer, A.~Kulesza, F.~Pereira, and J.~W.
  Vaughan, ``A {T}heory of {L}earning from {D}ifferent {D}omains,''
  \emph{Machine {L}earning}, vol.~79, no. 1-2, pp. 151--175, 2010.

\bibitem{mansour2012multiple}
Y.~Mansour, M.~Mohri, and A.~Rostamizadeh, ``Multiple {S}ource {A}daptation and
  the {R}{\'e}nyi {D}ivergence,'' \emph{arXiv preprint arXiv:1205.2628}, 2012.

\bibitem{germain2013pac}
P.~Germain, A.~Habrard, F.~Laviolette, and E.~Morvant, ``A {PAC-B}ayesian
  {A}pproach for {D}omain {A}daptation with {S}pecialization to {L}inear
  {C}lassifiers,'' in \emph{International {C}onference on {M}achine
  {L}earning}, 2013, pp. 738--746.

\bibitem{hoffman2018algorithms}
J.~Hoffman, M.~Mohri, and N.~Zhang, ``{A}lgorithms and {T}heory for
  {M}ultiple-{S}ource {A}daptation,'' in \emph{{A}dvances in {N}eural
  {I}nformation {P}rocessing {S}ystems}, 2018, pp. 8246--8256.

\bibitem{redko2017theoretical}
I.~Redko, A.~Habrard, and M.~Sebban, ``{T}heoretical {A}nalysis of {D}omain
  {A}daptation with {O}ptimal {T}ransport,'' in \emph{{J}oint {E}uropean
  {C}onference on {M}achine {L}earning and {K}nowledge {D}iscovery in
  {D}atabases}.\hskip 1em plus 0.5em minus 0.4em\relax Springer, 2017, pp.
  737--753.

\bibitem{finn2017model}
C.~Finn, P.~Abbeel, and S.~Levine, ``Model-{A}gnostic {M}eta-{L}earning for
  {F}ast {A}daptation of {D}eep {N}etworks,'' in \emph{Proc. of Int. Conf.
  Machine Learning-Volume 70}, Aug. 2017, pp. 1126--1135.

\bibitem{guedj2019primer}
B.~Guedj, ``A {P}rimer on {PAC-B}ayesian {L}earning,'' \emph{arXiv preprint
  arXiv:1901.05353}, 2019.

\bibitem{bishop2006pattern}
C.~M. Bishop, \emph{Pattern {R}ecognition and {M}achine {L}earning}.\hskip 1em
  plus 0.5em minus 0.4em\relax Springer, 2006.

\bibitem{denevi2020advantage}
G.~Denevi, M.~Pontil, and C.~Ciliberto, ``The {A}dvantage of {C}onditional
  {M}eta-{L}earning for {B}iased {R}egularization and {F}ine-{T}uning,''
  \emph{arXiv preprint arXiv:2008.10857}, 2020.

\bibitem{raginsky2017non}
M.~Raginsky, A.~Rakhlin, and M.~Telgarsky, ``Non-{C}onvex {L}earning via
  {S}tochastic {G}radient {L}angevin {D}ynamics: {A} {N}onasymptotic
  {A}nalysis,'' \emph{arXiv preprint arXiv:1702.03849}, 2017.

\bibitem{kuzborskij2019distribution}
I.~Kuzborskij, N.~Cesa-Bianchi, and C.~Szepesv{\'a}ri,
  ``Distribution-{D}ependent {A}nalysis of {Gibbs-ERM P}rinciple,'' \emph{arXiv
  preprint arXiv:1902.01846}, 2019.

\bibitem{sugiyama2012density}
M.~Sugiyama, T.~Suzuki, and T.~Kanamori, \emph{Density {R}atio {E}stimation in
  {M}achine {L}earning}.\hskip 1em plus 0.5em minus 0.4em\relax Cambridge
  {U}niversity {P}ress, 2012.

\bibitem{jose2020informationtheoretic}
S.~T. Jose and O.~Simeone, ``{I}nformation-{T}heoretic {B}ounds on {T}ransfer
  {G}eneralization {G}ap {B}ased on {J}ensen-{S}hannon {D}ivergence,''
  \emph{arXiv preprint 2010.09484}, 2020.

\bibitem{polyanskiy2014lecture}
Y.~Polyanskiy and Y.~Wu, ``Lecture {N}otes on {I}nformation {T}heory,''
  \emph{Lecture Notes for ECE563 (UIUC) and}, vol.~6, no. 2012-2016, p.~7,
  2014.

\end{thebibliography}
\end{document}